%% file: main.tex
\documentclass{article}

\usepackage{microtype}
\usepackage{graphicx}
\usepackage{subcaption}
\usepackage{booktabs} %

\usepackage{hyperref}

\usepackage[preprint]{icml2026}

\usepackage{amsmath}
\usepackage{amssymb}
\usepackage{mathtools}
\usepackage{amsthm}
\usepackage{color}

\usepackage[capitalize,noabbrev]{cleveref}

\theoremstyle{plain}
\newtheorem{theorem}{Theorem}[section]
\newtheorem{proposition}[theorem]{Proposition}
\newtheorem{lemma}[theorem]{Lemma}
\newtheorem{corollary}[theorem]{Corollary}
\theoremstyle{definition}
\newtheorem{definition}[theorem]{Definition}

\theoremstyle{remark}

\usepackage[textsize=tiny]{todonotes}

\icmltitlerunning{Magnitude Distance: A Geometric Measure of Dataset Similarity}

\usepackage{mathtools, amssymb, amsthm}
\usepackage{bbm}
\usepackage{subcaption}
\usepackage{pgf}
\usepackage{wrapfig}
\usepackage{enumitem}
\usepackage{thm-restate}

\newtheorem*{prop}{Proposition}

\renewcommand{\eqref}[1]{equation~(\ref{#1})}

\usepackage{array}
\usepackage{siunitx}
\newcolumntype{L}[1]{>{\raggedright\arraybackslash}p{#1}}

\newcommand{\z}{\zeta}
\newcommand{\one}{\mathbbm{1}}

\newcommand{\abs}[1]{\lvert #1 \rvert}

\newcommand{\Mag}[2]{\ensuremath{\operatorname{Mag}_{#1}\left(#2\right)}}
\newcommand\restr[2]{{%
  \left.\kern-\nulldelimiterspace %
  #1 %
  \vphantom{\big|} %
  \right|_{#2} %
  }}

\begin{document}

\twocolumn[
  \icmltitle{Magnitude Distance: A Geometric Measure
of Dataset Similarity}

  \icmlsetsymbol{equal}{*}

  \begin{icmlauthorlist}
    \icmlauthor{Sahel Torkamani}{yyy}
    \icmlauthor{Henry Gouk}{yyy}
    \icmlauthor{Rik Sarkar}{yyy}
  \end{icmlauthorlist}
  \icmlaffiliation{yyy}{School of Informatics, University of Edinburgh, Location, Country}

  \icmlcorrespondingauthor{Sahel Torkamani}{sahel.torkamani@ed.ac.uk}
  \icmlcorrespondingauthor{Henry Gouk}{henry.gouk@ed.ac.uk}
  \icmlcorrespondingauthor{Rik Sarkar}{rsarkar@inf.ed.ac.uk}

  \icmlkeywords{Machine Learning, ICML}

  \vskip 0.3in
]

\printAffiliationsAndNotice{}  %

\begin{abstract}
  Quantifying the distance between datasets is a fundamental question in mathematics and machine learning. We propose \textit{magnitude distance}, a novel distance metric defined on finite datasets using the notion of the \emph{magnitude} of a metric space.
  The proposed distance incorporates a tunable scaling parameter, $t$, that controls the sensitivity to global structure (small $t$) and finer details (large $t$). We prove several theoretical properties of magnitude distance, including its limiting behavior across scales and conditions under which it satisfies key metric properties.
  In contrast to classical distances, we show that magnitude distance remains discriminative in high-dimensional settings when the scale is appropriately tuned. 
  We further demonstrate how magnitude distance can be used as a training objective for push-forward generative models. Our experimental results support our theoretical analysis and demonstrate that magnitude distance provides meaningful signals, comparable to established distance-based generative approaches.
\end{abstract}

\section{Introduction}

\input{introduction}

\section{Related Work}
\input{related}

\section{Preliminaries}

\input{preliminaries}

\section{Magnitude Distance}
\input{definition}

\section{Properties}
\input{properties}

\section{Application: Push-Forward Generative Modelling}
\input{GAN}

\section{Experiments}
\input{experiments}

\section{Conclusion}
\input{conclusion}

\bibliography{ref}
\bibliographystyle{icml2026}

\newpage
\appendix
\onecolumn
\input{appendix}

\end{document}

%% file: introduction.tex
This paper introduces the \textit{magnitude distance}, a novel distance measuring the dissimilarity between finite sets $X, Y \in \mathbb{R}^D$, which captures geometric properties of the dataset. The need for quantitatively comparing datasets arises in a broad swathe of machine learning and data analysis applications ranging from hypothesis testing \citep{gretton2012kernel} to training generative models \citep{arjovsky2017wasserstein}. However, all choices of distance measure between probability distributions, or finite samples from such distributions, make some tradeoff between a large number of desirable properties. One must decide whether robustness to outliers can be compromised in order for increased sensitivity to differences between distributions, whether the underlying geometric structure of the space can be ignored in order to improve the estimator's sample efficiency, and many other potential compromises.

The distance is built on the magnitude of metric spaces introduced by~\citet{leinster2008euler}. Magnitude is a relatively new isometric invariant of metric spaces. Intuitively, it can be seen as measuring the ``effective size" of mathematical objects. It has been defined, adapted, and studied in many different contexts such as topology, finite metric spaces, entropy, compact metric spaces, graphs, and machine learning~\cite{leinster2013magnitude, leinster2021entropy, barcelo2018magnitudes, https://doi.org/10.1112/blms.12469, giusti2024eulerian}.
Magnitude distance inherits properties of metric magnitude and is able to leverage the topological and geometric properties of datasets. A distinguishing property of the magnitude distance is that it has a scaling parameter, $t$, that controls the sensitivity to capturing local differences or global structures in data. It satisfies several other properties that are desirable for a distance between point sets. In high dimensions, where classical distances such as Wasserstein suffer from severe concentration and loss of discriminability, magnitude distance remains sensitive to structural differences when $t$ is appropriately tuned, due to its kernel-inverted spectral behavior that emphasizes local geometry over global collapse.

To demonstrate the utility of the proposed approach, we develop a proof of concept push-forward generative model inspired by the principles of curriculum learning (CL), which we refer to as a Magnitude Generative Network (MagGN). CL, motivated by the idea of a curriculum in human learning, imposes structure on the training set in order to expose the model to ``easier'' examples before ``harder'' ones. Empirically, CL has been shown to accelerate and improve the learning process in many machine learning paradigms \cite{10.5555/1625135.1625265, 10.1145/1553374.1553380}.
Traditional CL approaches require an initial sorting of training examples by difficulty.
In contrast, MagGN tunes the complexity of examples by scaling them within the space. Intuitively, by starting with a small scaling parameter, we enable the model to learn coarse structures first. As we gradually increase the scale, points become spread apart, allowing the progressive capture of finer-grained details.

We summarize our contributions in the following:
\begin{itemize}[leftmargin=3em, labelsep=1em]
    \item 
    In Section~\ref{sec:main}, we define the magnitude distance $d_{Mag}^t$ and its normalized variant for finite sets in $\mathbb{R}^D$. In the process of doing this, we extend existing results on the magnitude of metric spaces from non-redundant sets to all sets in Theorem~\ref{theo:finite_euclidean}.
    \item In Section~\ref{sec:properties}, we analyze various properties of $d_{Mag}^t$, beginning with the metric axioms in Theorem~\ref{theo:metric_axioms}. We prove non-negativity, and conditional identity of indiscernibles under a proposed notion of \textit{magnitude equivalence}, and demonstrate the lack of a triangle inequality in general. 
    We also study the behaviour of $d_{Mag}^t$ over different values of the scaling parameter in Theorem~\ref{theo:magdist_over_t}. 
    We demonstrate both theoretically and practically that $d_{Mag}^t$ remains sensitive to local differences when the scaling parameter $t$ is appropriately tuned.
    In Theorem~\ref{theo:global_boundedness} we prove the global boundedness for outlier robustness.
    \item In Section~\ref{sec:gans},  we show that magnitude distance can be used as a learning signal for generative modelling by proposing MagGN, a push-forward generative modelling technique that uses multi-scale magnitude distance as the loss function.
\end{itemize}

%% file: related.tex
\subsection{Magnitude of Metric Spaces}

Magnitude is broad concept relevant to various areas in mathematics. \citet{leinster2008euler} defined it in a categorical form as an Euler characteristic. Since then, it has been studied in the context of topology~\citep{https://doi.org/10.1112/blms.12469}, graphs~\citep{leinster2019magnitude,giusti2024eulerian}, and various other topics in mathematics. In machine learning, the geometric properties of magnitude have recently been applied in various ways, such as for clustering~\citep{o2023magnitude} and diversity measurement~\citep{NEURIPS2024_dfc24bd3}. In~\citet{andreeva2023metric,andreeva2024topological}, magnitude dimension is used to study generalization in neural networks. Efficient approximation methods have been introduced by~\citet{andreeva2025approximating}.

\subsection{Dataset and Distribution Distances}
A number of distribution distances have meaningful interpretations when defined over finite sets of data, all of which make different tradeoffs related to scalability with the data dimension, sample efficiency, outlier robustness, and many other factors. Two measures that are particularly related to our proposed Magnitude distance include the Maximum Mean Discrepancy (MMD) \citep{gretton2012kernel} and the Wasserstein distance \citep{givens1984class}. Both of these distances can leverage the underlying geometric structure of the metric space in which the data reside. MMD accomplishes this through the use of an appropriately chosen kernel function, while Wasserstein distance relies on the ambient metric of the space. We elucidate in Section \ref{sec:highdim} the connection between our proposed approach and MMD with the exponential kernel in terms of the spectra of the corresponding kernel matrix. We also explore the outlier robustness of our approach and discuss how this compares with the Wasserstein distance. 

\subsection{Push-Forward Generative Models}
To demonstrate the utility of our proposed distance measure, we show how it can be used as the training objective of a push-forward generative modelling approach \citep{salmona2022can}. Push-forward generative modelling is a broad group of approaches that generate data points by first sampling from a tractable reference distribution (e.g., multivariate normal or uniform) and then passing this sample through a learned map. Many algorithms that are used for learning this map can be interpreted as finding a learned map that minimizes an objective based on a dataset distance metric. Generative Adversarial Networks \citep{goodfellow2014generative} attempt to find a distribution that is close in terms of Jensen-Shannon divergence, and extensions such as Wasserstein GANs \citep{arjovsky2017wasserstein} generalize this to metrics like the Wasserstein Distance. Perhaps most related to our case study are the set of approaches that leverage MMD as the objective; this includes the MMD GAN \citep{li2017mmd}, which minimizes the MMD between generated data and real data, and the Inductive Moment Matching approach of \citet{zhou2025inductive} that uses MMD to learn a score function from which the push-forward map is constructed.

%% file: preliminaries.tex
\label{sec: preliminaries}
For a finite metric space, $(X, d)$, we define the \emph{similarity matrix} as
$\z_X(x_i, x_j) := \exp(-d(x_i, x_j)),$ for every $x_i,x_j \in X$. The concept of \emph{magnitude}, defined in terms of a weighting, is given below.
\begin{definition}[Metric Magnitude]\label{def:finite_magnitude}
A weighting of $(X,d)$ is a function $w_X: X \to \mathbb{R}$ satisfying $\sum_{j \in X} \zeta_X(x_i,x_j) \, w_X(x_j) = 1$  for every $x_i \in X$,
where $w_X(x_i)$ is called the \emph{magnitude weight}. 
The magnitude of $(X,d)$ is defined as
\begin{align}\label{eq:def_magnitude}
    Mag(X,d) \coloneqq \sum_{x_i \in X} w_X(x_i).
\end{align}
\end{definition}
In general, the existence of a suitable weighting, and therefore the magnitude, is not guaranteed. However, if $X$ is a finite subset of $\mathbb{R}^D$, then $\z_X$ is a symmetric positive definite matrix [Theorem 2.5.3]\cite{leinster2013magnitude}. In particular, $\zeta_X$ is invertible so a unique weighting exists and the magnitude is well defined. In this case the weighting vector can be computed by inverting the similarity matrix $\zeta_X$ and summing all the entries; i.e., $w_X \coloneqq \z_X^{-1}\one$, where $\one$ is the $\abs{X} \times 1$ column vector of all ones. Consequently, magnitude is the sum of all the entries of the weighting vector, i.e., $Mag(X) \coloneqq \one^\ast w_X = \one^\ast \z_X^{-1} \one$.
When the distance measure, $d$, is understood from context, such as in $\mathbb{R}^D$, we often omit it in the notation.

One can introduce a parameter, $t \in \mathbb{R}_{+}$, to define the \emph{scaled metric space} $(tX, d_t)$, often denoted by $tX$. This is the metric with the same points as $X$  and metric $d_t(x,y) \coloneqq t \cdot d(x,y)$.
The \emph{magnitude function} assigns each finite metric space $X$ to a family of scaled metric spaces $\{tX\}_{t > 0}$.

\begin{definition}[Magnitude function] \label{def:mag_function}
The \emph{magnitude function} of a metric space $X$ is given by
\begin{equation}
    \Mag{X}{t} = Mag(tX),
\end{equation}
with the associated weighting vector denoted by $\mathbf{w}^t_X$.
\end{definition}

%% file: definition.tex
\begin{figure*}[t]
\centering

\begin{subfigure}[t]{0.38\textwidth}
    \centering
    \includegraphics[width=\linewidth]{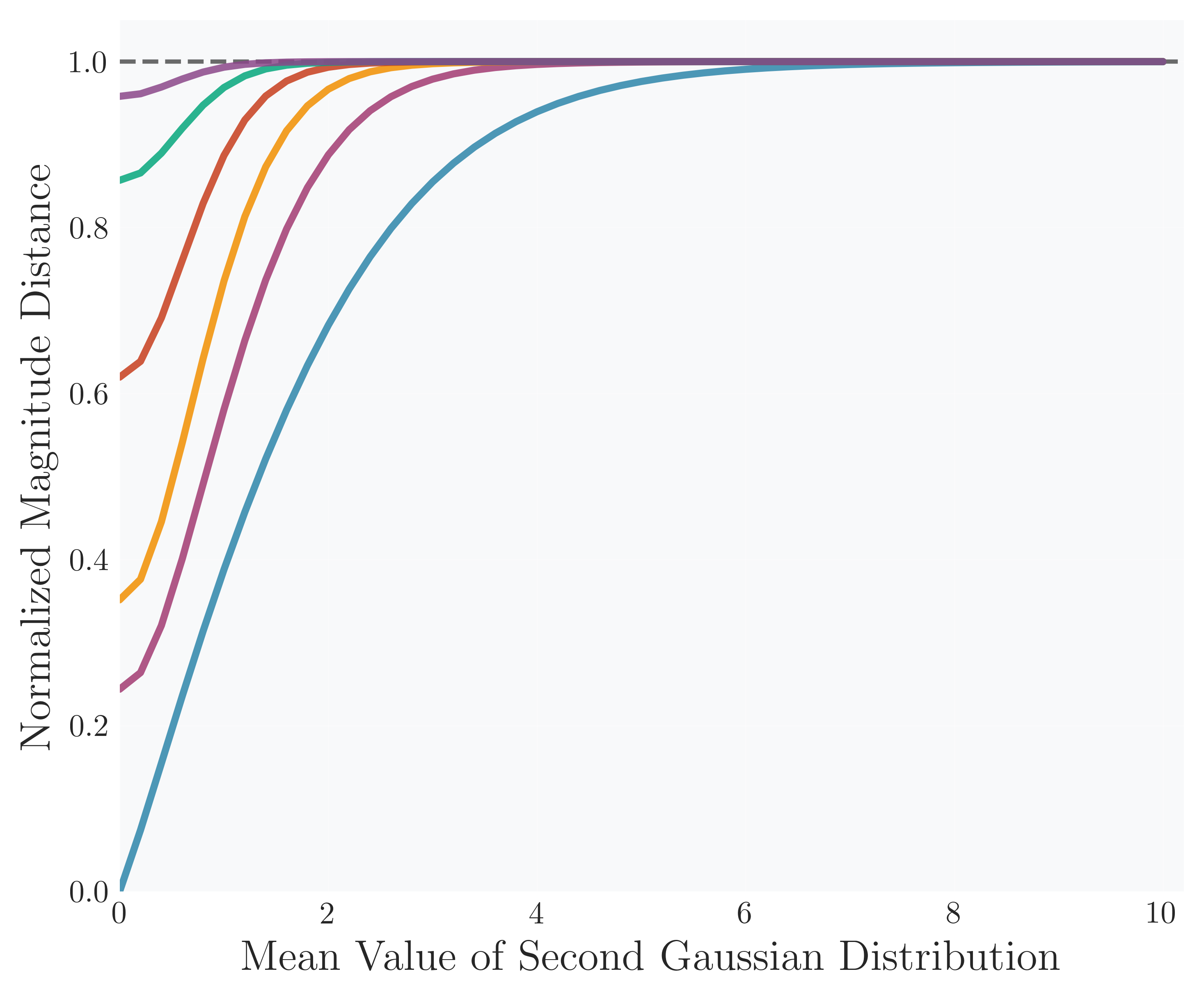}
    \caption{Normalized}
    \label{plot:normalized_mag}
\end{subfigure}
\hfill
\begin{subfigure}[t]{0.38\textwidth}
    \centering
    \includegraphics[width=\linewidth]{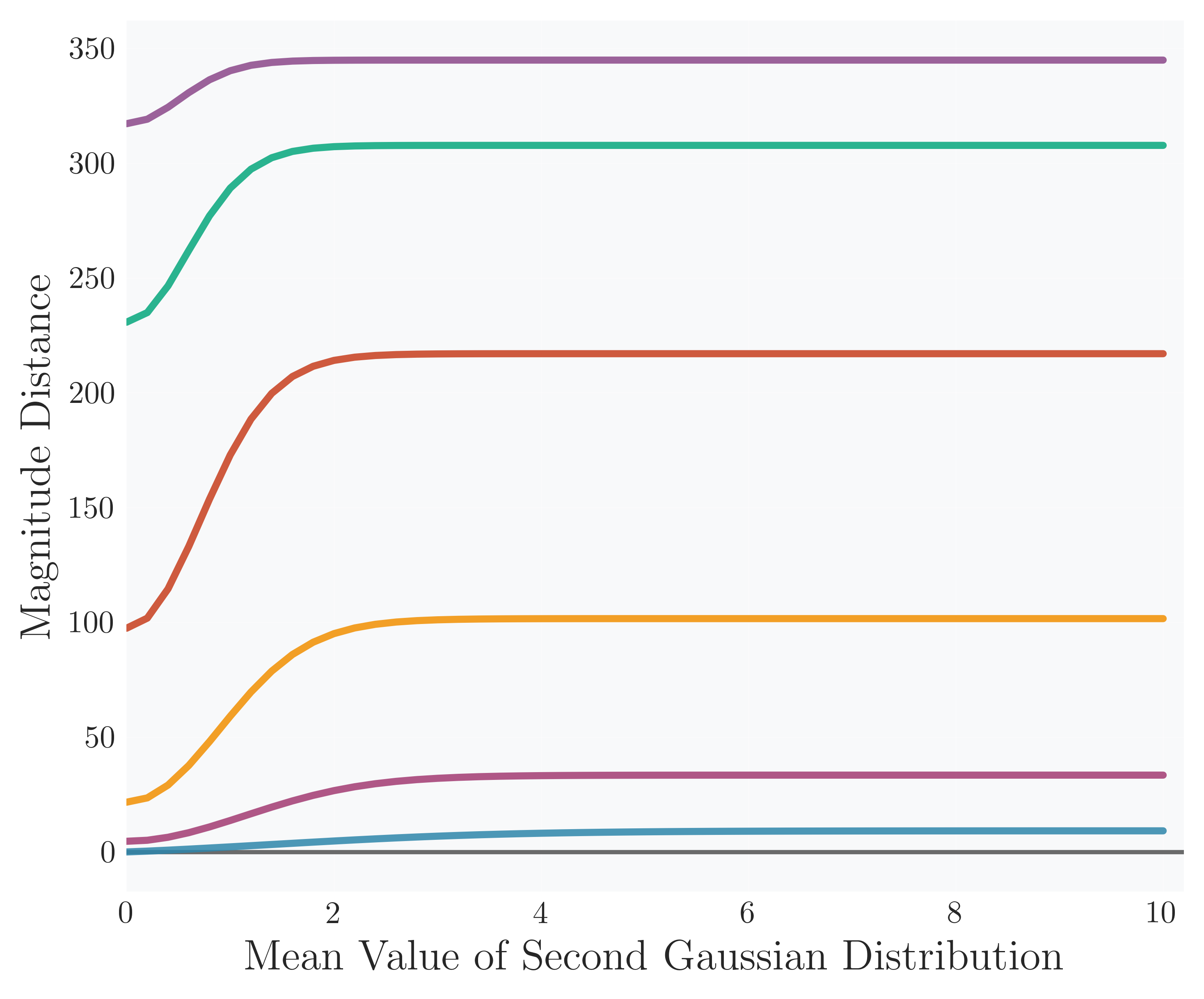}
    \caption{Standard}
    \label{plot:standard_mag}
\end{subfigure}
\begin{minipage}[t]{0.14\textwidth}
    \centering
    \includegraphics[width=\linewidth]{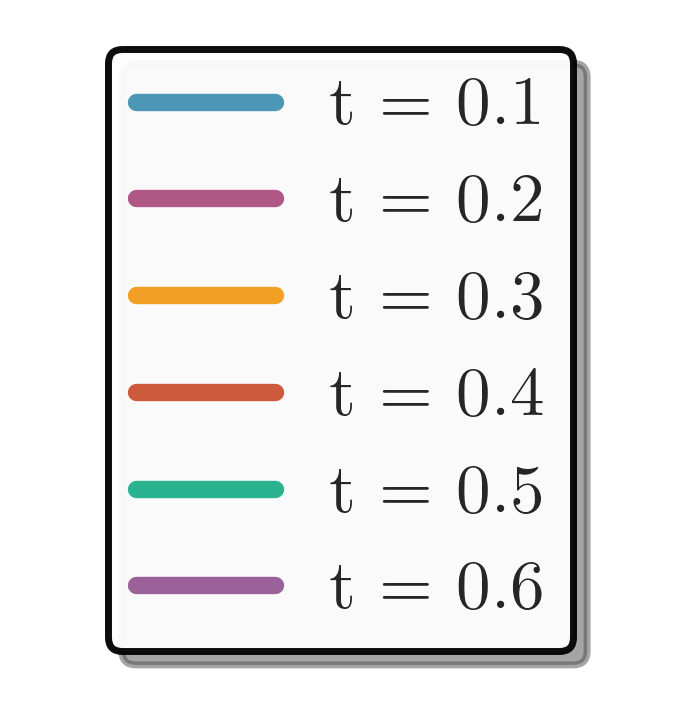}
\end{minipage}

\caption{
Impact of the scaling parameter $t$, on the magnitude distance
between samplings of $N(0,1)$ and $N(\mu ,1)$ in $100$ dimensions. 
Each plot shows magnitude distance computed from 200 samples of $N(0,1)$ to 200 samples of $N(\mu ,1)$ with $t = 0.1$ (blue), $0.2$ (purple), $0.3$ (yellow), $0.4$ (red), $0.5$ (green), $0.6$ (dark purple). 
Plot~\ref{plot:normalized_mag} and \ref{plot:standard_mag} show the normalized and standard magnitude distances, respectively. 
As the mean difference between samplings increases, both magnitude distances also increase. However, the standard magnitude distance converges toward different limits depending on $t$, while the normalized magnitude distance consistently converges to $1$, regardless of $t$
By Theorem~\ref{theo:magdist_over_t}, these limits are also bounded above by the cardinality of the symmetric difference of the samples.
}
\label{fig:mag_dist_over_t}
\end{figure*}

\label{sec:main}
In the literature, a metric space is understood to be a \emph{set} of distinct points---i.e., without duplicates. In the following,, we extend the notion of magnitude to finite \emph{collections} of points that may contain duplicates. In this case, while the weighting vector may not be unique, the magnitude remains well-defined.

\begin{restatable}{theorem}{finiteeuclidean}\label{theo:finite_euclidean}
    Let $X$ be a finite collection of points in Euclidean space $\mathbb{R}^D$, and define the similarity matrix as before. Then:
    \begin{enumerate}
        \item If the elements of $X$ are distinct, then the similarity matrix is symmetric positive definite.
        Therefore, the inverse exists, and the weighting vector and magnitude are uniquely well-defined~[Theorem~2.5.3]~\cite{leinster2013magnitude}.
        \item If $X$ contains duplicate points, the similarity matrix is symmetric positive semidefinite (but not definite). However, the magnitude of $X$ is equal to that of $X'$, where $X'$ is the set of distinct elements in $X$.  Also, while the weighting vector is not unique, all valid weightings for $X$ can be defined by distributing each $\mathbf{w}_{X'}(i)$ among the duplicates of $x_i$ (so that the coefficients sum to $1$) where $\mathbf{w}_{X'}$ is the unique weighting on $X'$.
    \end{enumerate}
\end{restatable}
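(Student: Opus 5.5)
Part~1 is exactly [Theorem~2.5.3]~\cite{leinster2013magnitude}, so I will simply cite it; the real content is Part~2. My plan there is to factor the similarity matrix of $X$ through that of $X'$.

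Let $X=(x_1,\dots,x_n)$ and $X'=\{y_1,\dots,y_m\}$ be the set of distinct elements, with $m<n$ when duplicates exist. Define the $n\times m$ incidence matrix $P$ by $P_{ik}=1$ if $x_i=y_k$ and $0$ otherwise. Then
\begin{equation*}
\z_X = P\,\z_{X'}\,P^\top .
\end{equation*}
Positive semidefiniteness follows at once: $v^\top \z_X v = (P^\top v)^\top \z_{X'} (P^\top v)\ge 0$, using Part~1 for $\z_{X'}$. The matrix is not definite, because if $x_i=x_j$ with $i\neq j$ then $v=e_i-e_j$ gives $P^\top v=0$, so $\z_X v=0$ (two rows of $\z_X$ coincide).

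Next I characterise all weightings. A weighting $w\in\mathbb{R}^n$ must satisfy $P\z_{X'}P^\top w=\one_n$. Since every $y_k$ occurs in $X$, $P$ has orthonormal-up-to-multiplicity columns, and $P^\top P=\mathrm{diag}(c_k)$ where $c_k\ge 1$ is the multiplicity of $y_k$. Hence $P$ has full column rank and is injective. Also $\one_n=P\one_m$. So the equation is equivalent to $P(\z_{X'}P^\top w-\one_m)=0$, which by injectivity reduces to $\z_{X'}P^\top w=\one_m$. By invertibility of $\z_{X'}$ this is $P^\top w=\mathbf{w}_{X'}$. Reading off coordinates, $(P^\top w)_k=\sum_{i:x_i=y_k} w_i$. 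Thus the weightings of $X$ are exactly the vectors whose entries over the duplicates of $y_k$ sum to $\mathbf{w}_{X'}(k)$.

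I read the statement's phrase ``so that the coefficients sum to $1$'' as distributing $\mathbf{w}_{X'}(k)$ via coefficients $\lambda_i$ with $\sum\lambda_i=1$. Arbitrary real splits are allowed, which is the same set as above. Existence is clear, for example by putting all the weight on one copy. Finally, the magnitude is
\begin{equation*}
\one_n^\top w=\one_m^\top P^\top w=\one_m^\top \mathbf{w}_{X'}=Mag(X'),
\end{equation*}
independent of the chosen weighting.

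The only mildly delicate step is cancelling $P$ in the weighting equation. This needs $\one_n$ to lie in the image of $P$ and $P$ to be injective, which is why I would verify both facts explicitly before cancelling. Everything else is routine linear algebra.
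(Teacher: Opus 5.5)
Your argument is sound and uses the same core idea as the paper's proof: the weighting equation for $X$ only sees the total weight on each class of duplicates, and uniqueness of $\mathbf{w}_{X'}$ pins those totals down. The factorization $\zeta_X = P\,\zeta_{X'}P^\top$ packages this more cleanly than the paper's entrywise computation, and it buys two things. First, it actually proves positive semidefiniteness; the paper only notes that $\zeta_X$ has repeated rows and rank $\lvert X'\rvert$, which gives non-definiteness but not semidefiniteness on its own. Second, cancelling the injective $P$ yields the entire solution set $\{w : P^\top w = \mathbf{w}_{X'}\}$ in one step. The paper instead checks that the splits $\alpha_{j,l}\,\mathbf{w}_{X'}(x_j)$ with $\sum_l \alpha_{j,l}=1$ are weightings. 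It then argues they are the only ones by \emph{assuming} a weighting already has the form $\alpha_{j,l}\,\mathbf{w}_{X'}(x_j)$ and showing the coefficients must sum to $1$.

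That presupposed form is exactly where your one inaccurate sentence sits. You proved the additive description: the entries over the copies of $y_k$ sum to $\mathbf{w}_{X'}(k)$. This agrees with the multiplicative one, $\lambda_i\,\mathbf{w}_{X'}(k)$ with $\sum_i\lambda_i=1$, only when $\mathbf{w}_{X'}(k)\neq 0$. If $\mathbf{w}_{X'}(k)=0$ and $y_k$ is duplicated, the multiplicative splits are all zero on those copies, whereas your characterization also admits, say, $+1$ and $-1$. This case does occur. Take the six vertices of a regular octahedron of circumradius $1$ together with its centre: at scale $1$ the centre has weight about $-0.08$, and its weight tends to $1$ as $t\to\infty$, so by continuity it vanishes at some intermediate scale. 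So do not assert ``the same set''. State the result as $P^\top w = \mathbf{w}_{X'}$, and remark that the statement's ``coefficients summing to $1$'' description is exact only when every duplicated point has nonzero weight in $X'$; the paper's proof has the same blind spot. Your conclusions about semidefiniteness and $\mathrm{Mag}(X)=\mathrm{Mag}(X')$ are unaffected.
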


Theorem \ref{theo:finite_euclidean} tells us magnitude is insensitive to redundancy. 
We therefore let $X, Y \subset \mathbb{R}^D$ be finite sets of points in Euclidean space, and the magnitude is computed disregarding sample redundancy within the sets. With this in place, we define the \emph{magnitude distance}, which captures the dissimilarity between $X$ and $Y$ using magnitudes at a given scale, $t$.
\begin{definition}[Magnitude Distance]
For two finite sets $X, Y \subset \mathbb{R}^D$, the magnitude distance with scale parameter $t \in \mathbb{R}_+$ is defined as
\begin{equation} \label{eq:def_magdif}
    d_{Mag}^t(X,Y) = 2 \Mag{X\cup Y}{t} - \Mag{X}{t} - \Mag{Y}{t},
\end{equation}
and the normalized magnitude distance is defined as $
    \tilde{d}_{Mag}^t(X,Y) = \frac{d_{Mag}^t(X,Y)}{\Mag{X\cup Y}{t}}$.
\end{definition}

The proposed magnitude distance depends on the scaling parameter, $t$, that controls the sensitivity of $d_{Mag}^t$ to the separation between points. Intuitively, small $t$ focuses on the structure of the entire data, while large $t$ places more emphasis on the local differences and sample variability. Figure~\ref{fig:mag_dist_over_t} illustrates the impact of the scaling parameter $t$ in a $100$ dimensional space.

%% file: properties.tex
\label{sec:properties} 

\subsection{Metric Axioms}\label{sec:metric}
We now examine whether the magnitude distance satisfies the standard axioms of a metric. Before analyzing the axioms, we need to introduce the notion of \textit{magnitude equivalence}.

\begin{definition}[Magnitude Equivalence]\label{def:magnitude_equivalence}
Let $X, Y \subset \mathbb{R}^D$ be finite sets with weighting vectors $\mathbf{w}^t_X$ and $\mathbf{w}^t_Y$ at scale $t$. The sets $X$ and $Y$ are \emph{magnitude-equivalent} at scale $t$ if they have the same support of non‐zero weight entries. In other words, we have $X \underset{\Mag{}{t}}{=}Y$ if and only if
\begin{equation}
     \{x\in X: \mathbf{w}^t_X(x)\not = 0\}= \{y\in Y: \mathbf{w}^t_Y(y)\not =0\}.
\end{equation}
\end{definition}
\vspace{-2pt}
Additional details on the properties of magnitude equivalence are provided in the Appendix~\ref{sec:appendix_metric_axioms}. Our main result regarding the metric properties of the magnitude distance is given below.

\begin{restatable}{theorem}{metric_axioms}\label{theo:metric_axioms}
Magnitude distance $d^t_{\text{Mag}}$ has the following properties for $X, Y \subset \mathbb{R}^D$ and $t > 0$:
    \begin{itemize}
    \item \textbf{Symmetry:} $d^t_{\text{Mag}}(X, Y) = d^t_{\text{Mag}}(Y, X)$ by definition.
    \item \textbf{Non-negativity:}
    For any $t>0$, we have $d^t_{\text{Mag}}(X, Y) \geq 0$.
    \item \textbf{Identity of indiscernibles:} $d^t_{Mag}(X,Y) = 0 \iff X \underset{\Mag{}{t}}{=} Y$.
    
    \item \vspace{-4pt} \textbf{No Triangle inequality:} $d^t_{\text{Mag}}$ does not satisfy the triangle inequality in $\mathbb{R}^D$ for $D>1$.
\end{itemize}
\end{restatable}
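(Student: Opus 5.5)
The plan is to derive all three nontrivial properties from a variational characterisation of magnitude. Fix $t>0$ and write $U = X\cup Y$ and $Z = \zeta_{tU}$. By Theorem~\ref{theo:finite_euclidean}, $Z$ is symmetric positive definite because $U$ is a set of distinct points. For $v\in\mathbb{R}^{U}$ put $f(v) = 2\,\one^\ast v - v^\ast Z v$. The unique maximiser of $f$ is $\mathbf{w}^t_U = Z^{-1}\one$, and $f(\mathbf{w}^t_U) = \Mag{U}{t}$. Completing the square gives
\begin{equation*}
f(\mathbf{w}^t_U) - f(v) = (\mathbf{w}^t_U - v)^\ast Z\, (\mathbf{w}^t_U - v).
\end{equation*}
Let $\hat{\mathbf{w}}_X$ be the weighting $\mathbf{w}^t_X$ extended by zero to $U$. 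Then $f(\hat{\mathbf{w}}_X) = 2\,\one^\ast \mathbf{w}^t_X - (\mathbf{w}^t_X)^\ast \zeta_{tX}\mathbf{w}^t_X = \Mag{X}{t}$. Hence
\begin{equation*}
\Mag{U}{t} - \Mag{X}{t} = (\mathbf{w}^t_U - \hat{\mathbf{w}}_X)^\ast Z\,(\mathbf{w}^t_U - \hat{\mathbf{w}}_X) \ge 0,
\end{equation*}
and the same identity holds with $Y$ in place of $X$. Symmetry is immediate from the definition. Non-negativity follows by writing $d^t_{Mag}(X,Y) = [\Mag{U}{t}-\Mag{X}{t}] + [\Mag{U}{t}-\Mag{Y}{t}]$, a sum of two non-negative quadratic forms.

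For the identity of indiscernibles, positive definiteness of $Z$ gives $d^t_{Mag}(X,Y)=0$ if and only if $\mathbf{w}^t_U = \hat{\mathbf{w}}_X = \hat{\mathbf{w}}_Y$.

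($\Rightarrow$) The support of $\hat{\mathbf{w}}_X$ is the nonzero support of $\mathbf{w}^t_X$, and likewise for $Y$. If these two zero-extended vectors coincide, their supports coincide, which is exactly $X \underset{\Mag{}{t}}{=} Y$.

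($\Leftarrow$) Suppose both supports equal a common set $S$. Since $\mathbf{w}^t_X$ vanishes off $S$, the weighting equations restricted to the rows indexed by $S$ show that $\mathbf{w}^t_X|_S$ is a weighting of $S$. The same argument applies to $\mathbf{w}^t_Y|_S$. Weightings of the distinct set $S$ are unique, so $\mathbf{w}^t_X|_S=\mathbf{w}^t_Y|_S$ and hence $\hat{\mathbf{w}}_X = \hat{\mathbf{w}}_Y =: \hat w$. Now check that $\hat w$ is a weighting of $U$. For $u\in X$ the row sum $\sum_{s\in S}\zeta_{tU}(u,s)\hat w(s)$ equals $1$ by the weighting equation of $X$. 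For $u\in Y$ it equals $1$ by the weighting equation of $Y$. By uniqueness $\mathbf{w}^t_U=\hat w$, and therefore $d^t_{Mag}(X,Y)=0$.

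For the failure of the triangle inequality I will give an explicit counterexample in $\mathbb{R}^2$; embedding it in the first two coordinates handles every $D>1$. The natural first attempts with singletons $A=\{a\}$, $C=\{c\}$ and an intermediate set $B$ containing $a$ and $c$ only give equality or slack, since then $d(A,B)+d(B,C) \ge d(A,C)$ holds automatically. So the intermediate set must avoid $a$ and $c$. The aim is to choose $B$ so that adjoining $a$, or adjoining $c$, barely increases $\Mag{B}{t}$, while $\Mag{\{a,c\}}{t}$ is close to $2$, i.e.\ $a,c$ are far apart. The mechanism should be the negative magnitude weights available in dimension $\ge 2$, which can make $\mathbf{w}^t_{B\cup\{a\}}$ close to $\hat{\mathbf{w}}_B$ in the quadratic form above even when $a\notin B$.

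Concretely, I would search small planar configurations numerically, for example $B$ a tight cluster of three or four points, and then verify a chosen instance by exact computation with $3\times 3$ and $4\times 4$ matrices. This step is the main obstacle: finding and certifying a configuration is ad hoc rather than structural. If the simple heuristic above does not work, I would let $A$ and $C$ be small clusters rather than singletons and optimise the configuration directly over point positions and $t$.
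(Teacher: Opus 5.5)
Your arguments for symmetry, non-negativity and the identity of indiscernibles are correct and more self-contained than the paper's. The identity $\Mag{U}{t}-\Mag{X}{t}=(\mathbf{w}^t_U-\hat{\mathbf{w}}_X)^\ast Z\,(\mathbf{w}^t_U-\hat{\mathbf{w}}_X)$ gives monotonicity and its equality case at once. The paper instead cites Leinster's Corollary~2.4.4 and the supremum formula (Proposition~2.4.3), and passes through its lemmas on tight inclusions and equivalent weightings. You also check explicitly that the common zero-extended weighting is a weighting of $X\cup Y$, which the paper only asserts.

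The genuine gap is the last bullet. You never exhibit a counterexample, only a plan to search for one in $\mathbb{R}^2$, and nothing guarantees the search succeeds. The triangle inequality $\Mag{X\cup Y}{t}+\Mag{Y\cup Z}{t}\ge\Mag{X\cup Z}{t}+\Mag{Y}{t}$ follows from submodularity plus monotonicity. So a planar counterexample would in particular show that magnitude is not submodular in the plane, and you give no reason to expect that. Your singleton template is also very restrictive. Combined with monotonicity, $\Mag{B\cup\{a\}}{t}+\Mag{B\cup\{c\}}{t}<\Mag{\{a,c\}}{t}+\Mag{B}{t}$ forces both $\Mag{B}{t}$ and $\Mag{B\cup\{c\}}{t}$ below $\Mag{\{a,c\}}{t}<2$.

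The missing idea is the paper's. Take the intermediate set empty, which reduces the triangle inequality to $\Mag{X\cup\{z\}}{t}\le\Mag{X}{t}+1$. (A single point $y\in X$ also works and gives the stronger requirement $\Mag{X\cup\{z\}}{t}-\Mag{X}{t}\le\Mag{\{y,z\}}{t}-1<1$.) Then find one point whose addition raises magnitude by more than $1$. For $X=\{\pm e_1,\dots,\pm e_D\}$ and $z=0$, symmetry gives closed forms. With $a=1+(2D-2)e^{-\sqrt{2}t}+e^{-2t}$ and $b=e^{-t}$, one gets $\Mag{X}{t}=2D/a$ and $\Mag{X\cup\{0\}}{t}=1+2D(1-b)^2/(a-2Db^2)$. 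Hence the increment exceeds $1$ if and only if $2Db>a(2-b)$; at $t=5$, $D=500$ it is about $7.18$, which you can certify by hand.

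Be aware that this construction only works for large $D$. Its planar version, and indeed any regular polygon with its centre, gives an increment below $1$ at every scale. So neither this construction nor the paper's proof (which uses $D=500$) settles the planar case that your embedding argument needs.
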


\subsection{Properties of Magnitude Distance Over values of  $t$}\label{sec:over_t}

As $t$ moves from $0$ to $\infty$, we find that magnitude distance has a set of natural properties:

\begin{restatable}{theorem}{magdistovert}\label{theo:magdist_over_t}
   For every finite metric sets $X$ and $Y$, the magnitude distance $d^t_{Mag}(X, Y)$:
    \begin{itemize}
    \item Converges to $0$ as $t \to 0$.
    \item Converges to the cardinality of $X \Delta Y$ as $t \to \infty$.
    \item For $t \gg 0$, the magnitude distance $d^t_{Mag}(X, Y)$ is increasing with respect to $t$.
    \end{itemize} 
\end{restatable}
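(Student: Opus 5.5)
We treat the three claims separately, each through the matrix formula $\Mag{X}{t}=\one^\ast \z_{tX}^{-1}\one$ from the Preliminaries. By Theorem~\ref{theo:finite_euclidean} we may assume that $X$, $Y$ and $X\cup Y$ are sets of distinct points, so all three similarity matrices are positive definite for every $t>0$. Write $n_X=\abs{X}$, $n_Y=\abs{Y}$ and $n=\abs{X\cup Y}$. Then $\abs{X\Delta Y}=2n-n_X-n_Y$, which is exactly the combination of cardinalities that the definition of $d_{Mag}^t$ mimics.

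\textbf{Limit $t\to 0$.} We will show that $\Mag{Z}{t}\to 1$ for every finite $Z\subset\mathbb{R}^D$; then $d^t_{Mag}\to 2-1-1=0$. As $t\to0$, $\z_{tZ}\to \one\one^\ast$, which is singular, so we cannot simply take the limit of the inverse. Instead we use the variational characterization valid for positive definite $\z$:
\[
\Mag{Z}{t}=\sup_{v\neq 0}\frac{(\one^\ast v)^2}{v^\ast \z_{tZ} v}.
\]
Taking $v=\one$ and using $\z\le \one\one^\ast$ entrywise gives $\Mag{Z}{t}\ge n_Z^2/\one^\ast\z_{tZ}\one\ge 1$. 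For the upper bound we will use the known fact that for Euclidean (negative type) sets $\Mag{Z}{t}$ is bounded above by the magnitude of a ball, or more elementarily by $1+C t$. Concretely, write $\z_{tZ}=e^{-t\,d}$ entrywise with $d$ conditionally negative definite. Expanding $e^{-td}=\one\one^\ast - tD + O(t^2)$, the quotient is at most $(\one^\ast v)^2/\big((\one^\ast v)^2 - t\,v^\ast D v - O(t^2)\|v\|^2\big)$. The delicate direction is $\one^\ast v=0$, where $-v^\ast Dv>0$ by conditional negative definiteness (strictly, since the points are distinct), so the ratio tends to $1$. I expect this to be the main obstacle: making this uniform in $v$. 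I would handle it by decomposing $v=\alpha\one/n_Z+u$ with $\one^\ast u=0$ and bounding the cross terms.

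\textbf{Limit $t\to\infty$.} Here $\z_{tZ}\to I$, since off-diagonal entries $e^{-t d(z_i,z_j)}\to0$ for distinct points. Inversion is continuous at $I$, so $\Mag{Z}{t}\to n_Z$. Hence $d^t_{Mag}\to 2n-n_X-n_Y=\abs{X\Delta Y}$.

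\textbf{Monotonicity for large $t$.} We use a Neumann expansion. Write $\z_{tZ}=I+E_t$, where $E_t$ has zero diagonal and off-diagonal entries $e^{-td_{ij}}$. Then
\[
\Mag{Z}{t}=n_Z-\one^\ast E_t\one+\one^\ast E_t^2\one-\cdots,
\]
so $\Mag{Z}{t}=n_Z-2\sum_{i<j}e^{-td_{ij}}+O(e^{-2t\delta})$, where $\delta$ is the minimum pairwise distance. The same expansion can be differentiated termwise, since it is an analytic series in exponentials converging for large $t$. This gives
\[
\frac{d}{dt}d^t_{Mag}=2\sum_{\text{pairs }p}c_p\,d_p\,e^{-td_p}+\text{higher order},
\]
where $c_p=2-\one_{p\subset X}-\one_{p\subset Y}\ge 0$. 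The coefficient $c_p$ is positive exactly for pairs in $X\cup Y$ not contained in both $X$ and $Y$. If $X\neq Y$, such pairs exist. Let $d^\ast$ be the smallest distance among them. The leading term $c\,d^\ast e^{-td^\ast}$ dominates all other terms: the remaining first-order terms decay at least as fast, and the second-order terms are $O(e^{-t(d^\ast+\delta)})$. The subtle point is that a second-order term could involve a path of two short pairs that lie in both $X$ and $Y$, giving $e^{-2t\delta}$ with possibly $2\delta<d^\ast$. Such contributions cancel, however, because these paths appear identically in the $X\cup Y$, $X$ and $Y$ expansions weighted $2,-1,-1$ whenever all their points lie in $X\cap Y$. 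So only terms touching $X\Delta Y$ survive, and these decay at rate at least $d^\ast$. Checking this cancellation carefully is the other point needing care. It gives a positive derivative for $t\gg0$, which proves the third claim. If $X=Y$, the distance is identically $0$ and the claim holds trivially in the non-strict sense.
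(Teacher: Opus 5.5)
Your proposal is correct, and in two places it is more self-contained than the paper's proof. For the limits, the paper only cites Leinster's Proposition~2.2.6. As restated in the appendix, that result gives $\operatorname{Mag}_Z(t)\to\abs{Z}$ as $t\to\infty$, which is the same fact your continuity-of-inversion argument proves. It says nothing about $t\to0$, and that limit is not automatic for finite metric spaces: it needs the Euclidean structure. Your route supplies that structure through the variational formula (Proposition~2.4.3, also quoted in the appendix) and strict conditional negative definiteness of the distance matrix $D$. The uniformity you worry about is routine. Write $v=\alpha\one+u$ with $\one^\ast u=0$ and $-u^\ast Du\ge\lambda\|u\|^2$. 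The cross term $2t\alpha\,\one^\ast Du$ is absorbed by completing the square in $\|u\|$, which gives $v^\ast\z_{tZ}v\ge\alpha^2\big(n_Z^2-O(t)\big)$ and hence $1\le\operatorname{Mag}_Z(t)\le1+O(t)$. Two small corrections. The delicate region is near the hyperplane $\one^\ast v=0$, not on it, since on it the quotient is $0$. And strictness of $-u^\ast Du>0$ should be cited, e.g.\ from the Fourier representation of $e^{-t\|x\|}$; knowing only that $u^\ast\z_{tZ}u/t>0$ for each $t$ gives a limit that is merely $\ge0$.

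For monotonicity you use the paper's Neumann-series expansion and termwise differentiation, but your bookkeeping closes a gap the paper leaves open. The paper notes that the first-order coefficients are nonnegative and puts everything else into $O(e^{-2t\,d_{\min}(X\cup Y)})$. Since pairs inside $X\cap Y$ have $c_p=0$, the leading term decays only like $e^{-td^\ast}$, and that remainder need not be smaller. For example, with $X=\{0,1,100\}$ and $Y=\{0,1\}\subset\mathbb{R}$ the remainder is $e^{-2t}$ against a leading term of order $e^{-99t}$. Your cancellation is exactly the missing step, and it checks out:
\begin{itemize}
\item a walk $w$ in $X\cup Y$ enters $d^t_{Mag}$ with coefficient $2-\one_{w\subseteq X}-\one_{w\subseteq Y}$, which is $0$ if and only if $w$ stays in $X\cap Y$;
\item a surviving walk with $k$ edges has an edge at a vertex of $X\Delta Y$, so its length is at least $d^\ast+(k-1)\delta$;
\item summing over at most $n(n-1)^k$ walks bounds the order-$\ge2$ part of the derivative by $O(e^{-t(d^\ast+\delta)})$.
\end{itemize}
This bound also justifies termwise differentiation on $[t_0,\infty)$ and gives strict increase. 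One caveat, shared with the paper: everything tacitly needs $X,Y\neq\emptyset$. If $Y=\emptyset$, then $d^t_{Mag}=\operatorname{Mag}_X(t)\to1$ as $t\to0$, so the first claim fails. If in addition $\abs{X}=1$, there are no pairs at all, contrary to your ``if $X\neq Y$, such pairs exist''; the distance is then constant and increases only non-strictly.
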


Similar properties have been shown for the magnitude function in [Proposition 2.2.6]~\cite{leinster2013magnitude}.
Based on Theorem~\ref{theo:magdist_over_t}, in the following, we show how tuning $t$ controls the sensitivity of the distance to separation even in high dimensions.

\subsection{Performance in High Dimensions}\label{sec:highdim}
The magnitude distance operates through a kernelized form parameterized by a scale $t$, rather than relying on raw pairwise distances. Theorem~\ref{theo:magdist_over_t} shows that as $t \to 0$, the magnitude distance converges to zero and as $t \to \infty$, the magnitude distance converges to $X \Delta Y$. 
Finite subsets of Euclidean space induce positive definite kernels, making the magnitude function $\Mag{\cdot}{t}$ well-defined and analytic over all $t > 0$. 
By the intermediate value theorem, as the scaling parameter $t$ varies over $(0, \infty)$, the magnitude distance attains every value between its limiting extremes.
\begin{proposition}\label{prop:mag_dist_analytic}
    For every two finite sets $X,Y \in \mathbb{R}^D$, and any value $\alpha \in (0, X \Delta Y)$, there exists a $t >0$ such that $d^t_{Mag}(X,Y) = \alpha$, regardless of the ambient dimension.
\end{proposition}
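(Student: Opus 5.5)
The plan is to apply the intermediate value theorem to the function $f(t) := d^t_{Mag}(X,Y) = 2\Mag{X\cup Y}{t} - \Mag{X}{t} - \Mag{Y}{t}$ on $(0,\infty)$. Everything is read as in the paper: $X\Delta Y$ stands for its cardinality $\abs{X\Delta Y}$, and $X$, $Y$, $X\cup Y$ are replaced by their sets of distinct points, which Theorem~\ref{theo:finite_euclidean} allows. Three ingredients are needed: continuity of $f$, its limit $0$ as $t\to 0$, and its limit $\abs{X\Delta Y}$ as $t\to\infty$. The two limits are exactly the first two items of Theorem~\ref{theo:magdist_over_t}, which I will invoke directly.

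\textbf{Continuity.} Let $Z$ be any finite set of distinct points in $\mathbb{R}^D$. The entries of $\z_{tZ}$ are $\exp(-t\,d(z_i,z_j))$, which are real-analytic in $t$. By Theorem~\ref{theo:finite_euclidean}(1), $\z_{tZ}$ is positive definite for every $t>0$, so $\det \z_{tZ}$ never vanishes on $(0,\infty)$. Cramer's rule then gives $\z_{tZ}^{-1}$ with entries that are rational functions of analytic functions and have non-vanishing denominators. Hence $t\mapsto \Mag{Z}{t} = \one^\ast \z_{tZ}^{-1}\one$ is analytic, and in particular continuous, on $(0,\infty)$. Applying this to $Z = X$, $Y$ and $X\cup Y$ shows that $f$ is continuous on $(0,\infty)$.

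\textbf{Intermediate value argument.} Fix $\alpha\in(0,\abs{X\Delta Y})$. This interval is nonempty only when $X\neq Y$; otherwise the statement is vacuous. Since $f(t)\to 0<\alpha$ as $t\to 0$, there is $t_0>0$ with $f(t_0)<\alpha$. Since $f(t)\to\abs{X\Delta Y}>\alpha$ as $t\to\infty$, there is $t_1>t_0$ with $f(t_1)>\alpha$. The intermediate value theorem on the compact interval $[t_0,t_1]$ then produces $t\in(t_0,t_1)$ with $f(t)=\alpha$.

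Dimension independence is automatic, because none of these steps uses $D$. Positive definiteness holds in every $\mathbb{R}^D$, and both limits in Theorem~\ref{theo:magdist_over_t} depend only on finiteness.

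The main obstacle is already absorbed by the earlier results, namely the limits in Theorem~\ref{theo:magdist_over_t}. If I wanted the argument to be self-contained, I would justify the $t\to\infty$ limit as follows: $\z_{tZ}\to I$, and inversion is continuous near $I$, so $\Mag{Z}{t}\to\abs{Z}$. This gives $f\to 2\abs{X\cup Y}-\abs{X}-\abs{Y}=\abs{X\Delta Y}$. The $t\to 0$ limit is subtler, because $\z_{tZ}$ tends to the singular all-ones matrix. There I would rely on the known fact that $\Mag{Z}{t}\to 1$ as $t\to0$ for finite Euclidean sets, which gives $f\to 2-1-1=0$.
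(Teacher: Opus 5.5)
Your proposal is correct and is essentially the paper's own argument: the paper likewise gets analyticity (hence continuity) of $t\mapsto \Mag{\cdot}{t}$ from positive definiteness of the similarity matrix, takes the limits $0$ and $\abs{X\Delta Y}$ from Theorem~\ref{theo:magdist_over_t}, and applies the intermediate value theorem; your version only makes the continuity step and the choice of $t_0<t_1$ explicit. Like the paper, you tacitly need $X,Y\neq\emptyset$: if $X=\emptyset$ then $d^t_{Mag}(X,Y)=\Mag{Y}{t}\geq 1$, so no $\alpha\in(0,1)$ is attained. Your caution about the $t\to 0$ limit is well placed, since the result the paper cites for Theorem~\ref{theo:magdist_over_t} only covers large $t$.
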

Thus, Proposition~\ref{prop:mag_dist_analytic} indicates that high dimensionality alone does not force the distance to concentrate in a narrow range. In the following, we show that an appropriate choice of $t$ prevents the distance from losing its discriminative power between distinct sets.
 
In high-dimensional settings, many distance functions suffer from loss of discriminability -- where the pairwise distances collapse to a small range of values.
In kernel-based distances, such as the Maximum Mean Discrepancy (MMD), this phenomenon can arise due to the spectral degeneracy of the kernel matrix.
Given $n$ examples $x_1, \dots, x_n \sim P(X)$ and $m$ examples from $y_1, \dots, y_m \sim Q(Y)$, the empirical MMD statistic with kernel $k^t(x,y) = \exp(-t d(x,y))$ (exponential kernel) is given by
\begin{align*}
\widehat{\text{MMD}}^2 &= \frac{1}{n^2} \sum_{i,j\in [n]} \z_X(x_i, x_j) + \frac{1}{m^2} \sum_{i,j\in [m]} \z_Y(y_i, y_j) \\
&\quad - \frac{2}{nm} \sum_{i\in [n], j\in [m]} \z_{X \cup Y}(x_i, y_j),
\end{align*}
where $\z_X$, $\z_Y$, and $\z_{X\cup Y}$ denote the kernel matrices (the similarity matrices defined in Section~\ref{sec: preliminaries}) induced by $k^t$ on the corresponding point sets.
The magnitude distance, using the same kernel, is given by a related algebraic form, but depends on the inverse of the kernel matrix:
\begin{equation*}
    2\sum_{i,j=1}^{n+m} \z^{-1}_{X\cup Y}(z_i, z_j) - \sum_{i,j=1}^n \z^{-1}_X(x_i, x_j) - \sum_{i,j=1}^m \z^{-1}_Y(y_i, y_j),
\end{equation*}
where $\z^{-1}_X$, $\z^{-1}_Y$, and $\z^{-1}_{X\cup Y}$ are the inverses of kernel matrices above and $z_i, z_j$ in $\z^{-1}_{X\cup Y}(z_i, z_j)$ can be any two points from $X \cup Y$.
Magnitude distance and MMD differ fundamentally in how they aggregate spectral information.
MMD computes a quadratic form based on the kernel matrix itself ($\mathbf{1}^\top \z \mathbf{1} = \sum_i \lambda_i (\mathbf{1}^\top v_i)^2$). In high dimensions, for commonly used characteristic kernels such as the Gaussian or exponential kernel, the kernel matrix tends to have low effective rank, and most eigenvalues collapse to near zero, except for a few dominant directions. Therefore, MMD becomes dominated by the top eigen-directions and projects the distance between the samples onto a low-dimensional subspace corresponding to these few top eigen-directions.
In contrast, magnitude distance depends on the inverse kernel matrix, and the quadratic form $\mathbf{1}^\top \z^{-1} \mathbf{1} = \sum_i \lambda^{-1}_i (\mathbf{1}^\top v_i)^2$ is shaped by aggregating spectral information through inverse eigenvalue weighting. This prevents the distance from being dominated by the top eigen-directions and instead includes more directions corresponding to the remaining eigenvalues. Figure~\ref{fig:magdist_vs_mmd_wass_mean_std} shows that magnitude distance remains discriminative compared to empirical MMD distance in high dimensions. We note that the numerical scales of the normalized magnitude distance are not directly comparable with those of the Wasserstein distance or the MMD. Consequently, absolute values across different distances are not interpreted comparatively, and instead, we focus on the function's behavior as the dimension increases.

\begin{figure}[t]
\centering
\resizebox{0.85\linewidth}{!}{\input{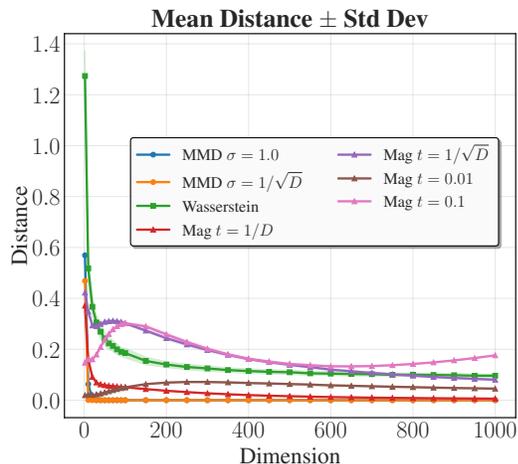}}
\caption{
     From two Gaussian distributions with identical covariance and a mean shift of $2$, we compute the empirical MMD and magnitude distance between $500$ samples from each distribution over $100$ independent trials. The plot shows a comparison between the mean of empirical MMD distance with Gaussian kernel bandwidth $\sigma=1$ and $\sigma=1/\sqrt{D}$, and normalized magnitude distance for fixed kernel scales $t \in \{0.01, 0.1\}$ and adaptive scales $t = 1/D$ and $t = 1/\sqrt{D}$. MMD in both settings rapidly collapses toward zero as the dimension increases. Wasserstein distance decreases with dimension but remains comparatively stable. Magnitude distance with $(t=1/D)$ exhibits mis-scaling in low dimensions, producing overly small distances. For fixed scaling or scaling $t = 1/\sqrt{D}$, magnitude distance remains stable across dimensions, with only gradual changes.}
\label{fig:magdist_vs_mmd_wass_mean_std}
\end{figure}

\begin{figure}[t]
\centering
\resizebox{0.85\linewidth}{!}{\input{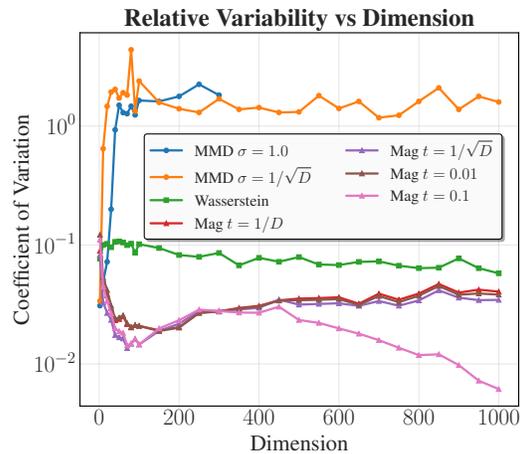}}
\caption{
    From two Gaussian distributions with identical covariance and a mean shift of $2$, we compute empirical distances between $500$ samples from each distribution over $100$ independent trials. The plot shows the coefficient of variation (log-scale) of Wasserstein distance, MMD with Gaussian kernel bandwidths $\sigma = 1$ and $\sigma = 1/\sqrt{D}$, and normalized magnitude distance with fixed kernel scales $t \in \{0.01, 0.1\}$ and adaptive scales $t = 1/D$ and $t = 1/\sqrt{D}$. 
    MMD shows unstable behavior across kernel choices, and Wasserstein distance has consistently higher relative variability than magnitude distance. Moreover, magnitude-distance adaptive scaling maintains lower relative variability across dimensions while avoiding collapse to $0$.
    For fixed kernel scales, magnitude distance can initially show behavior similar to adaptive scaling; however, the outcome depends on the chosen scale. Larger fixed scale $t = 0.1$ enters an over-localized regime at sufficiently high dimensions, where off-diagonal kernel similarities vanish, and the distance collapses abruptly. Smaller fixed scale $t = 0.01$ preserves stability over a wider range of dimensions, although they are also expected to eventually fail as dimensionality continues to increase.
}
\label{fig:magdist_vs_mmd_wass_cv}
\end{figure}

While the mean distance indicates the expected separation between the distributions, it does not capture the reliability of this estimate. In high-dimensional spaces, empirical distances can exhibit substantial variability across samples, even when their expectations remain stable. To quantify this effect, we consider the coefficient of variation (CV), defined as the ratio of the standard deviation to the mean. 
Figure~\ref{fig:magdist_vs_mmd_wass_cv} reports the coefficient of variation of empirical distances as a function of dimension. Magnitude distance with adaptive scaling regimes ($t = D$, $t= \frac{1}{D}$), exhibits consistently lower relative variability than Wasserstein and MMD. Since the coefficient of variation is scale-invariant, it is unaffected by normalization or unit differences and therefore provides a fair comparison.
In Figures~\ref{fig:magdist_vs_mmd_wass_mean_std} and~\ref{fig:magdist_vs_mmd_wass_cv}, we report results using the sliced Wasserstein distance, a commonly used variation
of the Wasserstein distance in high-dimensional settings~\cite{deshpande2018generative}.

\subsection{Robustness to Outliers}\label{sec:outlier}
In this section, we show magnitude distance is robust to outliers. Firstly, we prove that under natural assumptions, the distance is bounded. 

\begin{restatable}{theorem}{globalboundedness}\label{theo:global_boundedness}
Let $X, Y \subset \mathbb{R}^D$ be finite sets with nonnegative weighting vectors of $X,Y$, and $X \cup Y$. Then, we have:
\begin{align}
 0 \leq d^t_{\mathrm{Mag}}(X,Y) \le 2 (\abs{X \cup Y}).   
\end{align}
where $\abs{X}$ and $\abs{Y}$ denote the number of points in $X$ and $Y$ respectively.
\end{restatable}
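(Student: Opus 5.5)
The lower bound $0 \le d^t_{\mathrm{Mag}}(X,Y)$ is exactly the non-negativity part of Theorem~\ref{theo:metric_axioms}, so the plan concentrates on the upper bound. We write $Z = X\cup Y$ and use $d^t_{\mathrm{Mag}}(X,Y) \le 2\Mag{Z}{t}$. This holds because, under the hypothesis, $\mathbf{w}^t_X$ and $\mathbf{w}^t_Y$ are nonnegative, so $\Mag{X}{t} = \one^\ast \mathbf{w}^t_X \ge 0$ and likewise $\Mag{Y}{t}\ge 0$. Subtracting them can only decrease the expression. It therefore suffices to prove $\Mag{Z}{t} \le \abs{Z}$ whenever the weighting of $tZ$ is nonnegative.

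For this, I would bound each weight individually. Fix $z_i \in Z$. The weighting equation gives
\begin{equation*}
1 = \sum_{j} e^{-t\,d(z_i,z_j)}\,\mathbf{w}^t_Z(z_j) = \mathbf{w}^t_Z(z_i) + \sum_{j\neq i} e^{-t\,d(z_i,z_j)}\,\mathbf{w}^t_Z(z_j).
\end{equation*}
Every term in the second sum is nonnegative, since the kernel entries are positive and the weights are nonnegative by assumption. Hence $0 \le \mathbf{w}^t_Z(z_i) \le 1$. Summing over the $\abs{Z}$ points gives $\Mag{Z}{t} \le \abs{Z}$, and so $d^t_{\mathrm{Mag}}(X,Y) \le 2\abs{X\cup Y}$.

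One technical point needs care: $Z$ may contain points common to $X$ and $Y$, which could look like duplicates. Theorem~\ref{theo:finite_euclidean} handles this, since the magnitude is computed on the distinct elements. Treating $X\cup Y$ as a set, $\zeta_{tZ}$ is positive definite and the unique weighting is the one assumed nonnegative. I expect no real obstacle; the only delicate step is noticing that nonnegativity of the weightings for $X$ and $Y$ is exactly what allows their magnitudes to be dropped. A sharper bound, for instance using $\Mag{Z}{t}\ge \max(\Mag{X}{t},\Mag{Y}{t})$, is not needed for the stated inequality.
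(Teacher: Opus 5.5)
Your proposal is correct and follows the paper's route: drop $\Mag{X}{t},\Mag{Y}{t}\ge 0$ to get $d^t_{\mathrm{Mag}}(X,Y)\le 2\Mag{X\cup Y}{t}$, then bound $\Mag{X\cup Y}{t}$ by $\abs{X\cup Y}$. The only difference is that the paper cites Leinster's Lemma~2.4.12 and Proposition~2.2.6 for these two facts, whereas you derive them directly from the weighting equations (each weight of $X\cup Y$ is at most $1$ because the off-diagonal terms are nonnegative), which shows exactly where the nonnegativity hypothesis is used.
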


Also, nonnegative weighting vectors are guaranteed in different finite metric spaces, such as all subsets of metric spaces when scaled up sufficiently, i.e., $t\gg 0$, and also $\mathbb{R}$, for which this global boundedness exists for any scaling parameter. A direct consequence of Theorem~\ref{theo:global_boundedness} is that the distance's sensitivity to adding or adjusting samples is also bounded. In contrast, most distances, such as the Wasserstein distance, are extremely sensitive to outliers, as their sensitivity to adding noise is not bounded, and a single outlier can arbitrarily increase the distance.

To demonstrate this outlier robustness, we generate two datasets, $B$ and $Y$, by sampling from normal distributions with different means. These are represented by blue points and yellow points, respectively. We also generate a third set of points, $Y^\prime$, with much higher dispersion. These datasets are shown in Figure \ref{fig:outlier_robustness}. We consider the magnitude distance and Wasserstein distance for two cases: the distance between $B$ and $Y$, and the distance between $B$ and $Y^\ast=Y \cup Y^\prime$. The relative change in magnitude distance with $t = 20$ and $t = 5$ are
$6.85\%$ and $10.29\%$ respectively, compared to $17.61\%$ for Wasserstein distance. Even for small values of $t$, where the magnitude distance is more sensitive to local changes, the relative change in magnitude distance due to the addition of outliers remains smaller than that in the Wasserstein distance. Moreover, this relative change decreases as $t$ increases.

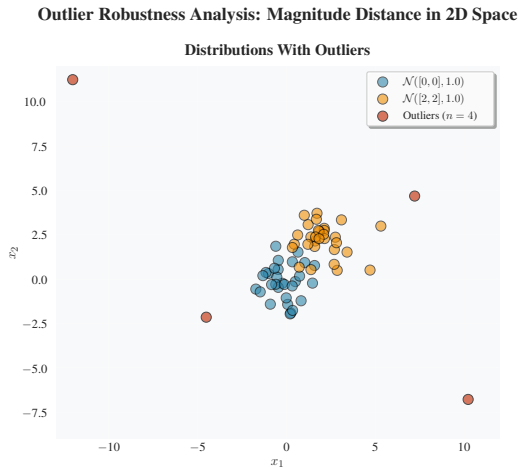
\begin{figure}[t]
\centering
\resizebox{0.95\linewidth}{!}{\input{PGF/Outlier_robustness_comparison_2d.pgf}}
\caption{
    Outlier robustness in 2D with the baseline dataset $B \sim \mathcal{N}([0, 0], 1)$ (blue points),
    and set $Y \sim \mathcal{N}([2, 2], 1)$ (yellow points), with noisy variant $Y^{*}$, incorporating the outliers, $Y^\prime$ (red points).}
\label{fig:outlier_robustness}
\end{figure}

To demonstrate robustness to outliers, we consider the classical Huber contamination model~\cite{huber1992robust}, in which most of the data comes from a “clean” distribution $P$ (e.g., Gaussian), while a small fraction $\epsilon$ is arbitrarily corrupted, coming from distribution $R$. 
The resulting data distribution is $Q = (1-\epsilon)P + \epsilon R$ where a small fraction of samples is displaced to increasing distances while the majority of the distribution remains unchanged. 
As the outlier radius $R$ grows, the Wasserstein distance increases proportionally, reflecting the described sensitivity to extreme points. 
In contrast, Figure~\ref{fig:outlier_robustness_plot} shows that magnitude distance with appropriate tuning of the scaling parameter remains stably close to zero even as the outlier radius increases up to 1000.
This robustness behavior of normalized magnitude distance is also illustrated in the Appendix~\ref{sec:appendix_outlier}.

\begin{figure}[t]
\centering
\resizebox{0.90\linewidth}{!}{\input{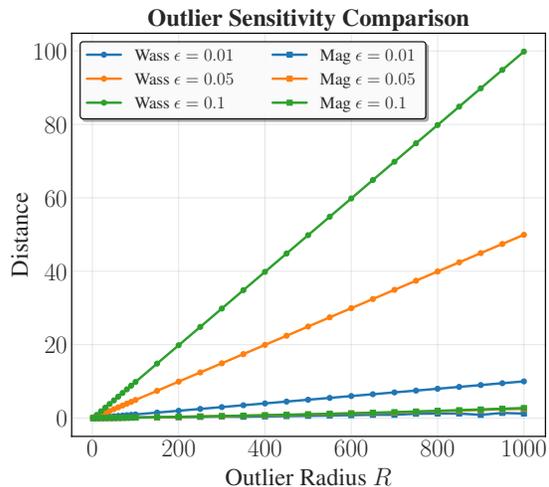}}
\caption{Outlier robustness under Huber contamination.
We compare empirical Wasserstein distance and magnitude distance with $t=0.001$ between $500$ samples drawn from a contaminated distribution $Q = (1-\epsilon)P + \epsilon R$ and $500$ from the clean distribution $P$.
The plot shows three contamination levels, $\epsilon \in \{ 0.01, 0.05, 0.1 \}$.
As the outlier radius increases, the Wasserstein distance grows, while magnitude distance remains close to $0$.
}
\label{fig:outlier_robustness_plot}
\end{figure}

\subsection{Computational Cost}
The computational cost is determined mainly by the cost of computing $\Mag{X\cup Y}{t}$, which is dominated by the cost of matrix inversion. The best known bound for matrix inversion and multiplication is $\Omega(n^2\log n)$~\citep{raz2002complexity}, but in practice, the method in \citet{strassen1969gaussian} (complexity $O(n^{2.81})$) and similar methods are often used. Faster practical approximations for magnitude computation were studied by \citet{andreeva2025approximating}. It was shown that while  magnitude is not submodular, greedy maximization based on \citet{nemhauser1978analysis} works well in practice. Further speedups can be achieved by using discrete center hierarchies to subsample the point set~\citep{andreeva2025approximating}.

%% file: PGF/Outlier_robustness_comparison_2d.pgf
\begingroup%
\makeatletter%
\begin{pgfpicture}%
\pgfpathrectangle{\pgfpointorigin}{\pgfqpoint{7.235265in}{5.915591in}}%
\pgfusepath{use as bounding box, clip}%
\begin{pgfscope}%
\pgfsetbuttcap%
\pgfsetmiterjoin%
\definecolor{currentfill}{rgb}{1.000000,1.000000,1.000000}%
\pgfsetfillcolor{currentfill}%
\pgfsetlinewidth{0.000000pt}%
\definecolor{currentstroke}{rgb}{1.000000,1.000000,1.000000}%
\pgfsetstrokecolor{currentstroke}%
\pgfsetdash{}{0pt}%
\pgfpathmoveto{\pgfqpoint{0.000000in}{0.000000in}}%
\pgfpathlineto{\pgfqpoint{7.235265in}{0.000000in}}%
\pgfpathlineto{\pgfqpoint{7.235265in}{5.915591in}}%
\pgfpathlineto{\pgfqpoint{0.000000in}{5.915591in}}%
\pgfpathlineto{\pgfqpoint{0.000000in}{0.000000in}}%
\pgfpathclose%
\pgfusepath{fill}%
\end{pgfscope}%
\begin{pgfscope}%
\pgfsetbuttcap%
\pgfsetmiterjoin%
\definecolor{currentfill}{rgb}{0.972549,0.976471,0.980392}%
\pgfsetfillcolor{currentfill}%
\pgfsetlinewidth{0.000000pt}%
\definecolor{currentstroke}{rgb}{0.000000,0.000000,0.000000}%
\pgfsetstrokecolor{currentstroke}%
\pgfsetstrokeopacity{0.000000}%
\pgfsetdash{}{0pt}%
\pgfpathmoveto{\pgfqpoint{0.895046in}{0.526757in}}%
\pgfpathlineto{\pgfqpoint{6.340218in}{0.526757in}}%
\pgfpathlineto{\pgfqpoint{6.340218in}{5.100702in}}%
\pgfpathlineto{\pgfqpoint{0.895046in}{5.100702in}}%
\pgfpathlineto{\pgfqpoint{0.895046in}{0.526757in}}%
\pgfpathclose%
\pgfusepath{fill}%
\end{pgfscope}%
\begin{pgfscope}%
\pgfpathrectangle{\pgfqpoint{0.895046in}{0.526757in}}{\pgfqpoint{5.445172in}{4.573944in}}%
\pgfusepath{clip}%
\pgfsetroundcap%
\pgfsetroundjoin%
\pgfsetlinewidth{0.501875pt}%
\definecolor{currentstroke}{rgb}{1.000000,1.000000,1.000000}%
\pgfsetstrokecolor{currentstroke}%
\pgfsetstrokeopacity{0.300000}%
\pgfsetdash{}{0pt}%
\pgfpathmoveto{\pgfqpoint{1.548467in}{0.526757in}}%
\pgfpathlineto{\pgfqpoint{1.548467in}{5.100702in}}%
\pgfusepath{stroke}%
\end{pgfscope}%
\begin{pgfscope}%
\definecolor{textcolor}{rgb}{0.150000,0.150000,0.150000}%
\pgfsetstrokecolor{textcolor}%
\pgfsetfillcolor{textcolor}%
\pgftext[x=1.548467in,y=0.478146in,,top]{\color{textcolor}\rmfamily\fontsize{11.000000}{13.200000}\selectfont \ensuremath{-}10}%
\end{pgfscope}%
\begin{pgfscope}%
\pgfpathrectangle{\pgfqpoint{0.895046in}{0.526757in}}{\pgfqpoint{5.445172in}{4.573944in}}%
\pgfusepath{clip}%
\pgfsetroundcap%
\pgfsetroundjoin%
\pgfsetlinewidth{0.501875pt}%
\definecolor{currentstroke}{rgb}{1.000000,1.000000,1.000000}%
\pgfsetstrokecolor{currentstroke}%
\pgfsetstrokeopacity{0.300000}%
\pgfsetdash{}{0pt}%
\pgfpathmoveto{\pgfqpoint{2.637501in}{0.526757in}}%
\pgfpathlineto{\pgfqpoint{2.637501in}{5.100702in}}%
\pgfusepath{stroke}%
\end{pgfscope}%
\begin{pgfscope}%
\definecolor{textcolor}{rgb}{0.150000,0.150000,0.150000}%
\pgfsetstrokecolor{textcolor}%
\pgfsetfillcolor{textcolor}%
\pgftext[x=2.637501in,y=0.478146in,,top]{\color{textcolor}\rmfamily\fontsize{11.000000}{13.200000}\selectfont \ensuremath{-}5}%
\end{pgfscope}%
\begin{pgfscope}%
\pgfpathrectangle{\pgfqpoint{0.895046in}{0.526757in}}{\pgfqpoint{5.445172in}{4.573944in}}%
\pgfusepath{clip}%
\pgfsetroundcap%
\pgfsetroundjoin%
\pgfsetlinewidth{0.501875pt}%
\definecolor{currentstroke}{rgb}{1.000000,1.000000,1.000000}%
\pgfsetstrokecolor{currentstroke}%
\pgfsetstrokeopacity{0.300000}%
\pgfsetdash{}{0pt}%
\pgfpathmoveto{\pgfqpoint{3.726536in}{0.526757in}}%
\pgfpathlineto{\pgfqpoint{3.726536in}{5.100702in}}%
\pgfusepath{stroke}%
\end{pgfscope}%
\begin{pgfscope}%
\definecolor{textcolor}{rgb}{0.150000,0.150000,0.150000}%
\pgfsetstrokecolor{textcolor}%
\pgfsetfillcolor{textcolor}%
\pgftext[x=3.726536in,y=0.478146in,,top]{\color{textcolor}\rmfamily\fontsize{11.000000}{13.200000}\selectfont 0}%
\end{pgfscope}%
\begin{pgfscope}%
\pgfpathrectangle{\pgfqpoint{0.895046in}{0.526757in}}{\pgfqpoint{5.445172in}{4.573944in}}%
\pgfusepath{clip}%
\pgfsetroundcap%
\pgfsetroundjoin%
\pgfsetlinewidth{0.501875pt}%
\definecolor{currentstroke}{rgb}{1.000000,1.000000,1.000000}%
\pgfsetstrokecolor{currentstroke}%
\pgfsetstrokeopacity{0.300000}%
\pgfsetdash{}{0pt}%
\pgfpathmoveto{\pgfqpoint{4.815570in}{0.526757in}}%
\pgfpathlineto{\pgfqpoint{4.815570in}{5.100702in}}%
\pgfusepath{stroke}%
\end{pgfscope}%
\begin{pgfscope}%
\definecolor{textcolor}{rgb}{0.150000,0.150000,0.150000}%
\pgfsetstrokecolor{textcolor}%
\pgfsetfillcolor{textcolor}%
\pgftext[x=4.815570in,y=0.478146in,,top]{\color{textcolor}\rmfamily\fontsize{11.000000}{13.200000}\selectfont 5}%
\end{pgfscope}%
\begin{pgfscope}%
\pgfpathrectangle{\pgfqpoint{0.895046in}{0.526757in}}{\pgfqpoint{5.445172in}{4.573944in}}%
\pgfusepath{clip}%
\pgfsetroundcap%
\pgfsetroundjoin%
\pgfsetlinewidth{0.501875pt}%
\definecolor{currentstroke}{rgb}{1.000000,1.000000,1.000000}%
\pgfsetstrokecolor{currentstroke}%
\pgfsetstrokeopacity{0.300000}%
\pgfsetdash{}{0pt}%
\pgfpathmoveto{\pgfqpoint{5.904605in}{0.526757in}}%
\pgfpathlineto{\pgfqpoint{5.904605in}{5.100702in}}%
\pgfusepath{stroke}%
\end{pgfscope}%
\begin{pgfscope}%
\definecolor{textcolor}{rgb}{0.150000,0.150000,0.150000}%
\pgfsetstrokecolor{textcolor}%
\pgfsetfillcolor{textcolor}%
\pgftext[x=5.904605in,y=0.478146in,,top]{\color{textcolor}\rmfamily\fontsize{11.000000}{13.200000}\selectfont 10}%
\end{pgfscope}%
\begin{pgfscope}%
\definecolor{textcolor}{rgb}{0.150000,0.150000,0.150000}%
\pgfsetstrokecolor{textcolor}%
\pgfsetfillcolor{textcolor}%
\pgftext[x=3.617632in,y=0.274737in,,top]{\color{textcolor}\rmfamily\fontsize{13.000000}{15.600000}\bfseries\selectfont \(\displaystyle x_1\)}%
\end{pgfscope}%
\begin{pgfscope}%
\pgfpathrectangle{\pgfqpoint{0.895046in}{0.526757in}}{\pgfqpoint{5.445172in}{4.573944in}}%
\pgfusepath{clip}%
\pgfsetroundcap%
\pgfsetroundjoin%
\pgfsetlinewidth{0.501875pt}%
\definecolor{currentstroke}{rgb}{1.000000,1.000000,1.000000}%
\pgfsetstrokecolor{currentstroke}%
\pgfsetstrokeopacity{0.300000}%
\pgfsetdash{}{0pt}%
\pgfpathmoveto{\pgfqpoint{0.895046in}{0.853468in}}%
\pgfpathlineto{\pgfqpoint{6.340218in}{0.853468in}}%
\pgfusepath{stroke}%
\end{pgfscope}%
\begin{pgfscope}%
\definecolor{textcolor}{rgb}{0.150000,0.150000,0.150000}%
\pgfsetstrokecolor{textcolor}%
\pgfsetfillcolor{textcolor}%
\pgftext[x=0.485180in, y=0.795430in, left, base]{\color{textcolor}\rmfamily\fontsize{11.000000}{13.200000}\selectfont \ensuremath{-}7.5}%
\end{pgfscope}%
\begin{pgfscope}%
\pgfpathrectangle{\pgfqpoint{0.895046in}{0.526757in}}{\pgfqpoint{5.445172in}{4.573944in}}%
\pgfusepath{clip}%
\pgfsetroundcap%
\pgfsetroundjoin%
\pgfsetlinewidth{0.501875pt}%
\definecolor{currentstroke}{rgb}{1.000000,1.000000,1.000000}%
\pgfsetstrokecolor{currentstroke}%
\pgfsetstrokeopacity{0.300000}%
\pgfsetdash{}{0pt}%
\pgfpathmoveto{\pgfqpoint{0.895046in}{1.397985in}}%
\pgfpathlineto{\pgfqpoint{6.340218in}{1.397985in}}%
\pgfusepath{stroke}%
\end{pgfscope}%
\begin{pgfscope}%
\definecolor{textcolor}{rgb}{0.150000,0.150000,0.150000}%
\pgfsetstrokecolor{textcolor}%
\pgfsetfillcolor{textcolor}%
\pgftext[x=0.485180in, y=1.339947in, left, base]{\color{textcolor}\rmfamily\fontsize{11.000000}{13.200000}\selectfont \ensuremath{-}5.0}%
\end{pgfscope}%
\begin{pgfscope}%
\pgfpathrectangle{\pgfqpoint{0.895046in}{0.526757in}}{\pgfqpoint{5.445172in}{4.573944in}}%
\pgfusepath{clip}%
\pgfsetroundcap%
\pgfsetroundjoin%
\pgfsetlinewidth{0.501875pt}%
\definecolor{currentstroke}{rgb}{1.000000,1.000000,1.000000}%
\pgfsetstrokecolor{currentstroke}%
\pgfsetstrokeopacity{0.300000}%
\pgfsetdash{}{0pt}%
\pgfpathmoveto{\pgfqpoint{0.895046in}{1.942502in}}%
\pgfpathlineto{\pgfqpoint{6.340218in}{1.942502in}}%
\pgfusepath{stroke}%
\end{pgfscope}%
\begin{pgfscope}%
\definecolor{textcolor}{rgb}{0.150000,0.150000,0.150000}%
\pgfsetstrokecolor{textcolor}%
\pgfsetfillcolor{textcolor}%
\pgftext[x=0.485180in, y=1.884464in, left, base]{\color{textcolor}\rmfamily\fontsize{11.000000}{13.200000}\selectfont \ensuremath{-}2.5}%
\end{pgfscope}%
\begin{pgfscope}%
\pgfpathrectangle{\pgfqpoint{0.895046in}{0.526757in}}{\pgfqpoint{5.445172in}{4.573944in}}%
\pgfusepath{clip}%
\pgfsetroundcap%
\pgfsetroundjoin%
\pgfsetlinewidth{0.501875pt}%
\definecolor{currentstroke}{rgb}{1.000000,1.000000,1.000000}%
\pgfsetstrokecolor{currentstroke}%
\pgfsetstrokeopacity{0.300000}%
\pgfsetdash{}{0pt}%
\pgfpathmoveto{\pgfqpoint{0.895046in}{2.487019in}}%
\pgfpathlineto{\pgfqpoint{6.340218in}{2.487019in}}%
\pgfusepath{stroke}%
\end{pgfscope}%
\begin{pgfscope}%
\definecolor{textcolor}{rgb}{0.150000,0.150000,0.150000}%
\pgfsetstrokecolor{textcolor}%
\pgfsetfillcolor{textcolor}%
\pgftext[x=0.603468in, y=2.428982in, left, base]{\color{textcolor}\rmfamily\fontsize{11.000000}{13.200000}\selectfont 0.0}%
\end{pgfscope}%
\begin{pgfscope}%
\pgfpathrectangle{\pgfqpoint{0.895046in}{0.526757in}}{\pgfqpoint{5.445172in}{4.573944in}}%
\pgfusepath{clip}%
\pgfsetroundcap%
\pgfsetroundjoin%
\pgfsetlinewidth{0.501875pt}%
\definecolor{currentstroke}{rgb}{1.000000,1.000000,1.000000}%
\pgfsetstrokecolor{currentstroke}%
\pgfsetstrokeopacity{0.300000}%
\pgfsetdash{}{0pt}%
\pgfpathmoveto{\pgfqpoint{0.895046in}{3.031536in}}%
\pgfpathlineto{\pgfqpoint{6.340218in}{3.031536in}}%
\pgfusepath{stroke}%
\end{pgfscope}%
\begin{pgfscope}%
\definecolor{textcolor}{rgb}{0.150000,0.150000,0.150000}%
\pgfsetstrokecolor{textcolor}%
\pgfsetfillcolor{textcolor}%
\pgftext[x=0.603468in, y=2.973499in, left, base]{\color{textcolor}\rmfamily\fontsize{11.000000}{13.200000}\selectfont 2.5}%
\end{pgfscope}%
\begin{pgfscope}%
\pgfpathrectangle{\pgfqpoint{0.895046in}{0.526757in}}{\pgfqpoint{5.445172in}{4.573944in}}%
\pgfusepath{clip}%
\pgfsetroundcap%
\pgfsetroundjoin%
\pgfsetlinewidth{0.501875pt}%
\definecolor{currentstroke}{rgb}{1.000000,1.000000,1.000000}%
\pgfsetstrokecolor{currentstroke}%
\pgfsetstrokeopacity{0.300000}%
\pgfsetdash{}{0pt}%
\pgfpathmoveto{\pgfqpoint{0.895046in}{3.576054in}}%
\pgfpathlineto{\pgfqpoint{6.340218in}{3.576054in}}%
\pgfusepath{stroke}%
\end{pgfscope}%
\begin{pgfscope}%
\definecolor{textcolor}{rgb}{0.150000,0.150000,0.150000}%
\pgfsetstrokecolor{textcolor}%
\pgfsetfillcolor{textcolor}%
\pgftext[x=0.603468in, y=3.518016in, left, base]{\color{textcolor}\rmfamily\fontsize{11.000000}{13.200000}\selectfont 5.0}%
\end{pgfscope}%
\begin{pgfscope}%
\pgfpathrectangle{\pgfqpoint{0.895046in}{0.526757in}}{\pgfqpoint{5.445172in}{4.573944in}}%
\pgfusepath{clip}%
\pgfsetroundcap%
\pgfsetroundjoin%
\pgfsetlinewidth{0.501875pt}%
\definecolor{currentstroke}{rgb}{1.000000,1.000000,1.000000}%
\pgfsetstrokecolor{currentstroke}%
\pgfsetstrokeopacity{0.300000}%
\pgfsetdash{}{0pt}%
\pgfpathmoveto{\pgfqpoint{0.895046in}{4.120571in}}%
\pgfpathlineto{\pgfqpoint{6.340218in}{4.120571in}}%
\pgfusepath{stroke}%
\end{pgfscope}%
\begin{pgfscope}%
\definecolor{textcolor}{rgb}{0.150000,0.150000,0.150000}%
\pgfsetstrokecolor{textcolor}%
\pgfsetfillcolor{textcolor}%
\pgftext[x=0.603468in, y=4.062533in, left, base]{\color{textcolor}\rmfamily\fontsize{11.000000}{13.200000}\selectfont 7.5}%
\end{pgfscope}%
\begin{pgfscope}%
\pgfpathrectangle{\pgfqpoint{0.895046in}{0.526757in}}{\pgfqpoint{5.445172in}{4.573944in}}%
\pgfusepath{clip}%
\pgfsetroundcap%
\pgfsetroundjoin%
\pgfsetlinewidth{0.501875pt}%
\definecolor{currentstroke}{rgb}{1.000000,1.000000,1.000000}%
\pgfsetstrokecolor{currentstroke}%
\pgfsetstrokeopacity{0.300000}%
\pgfsetdash{}{0pt}%
\pgfpathmoveto{\pgfqpoint{0.895046in}{4.665088in}}%
\pgfpathlineto{\pgfqpoint{6.340218in}{4.665088in}}%
\pgfusepath{stroke}%
\end{pgfscope}%
\begin{pgfscope}%
\definecolor{textcolor}{rgb}{0.150000,0.150000,0.150000}%
\pgfsetstrokecolor{textcolor}%
\pgfsetfillcolor{textcolor}%
\pgftext[x=0.506266in, y=4.607050in, left, base]{\color{textcolor}\rmfamily\fontsize{11.000000}{13.200000}\selectfont 10.0}%
\end{pgfscope}%
\begin{pgfscope}%
\definecolor{textcolor}{rgb}{0.150000,0.150000,0.150000}%
\pgfsetstrokecolor{textcolor}%
\pgfsetfillcolor{textcolor}%
\pgftext[x=0.429625in,y=2.813730in,,bottom,rotate=90.000000]{\color{textcolor}\rmfamily\fontsize{13.000000}{15.600000}\bfseries\selectfont \(\displaystyle x_2\)}%
\end{pgfscope}%
\begin{pgfscope}%
\pgfpathrectangle{\pgfqpoint{0.895046in}{0.526757in}}{\pgfqpoint{5.445172in}{4.573944in}}%
\pgfusepath{clip}%
\pgfsetbuttcap%
\pgfsetroundjoin%
\definecolor{currentfill}{rgb}{0.180392,0.525490,0.670588}%
\pgfsetfillcolor{currentfill}%
\pgfsetfillopacity{0.600000}%
\pgfsetlinewidth{0.501875pt}%
\definecolor{currentstroke}{rgb}{0.000000,0.000000,0.000000}%
\pgfsetstrokecolor{currentstroke}%
\pgfsetstrokeopacity{0.600000}%
\pgfsetdash{}{0pt}%
\pgfsys@defobject{currentmarker}{\pgfqpoint{-0.062113in}{-0.062113in}}{\pgfqpoint{0.062113in}{0.062113in}}{%
\pgfpathmoveto{\pgfqpoint{0.000000in}{-0.062113in}}%
\pgfpathcurveto{\pgfqpoint{0.016473in}{-0.062113in}}{\pgfqpoint{0.032273in}{-0.055568in}}{\pgfqpoint{0.043921in}{-0.043921in}}%
\pgfpathcurveto{\pgfqpoint{0.055568in}{-0.032273in}}{\pgfqpoint{0.062113in}{-0.016473in}}{\pgfqpoint{0.062113in}{0.000000in}}%
\pgfpathcurveto{\pgfqpoint{0.062113in}{0.016473in}}{\pgfqpoint{0.055568in}{0.032273in}}{\pgfqpoint{0.043921in}{0.043921in}}%
\pgfpathcurveto{\pgfqpoint{0.032273in}{0.055568in}}{\pgfqpoint{0.016473in}{0.062113in}}{\pgfqpoint{0.000000in}{0.062113in}}%
\pgfpathcurveto{\pgfqpoint{-0.016473in}{0.062113in}}{\pgfqpoint{-0.032273in}{0.055568in}}{\pgfqpoint{-0.043921in}{0.043921in}}%
\pgfpathcurveto{\pgfqpoint{-0.055568in}{0.032273in}}{\pgfqpoint{-0.062113in}{0.016473in}}{\pgfqpoint{-0.062113in}{0.000000in}}%
\pgfpathcurveto{\pgfqpoint{-0.062113in}{-0.016473in}}{\pgfqpoint{-0.055568in}{-0.032273in}}{\pgfqpoint{-0.043921in}{-0.043921in}}%
\pgfpathcurveto{\pgfqpoint{-0.032273in}{-0.055568in}}{\pgfqpoint{-0.016473in}{-0.062113in}}{\pgfqpoint{0.000000in}{-0.062113in}}%
\pgfpathlineto{\pgfqpoint{0.000000in}{-0.062113in}}%
\pgfpathclose%
\pgfusepath{stroke,fill}%
}%
\begin{pgfscope}%
\pgfsys@transformshift{3.834724in}{2.456904in}%
\pgfsys@useobject{currentmarker}{}%
\end{pgfscope}%
\begin{pgfscope}%
\pgfsys@transformshift{3.867607in}{2.818746in}%
\pgfsys@useobject{currentmarker}{}%
\end{pgfscope}%
\begin{pgfscope}%
\pgfsys@transformshift{3.675536in}{2.436023in}%
\pgfsys@useobject{currentmarker}{}%
\end{pgfscope}%
\begin{pgfscope}%
\pgfsys@transformshift{4.070499in}{2.654172in}%
\pgfsys@useobject{currentmarker}{}%
\end{pgfscope}%
\begin{pgfscope}%
\pgfsys@transformshift{3.624281in}{2.605193in}%
\pgfsys@useobject{currentmarker}{}%
\end{pgfscope}%
\begin{pgfscope}%
\pgfsys@transformshift{3.625600in}{2.385580in}%
\pgfsys@useobject{currentmarker}{}%
\end{pgfscope}%
\begin{pgfscope}%
\pgfsys@transformshift{3.779237in}{2.070294in}%
\pgfsys@useobject{currentmarker}{}%
\end{pgfscope}%
\begin{pgfscope}%
\pgfsys@transformshift{3.350837in}{2.364549in}%
\pgfsys@useobject{currentmarker}{}%
\end{pgfscope}%
\begin{pgfscope}%
\pgfsys@transformshift{3.505934in}{2.555464in}%
\pgfsys@useobject{currentmarker}{}%
\end{pgfscope}%
\begin{pgfscope}%
\pgfsys@transformshift{3.528762in}{2.179410in}%
\pgfsys@useobject{currentmarker}{}%
\end{pgfscope}%
\begin{pgfscope}%
\pgfsys@transformshift{4.045764in}{2.437844in}%
\pgfsys@useobject{currentmarker}{}%
\end{pgfscope}%
\begin{pgfscope}%
\pgfsys@transformshift{3.741244in}{2.176699in}%
\pgfsys@useobject{currentmarker}{}%
\end{pgfscope}%
\begin{pgfscope}%
\pgfsys@transformshift{3.607965in}{2.511179in}%
\pgfsys@useobject{currentmarker}{}%
\end{pgfscope}%
\begin{pgfscope}%
\pgfsys@transformshift{3.475841in}{2.568849in}%
\pgfsys@useobject{currentmarker}{}%
\end{pgfscope}%
\begin{pgfscope}%
\pgfsys@transformshift{3.595713in}{2.423486in}%
\pgfsys@useobject{currentmarker}{}%
\end{pgfscope}%
\begin{pgfscope}%
\pgfsys@transformshift{3.595480in}{2.890458in}%
\pgfsys@useobject{currentmarker}{}%
\end{pgfscope}%
\begin{pgfscope}%
\pgfsys@transformshift{3.723596in}{2.256643in}%
\pgfsys@useobject{currentmarker}{}%
\end{pgfscope}%
\begin{pgfscope}%
\pgfsys@transformshift{3.905692in}{2.221111in}%
\pgfsys@useobject{currentmarker}{}%
\end{pgfscope}%
\begin{pgfscope}%
\pgfsys@transformshift{3.772028in}{2.060190in}%
\pgfsys@useobject{currentmarker}{}%
\end{pgfscope}%
\begin{pgfscope}%
\pgfsys@transformshift{3.437248in}{2.529897in}%
\pgfsys@useobject{currentmarker}{}%
\end{pgfscope}%
\begin{pgfscope}%
\pgfsys@transformshift{3.887379in}{2.524344in}%
\pgfsys@useobject{currentmarker}{}%
\end{pgfscope}%
\begin{pgfscope}%
\pgfsys@transformshift{3.701347in}{2.421437in}%
\pgfsys@useobject{currentmarker}{}%
\end{pgfscope}%
\begin{pgfscope}%
\pgfsys@transformshift{3.404504in}{2.330232in}%
\pgfsys@useobject{currentmarker}{}%
\end{pgfscope}%
\begin{pgfscope}%
\pgfsys@transformshift{3.626206in}{2.717268in}%
\pgfsys@useobject{currentmarker}{}%
\end{pgfscope}%
\begin{pgfscope}%
\pgfsys@transformshift{3.801378in}{2.103017in}%
\pgfsys@useobject{currentmarker}{}%
\end{pgfscope}%
\begin{pgfscope}%
\pgfsys@transformshift{3.797124in}{2.403146in}%
\pgfsys@useobject{currentmarker}{}%
\end{pgfscope}%
\begin{pgfscope}%
\pgfsys@transformshift{3.579098in}{2.620247in}%
\pgfsys@useobject{currentmarker}{}%
\end{pgfscope}%
\begin{pgfscope}%
\pgfsys@transformshift{3.951095in}{2.689858in}%
\pgfsys@useobject{currentmarker}{}%
\end{pgfscope}%
\begin{pgfscope}%
\pgfsys@transformshift{3.543748in}{2.419671in}%
\pgfsys@useobject{currentmarker}{}%
\end{pgfscope}%
\begin{pgfscope}%
\pgfsys@transformshift{3.798687in}{2.699500in}%
\pgfsys@useobject{currentmarker}{}%
\end{pgfscope}%
\end{pgfscope}%
\begin{pgfscope}%
\pgfpathrectangle{\pgfqpoint{0.895046in}{0.526757in}}{\pgfqpoint{5.445172in}{4.573944in}}%
\pgfusepath{clip}%
\pgfsetbuttcap%
\pgfsetroundjoin%
\definecolor{currentfill}{rgb}{0.945098,0.560784,0.003922}%
\pgfsetfillcolor{currentfill}%
\pgfsetfillopacity{0.600000}%
\pgfsetlinewidth{0.501875pt}%
\definecolor{currentstroke}{rgb}{0.000000,0.000000,0.000000}%
\pgfsetstrokecolor{currentstroke}%
\pgfsetstrokeopacity{0.600000}%
\pgfsetdash{}{0pt}%
\pgfsys@defobject{currentmarker}{\pgfqpoint{-0.062113in}{-0.062113in}}{\pgfqpoint{0.062113in}{0.062113in}}{%
\pgfpathmoveto{\pgfqpoint{0.000000in}{-0.062113in}}%
\pgfpathcurveto{\pgfqpoint{0.016473in}{-0.062113in}}{\pgfqpoint{0.032273in}{-0.055568in}}{\pgfqpoint{0.043921in}{-0.043921in}}%
\pgfpathcurveto{\pgfqpoint{0.055568in}{-0.032273in}}{\pgfqpoint{0.062113in}{-0.016473in}}{\pgfqpoint{0.062113in}{0.000000in}}%
\pgfpathcurveto{\pgfqpoint{0.062113in}{0.016473in}}{\pgfqpoint{0.055568in}{0.032273in}}{\pgfqpoint{0.043921in}{0.043921in}}%
\pgfpathcurveto{\pgfqpoint{0.032273in}{0.055568in}}{\pgfqpoint{0.016473in}{0.062113in}}{\pgfqpoint{0.000000in}{0.062113in}}%
\pgfpathcurveto{\pgfqpoint{-0.016473in}{0.062113in}}{\pgfqpoint{-0.032273in}{0.055568in}}{\pgfqpoint{-0.043921in}{0.043921in}}%
\pgfpathcurveto{\pgfqpoint{-0.055568in}{0.032273in}}{\pgfqpoint{-0.062113in}{0.016473in}}{\pgfqpoint{-0.062113in}{0.000000in}}%
\pgfpathcurveto{\pgfqpoint{-0.062113in}{-0.016473in}}{\pgfqpoint{-0.055568in}{-0.032273in}}{\pgfqpoint{-0.043921in}{-0.043921in}}%
\pgfpathcurveto{\pgfqpoint{-0.032273in}{-0.055568in}}{\pgfqpoint{-0.016473in}{-0.062113in}}{\pgfqpoint{0.000000in}{-0.062113in}}%
\pgfpathlineto{\pgfqpoint{0.000000in}{-0.062113in}}%
\pgfpathclose%
\pgfusepath{stroke,fill}%
}%
\begin{pgfscope}%
\pgfsys@transformshift{3.822294in}{2.915886in}%
\pgfsys@useobject{currentmarker}{}%
\end{pgfscope}%
\begin{pgfscope}%
\pgfsys@transformshift{4.026907in}{2.603637in}%
\pgfsys@useobject{currentmarker}{}%
\end{pgfscope}%
\begin{pgfscope}%
\pgfsys@transformshift{4.469681in}{2.818797in}%
\pgfsys@useobject{currentmarker}{}%
\end{pgfscope}%
\begin{pgfscope}%
\pgfsys@transformshift{3.992158in}{3.155745in}%
\pgfsys@useobject{currentmarker}{}%
\end{pgfscope}%
\begin{pgfscope}%
\pgfsys@transformshift{3.882857in}{2.633499in}%
\pgfsys@useobject{currentmarker}{}%
\end{pgfscope}%
\begin{pgfscope}%
\pgfsys@transformshift{4.189667in}{3.110425in}%
\pgfsys@useobject{currentmarker}{}%
\end{pgfscope}%
\begin{pgfscope}%
\pgfsys@transformshift{4.313904in}{2.849762in}%
\pgfsys@useobject{currentmarker}{}%
\end{pgfscope}%
\begin{pgfscope}%
\pgfsys@transformshift{3.944882in}{3.270886in}%
\pgfsys@useobject{currentmarker}{}%
\end{pgfscope}%
\begin{pgfscope}%
\pgfsys@transformshift{4.883978in}{3.137776in}%
\pgfsys@useobject{currentmarker}{}%
\end{pgfscope}%
\begin{pgfscope}%
\pgfsys@transformshift{4.189128in}{3.084417in}%
\pgfsys@useobject{currentmarker}{}%
\end{pgfscope}%
\begin{pgfscope}%
\pgfsys@transformshift{4.076343in}{2.954894in}%
\pgfsys@useobject{currentmarker}{}%
\end{pgfscope}%
\begin{pgfscope}%
\pgfsys@transformshift{4.072362in}{2.887628in}%
\pgfsys@useobject{currentmarker}{}%
\end{pgfscope}%
\begin{pgfscope}%
\pgfsys@transformshift{4.192540in}{2.984810in}%
\pgfsys@useobject{currentmarker}{}%
\end{pgfscope}%
\begin{pgfscope}%
\pgfsys@transformshift{4.100889in}{3.295280in}%
\pgfsys@useobject{currentmarker}{}%
\end{pgfscope}%
\begin{pgfscope}%
\pgfsys@transformshift{4.129529in}{3.072987in}%
\pgfsys@useobject{currentmarker}{}%
\end{pgfscope}%
\begin{pgfscope}%
\pgfsys@transformshift{4.400694in}{3.214148in}%
\pgfsys@useobject{currentmarker}{}%
\end{pgfscope}%
\begin{pgfscope}%
\pgfsys@transformshift{3.863976in}{3.028580in}%
\pgfsys@useobject{currentmarker}{}%
\end{pgfscope}%
\begin{pgfscope}%
\pgfsys@transformshift{4.326272in}{3.001798in}%
\pgfsys@useobject{currentmarker}{}%
\end{pgfscope}%
\begin{pgfscope}%
\pgfsys@transformshift{4.093603in}{3.221743in}%
\pgfsys@useobject{currentmarker}{}%
\end{pgfscope}%
\begin{pgfscope}%
\pgfsys@transformshift{4.026147in}{3.004475in}%
\pgfsys@useobject{currentmarker}{}%
\end{pgfscope}%
\begin{pgfscope}%
\pgfsys@transformshift{4.118498in}{3.084472in}%
\pgfsys@useobject{currentmarker}{}%
\end{pgfscope}%
\begin{pgfscope}%
\pgfsys@transformshift{4.348889in}{2.594575in}%
\pgfsys@useobject{currentmarker}{}%
\end{pgfscope}%
\begin{pgfscope}%
\pgfsys@transformshift{3.799207in}{2.874934in}%
\pgfsys@useobject{currentmarker}{}%
\end{pgfscope}%
\begin{pgfscope}%
\pgfsys@transformshift{4.083988in}{3.005079in}%
\pgfsys@useobject{currentmarker}{}%
\end{pgfscope}%
\begin{pgfscope}%
\pgfsys@transformshift{4.311176in}{2.668266in}%
\pgfsys@useobject{currentmarker}{}%
\end{pgfscope}%
\begin{pgfscope}%
\pgfsys@transformshift{3.989382in}{2.914174in}%
\pgfsys@useobject{currentmarker}{}%
\end{pgfscope}%
\begin{pgfscope}%
\pgfsys@transformshift{4.751373in}{2.597798in}%
\pgfsys@useobject{currentmarker}{}%
\end{pgfscope}%
\begin{pgfscope}%
\pgfsys@transformshift{4.183054in}{3.036913in}%
\pgfsys@useobject{currentmarker}{}%
\end{pgfscope}%
\begin{pgfscope}%
\pgfsys@transformshift{4.339807in}{2.933851in}%
\pgfsys@useobject{currentmarker}{}%
\end{pgfscope}%
\begin{pgfscope}%
\pgfsys@transformshift{4.126308in}{2.983227in}%
\pgfsys@useobject{currentmarker}{}%
\end{pgfscope}%
\end{pgfscope}%
\begin{pgfscope}%
\pgfpathrectangle{\pgfqpoint{0.895046in}{0.526757in}}{\pgfqpoint{5.445172in}{4.573944in}}%
\pgfusepath{clip}%
\pgfsetbuttcap%
\pgfsetroundjoin%
\definecolor{currentfill}{rgb}{0.780392,0.243137,0.113725}%
\pgfsetfillcolor{currentfill}%
\pgfsetfillopacity{0.700000}%
\pgfsetlinewidth{0.501875pt}%
\definecolor{currentstroke}{rgb}{0.000000,0.000000,0.000000}%
\pgfsetstrokecolor{currentstroke}%
\pgfsetstrokeopacity{0.700000}%
\pgfsetdash{}{0pt}%
\pgfsys@defobject{currentmarker}{\pgfqpoint{-0.062113in}{-0.062113in}}{\pgfqpoint{0.062113in}{0.062113in}}{%
\pgfpathmoveto{\pgfqpoint{0.000000in}{-0.062113in}}%
\pgfpathcurveto{\pgfqpoint{0.016473in}{-0.062113in}}{\pgfqpoint{0.032273in}{-0.055568in}}{\pgfqpoint{0.043921in}{-0.043921in}}%
\pgfpathcurveto{\pgfqpoint{0.055568in}{-0.032273in}}{\pgfqpoint{0.062113in}{-0.016473in}}{\pgfqpoint{0.062113in}{0.000000in}}%
\pgfpathcurveto{\pgfqpoint{0.062113in}{0.016473in}}{\pgfqpoint{0.055568in}{0.032273in}}{\pgfqpoint{0.043921in}{0.043921in}}%
\pgfpathcurveto{\pgfqpoint{0.032273in}{0.055568in}}{\pgfqpoint{0.016473in}{0.062113in}}{\pgfqpoint{0.000000in}{0.062113in}}%
\pgfpathcurveto{\pgfqpoint{-0.016473in}{0.062113in}}{\pgfqpoint{-0.032273in}{0.055568in}}{\pgfqpoint{-0.043921in}{0.043921in}}%
\pgfpathcurveto{\pgfqpoint{-0.055568in}{0.032273in}}{\pgfqpoint{-0.062113in}{0.016473in}}{\pgfqpoint{-0.062113in}{0.000000in}}%
\pgfpathcurveto{\pgfqpoint{-0.062113in}{-0.016473in}}{\pgfqpoint{-0.055568in}{-0.032273in}}{\pgfqpoint{-0.043921in}{-0.043921in}}%
\pgfpathcurveto{\pgfqpoint{-0.032273in}{-0.055568in}}{\pgfqpoint{-0.016473in}{-0.062113in}}{\pgfqpoint{0.000000in}{-0.062113in}}%
\pgfpathlineto{\pgfqpoint{0.000000in}{-0.062113in}}%
\pgfpathclose%
\pgfusepath{stroke,fill}%
}%
\begin{pgfscope}%
\pgfsys@transformshift{2.742505in}{2.019629in}%
\pgfsys@useobject{currentmarker}{}%
\end{pgfscope}%
\begin{pgfscope}%
\pgfsys@transformshift{5.954856in}{1.010220in}%
\pgfsys@useobject{currentmarker}{}%
\end{pgfscope}%
\begin{pgfscope}%
\pgfsys@transformshift{1.104642in}{4.935131in}%
\pgfsys@useobject{currentmarker}{}%
\end{pgfscope}%
\begin{pgfscope}%
\pgfsys@transformshift{5.298466in}{3.506513in}%
\pgfsys@useobject{currentmarker}{}%
\end{pgfscope}%
\end{pgfscope}%
\begin{pgfscope}%
\pgfsetrectcap%
\pgfsetmiterjoin%
\pgfsetlinewidth{0.000000pt}%
\definecolor{currentstroke}{rgb}{1.000000,1.000000,1.000000}%
\pgfsetstrokecolor{currentstroke}%
\pgfsetdash{}{0pt}%
\pgfpathmoveto{\pgfqpoint{0.895046in}{0.526757in}}%
\pgfpathlineto{\pgfqpoint{0.895046in}{5.100702in}}%
\pgfusepath{}%
\end{pgfscope}%
\begin{pgfscope}%
\pgfsetrectcap%
\pgfsetmiterjoin%
\pgfsetlinewidth{0.000000pt}%
\definecolor{currentstroke}{rgb}{1.000000,1.000000,1.000000}%
\pgfsetstrokecolor{currentstroke}%
\pgfsetdash{}{0pt}%
\pgfpathmoveto{\pgfqpoint{6.340218in}{0.526757in}}%
\pgfpathlineto{\pgfqpoint{6.340218in}{5.100702in}}%
\pgfusepath{}%
\end{pgfscope}%
\begin{pgfscope}%
\pgfsetrectcap%
\pgfsetmiterjoin%
\pgfsetlinewidth{0.000000pt}%
\definecolor{currentstroke}{rgb}{1.000000,1.000000,1.000000}%
\pgfsetstrokecolor{currentstroke}%
\pgfsetdash{}{0pt}%
\pgfpathmoveto{\pgfqpoint{0.895046in}{0.526757in}}%
\pgfpathlineto{\pgfqpoint{6.340218in}{0.526757in}}%
\pgfusepath{}%
\end{pgfscope}%
\begin{pgfscope}%
\pgfsetrectcap%
\pgfsetmiterjoin%
\pgfsetlinewidth{0.000000pt}%
\definecolor{currentstroke}{rgb}{1.000000,1.000000,1.000000}%
\pgfsetstrokecolor{currentstroke}%
\pgfsetdash{}{0pt}%
\pgfpathmoveto{\pgfqpoint{0.895046in}{5.100702in}}%
\pgfpathlineto{\pgfqpoint{6.340218in}{5.100702in}}%
\pgfusepath{}%
\end{pgfscope}%
\begin{pgfscope}%
\definecolor{textcolor}{rgb}{0.150000,0.150000,0.150000}%
\pgfsetstrokecolor{textcolor}%
\pgfsetfillcolor{textcolor}%
\pgftext[x=3.617632in,y=5.239591in,,base]{\color{textcolor}\rmfamily\fontsize{14.000000}{16.800000}\bfseries\selectfont Distributions With Outliers}%
\end{pgfscope}%
\begin{pgfscope}%
\pgfsetbuttcap%
\pgfsetmiterjoin%
\definecolor{currentfill}{rgb}{0.300000,0.300000,0.300000}%
\pgfsetfillcolor{currentfill}%
\pgfsetfillopacity{0.500000}%
\pgfsetlinewidth{1.003750pt}%
\definecolor{currentstroke}{rgb}{0.300000,0.300000,0.300000}%
\pgfsetstrokecolor{currentstroke}%
\pgfsetstrokeopacity{0.500000}%
\pgfsetdash{}{0pt}%
\pgfpathmoveto{\pgfqpoint{4.767985in}{4.338576in}}%
\pgfpathlineto{\pgfqpoint{6.270774in}{4.338576in}}%
\pgfpathquadraticcurveto{\pgfqpoint{6.298552in}{4.338576in}}{\pgfqpoint{6.298552in}{4.366354in}}%
\pgfpathlineto{\pgfqpoint{6.298552in}{4.975702in}}%
\pgfpathquadraticcurveto{\pgfqpoint{6.298552in}{5.003480in}}{\pgfqpoint{6.270774in}{5.003480in}}%
\pgfpathlineto{\pgfqpoint{4.767985in}{5.003480in}}%
\pgfpathquadraticcurveto{\pgfqpoint{4.740207in}{5.003480in}}{\pgfqpoint{4.740207in}{4.975702in}}%
\pgfpathlineto{\pgfqpoint{4.740207in}{4.366354in}}%
\pgfpathquadraticcurveto{\pgfqpoint{4.740207in}{4.338576in}}{\pgfqpoint{4.767985in}{4.338576in}}%
\pgfpathlineto{\pgfqpoint{4.767985in}{4.338576in}}%
\pgfpathclose%
\pgfusepath{stroke,fill}%
\end{pgfscope}%
\begin{pgfscope}%
\pgfsetbuttcap%
\pgfsetmiterjoin%
\definecolor{currentfill}{rgb}{1.000000,1.000000,1.000000}%
\pgfsetfillcolor{currentfill}%
\pgfsetfillopacity{0.950000}%
\pgfsetlinewidth{1.003750pt}%
\definecolor{currentstroke}{rgb}{0.800000,0.800000,0.800000}%
\pgfsetstrokecolor{currentstroke}%
\pgfsetstrokeopacity{0.950000}%
\pgfsetdash{}{0pt}%
\pgfpathmoveto{\pgfqpoint{4.740207in}{4.366354in}}%
\pgfpathlineto{\pgfqpoint{6.242996in}{4.366354in}}%
\pgfpathquadraticcurveto{\pgfqpoint{6.270774in}{4.366354in}}{\pgfqpoint{6.270774in}{4.394132in}}%
\pgfpathlineto{\pgfqpoint{6.270774in}{5.003480in}}%
\pgfpathquadraticcurveto{\pgfqpoint{6.270774in}{5.031257in}}{\pgfqpoint{6.242996in}{5.031257in}}%
\pgfpathlineto{\pgfqpoint{4.740207in}{5.031257in}}%
\pgfpathquadraticcurveto{\pgfqpoint{4.712429in}{5.031257in}}{\pgfqpoint{4.712429in}{5.003480in}}%
\pgfpathlineto{\pgfqpoint{4.712429in}{4.394132in}}%
\pgfpathquadraticcurveto{\pgfqpoint{4.712429in}{4.366354in}}{\pgfqpoint{4.740207in}{4.366354in}}%
\pgfpathlineto{\pgfqpoint{4.740207in}{4.366354in}}%
\pgfpathclose%
\pgfusepath{stroke,fill}%
\end{pgfscope}%
\begin{pgfscope}%
\pgfsetbuttcap%
\pgfsetroundjoin%
\definecolor{currentfill}{rgb}{0.180392,0.525490,0.670588}%
\pgfsetfillcolor{currentfill}%
\pgfsetfillopacity{0.600000}%
\pgfsetlinewidth{0.501875pt}%
\definecolor{currentstroke}{rgb}{0.000000,0.000000,0.000000}%
\pgfsetstrokecolor{currentstroke}%
\pgfsetstrokeopacity{0.600000}%
\pgfsetdash{}{0pt}%
\pgfsys@defobject{currentmarker}{\pgfqpoint{-0.062113in}{-0.062113in}}{\pgfqpoint{0.062113in}{0.062113in}}{%
\pgfpathmoveto{\pgfqpoint{0.000000in}{-0.062113in}}%
\pgfpathcurveto{\pgfqpoint{0.016473in}{-0.062113in}}{\pgfqpoint{0.032273in}{-0.055568in}}{\pgfqpoint{0.043921in}{-0.043921in}}%
\pgfpathcurveto{\pgfqpoint{0.055568in}{-0.032273in}}{\pgfqpoint{0.062113in}{-0.016473in}}{\pgfqpoint{0.062113in}{0.000000in}}%
\pgfpathcurveto{\pgfqpoint{0.062113in}{0.016473in}}{\pgfqpoint{0.055568in}{0.032273in}}{\pgfqpoint{0.043921in}{0.043921in}}%
\pgfpathcurveto{\pgfqpoint{0.032273in}{0.055568in}}{\pgfqpoint{0.016473in}{0.062113in}}{\pgfqpoint{0.000000in}{0.062113in}}%
\pgfpathcurveto{\pgfqpoint{-0.016473in}{0.062113in}}{\pgfqpoint{-0.032273in}{0.055568in}}{\pgfqpoint{-0.043921in}{0.043921in}}%
\pgfpathcurveto{\pgfqpoint{-0.055568in}{0.032273in}}{\pgfqpoint{-0.062113in}{0.016473in}}{\pgfqpoint{-0.062113in}{0.000000in}}%
\pgfpathcurveto{\pgfqpoint{-0.062113in}{-0.016473in}}{\pgfqpoint{-0.055568in}{-0.032273in}}{\pgfqpoint{-0.043921in}{-0.043921in}}%
\pgfpathcurveto{\pgfqpoint{-0.032273in}{-0.055568in}}{\pgfqpoint{-0.016473in}{-0.062113in}}{\pgfqpoint{0.000000in}{-0.062113in}}%
\pgfpathlineto{\pgfqpoint{0.000000in}{-0.062113in}}%
\pgfpathclose%
\pgfusepath{stroke,fill}%
}%
\begin{pgfscope}%
\pgfsys@transformshift{4.906874in}{4.906637in}%
\pgfsys@useobject{currentmarker}{}%
\end{pgfscope}%
\end{pgfscope}%
\begin{pgfscope}%
\definecolor{textcolor}{rgb}{0.150000,0.150000,0.150000}%
\pgfsetstrokecolor{textcolor}%
\pgfsetfillcolor{textcolor}%
\pgftext[x=5.156874in,y=4.870179in,left,base]{\color{textcolor}\rmfamily\fontsize{10.000000}{12.000000}\selectfont \(\displaystyle \mathcal{N}([0, 0], 1.0)\)}%
\end{pgfscope}%
\begin{pgfscope}%
\pgfsetbuttcap%
\pgfsetroundjoin%
\definecolor{currentfill}{rgb}{0.945098,0.560784,0.003922}%
\pgfsetfillcolor{currentfill}%
\pgfsetfillopacity{0.600000}%
\pgfsetlinewidth{0.501875pt}%
\definecolor{currentstroke}{rgb}{0.000000,0.000000,0.000000}%
\pgfsetstrokecolor{currentstroke}%
\pgfsetstrokeopacity{0.600000}%
\pgfsetdash{}{0pt}%
\pgfsys@defobject{currentmarker}{\pgfqpoint{-0.062113in}{-0.062113in}}{\pgfqpoint{0.062113in}{0.062113in}}{%
\pgfpathmoveto{\pgfqpoint{0.000000in}{-0.062113in}}%
\pgfpathcurveto{\pgfqpoint{0.016473in}{-0.062113in}}{\pgfqpoint{0.032273in}{-0.055568in}}{\pgfqpoint{0.043921in}{-0.043921in}}%
\pgfpathcurveto{\pgfqpoint{0.055568in}{-0.032273in}}{\pgfqpoint{0.062113in}{-0.016473in}}{\pgfqpoint{0.062113in}{0.000000in}}%
\pgfpathcurveto{\pgfqpoint{0.062113in}{0.016473in}}{\pgfqpoint{0.055568in}{0.032273in}}{\pgfqpoint{0.043921in}{0.043921in}}%
\pgfpathcurveto{\pgfqpoint{0.032273in}{0.055568in}}{\pgfqpoint{0.016473in}{0.062113in}}{\pgfqpoint{0.000000in}{0.062113in}}%
\pgfpathcurveto{\pgfqpoint{-0.016473in}{0.062113in}}{\pgfqpoint{-0.032273in}{0.055568in}}{\pgfqpoint{-0.043921in}{0.043921in}}%
\pgfpathcurveto{\pgfqpoint{-0.055568in}{0.032273in}}{\pgfqpoint{-0.062113in}{0.016473in}}{\pgfqpoint{-0.062113in}{0.000000in}}%
\pgfpathcurveto{\pgfqpoint{-0.062113in}{-0.016473in}}{\pgfqpoint{-0.055568in}{-0.032273in}}{\pgfqpoint{-0.043921in}{-0.043921in}}%
\pgfpathcurveto{\pgfqpoint{-0.032273in}{-0.055568in}}{\pgfqpoint{-0.016473in}{-0.062113in}}{\pgfqpoint{0.000000in}{-0.062113in}}%
\pgfpathlineto{\pgfqpoint{0.000000in}{-0.062113in}}%
\pgfpathclose%
\pgfusepath{stroke,fill}%
}%
\begin{pgfscope}%
\pgfsys@transformshift{4.906874in}{4.696947in}%
\pgfsys@useobject{currentmarker}{}%
\end{pgfscope}%
\end{pgfscope}%
\begin{pgfscope}%
\definecolor{textcolor}{rgb}{0.150000,0.150000,0.150000}%
\pgfsetstrokecolor{textcolor}%
\pgfsetfillcolor{textcolor}%
\pgftext[x=5.156874in,y=4.660489in,left,base]{\color{textcolor}\rmfamily\fontsize{10.000000}{12.000000}\selectfont \(\displaystyle \mathcal{N}([2, 2], 1.0)\)}%
\end{pgfscope}%
\begin{pgfscope}%
\pgfsetbuttcap%
\pgfsetroundjoin%
\definecolor{currentfill}{rgb}{0.780392,0.243137,0.113725}%
\pgfsetfillcolor{currentfill}%
\pgfsetfillopacity{0.700000}%
\pgfsetlinewidth{0.501875pt}%
\definecolor{currentstroke}{rgb}{0.000000,0.000000,0.000000}%
\pgfsetstrokecolor{currentstroke}%
\pgfsetstrokeopacity{0.700000}%
\pgfsetdash{}{0pt}%
\pgfsys@defobject{currentmarker}{\pgfqpoint{-0.062113in}{-0.062113in}}{\pgfqpoint{0.062113in}{0.062113in}}{%
\pgfpathmoveto{\pgfqpoint{0.000000in}{-0.062113in}}%
\pgfpathcurveto{\pgfqpoint{0.016473in}{-0.062113in}}{\pgfqpoint{0.032273in}{-0.055568in}}{\pgfqpoint{0.043921in}{-0.043921in}}%
\pgfpathcurveto{\pgfqpoint{0.055568in}{-0.032273in}}{\pgfqpoint{0.062113in}{-0.016473in}}{\pgfqpoint{0.062113in}{0.000000in}}%
\pgfpathcurveto{\pgfqpoint{0.062113in}{0.016473in}}{\pgfqpoint{0.055568in}{0.032273in}}{\pgfqpoint{0.043921in}{0.043921in}}%
\pgfpathcurveto{\pgfqpoint{0.032273in}{0.055568in}}{\pgfqpoint{0.016473in}{0.062113in}}{\pgfqpoint{0.000000in}{0.062113in}}%
\pgfpathcurveto{\pgfqpoint{-0.016473in}{0.062113in}}{\pgfqpoint{-0.032273in}{0.055568in}}{\pgfqpoint{-0.043921in}{0.043921in}}%
\pgfpathcurveto{\pgfqpoint{-0.055568in}{0.032273in}}{\pgfqpoint{-0.062113in}{0.016473in}}{\pgfqpoint{-0.062113in}{0.000000in}}%
\pgfpathcurveto{\pgfqpoint{-0.062113in}{-0.016473in}}{\pgfqpoint{-0.055568in}{-0.032273in}}{\pgfqpoint{-0.043921in}{-0.043921in}}%
\pgfpathcurveto{\pgfqpoint{-0.032273in}{-0.055568in}}{\pgfqpoint{-0.016473in}{-0.062113in}}{\pgfqpoint{0.000000in}{-0.062113in}}%
\pgfpathlineto{\pgfqpoint{0.000000in}{-0.062113in}}%
\pgfpathclose%
\pgfusepath{stroke,fill}%
}%
\begin{pgfscope}%
\pgfsys@transformshift{4.906874in}{4.487258in}%
\pgfsys@useobject{currentmarker}{}%
\end{pgfscope}%
\end{pgfscope}%
\begin{pgfscope}%
\definecolor{textcolor}{rgb}{0.150000,0.150000,0.150000}%
\pgfsetstrokecolor{textcolor}%
\pgfsetfillcolor{textcolor}%
\pgftext[x=5.156874in,y=4.450799in,left,base]{\color{textcolor}\rmfamily\fontsize{10.000000}{12.000000}\selectfont Outliers (\(\displaystyle n=4\))}%
\end{pgfscope}%
\begin{pgfscope}%
\definecolor{textcolor}{rgb}{0.150000,0.150000,0.150000}%
\pgfsetstrokecolor{textcolor}%
\pgfsetfillcolor{textcolor}%
\pgftext[x=3.617632in,y=5.815591in,,top]{\color{textcolor}\rmfamily\fontsize{16.000000}{19.200000}\bfseries\selectfont Outlier Robustness Analysis: Magnitude Distance in 2D Space}%
\end{pgfscope}%
\end{pgfpicture}%
\makeatother%
\endgroup%

%% file: GAN.tex
\label{sec:gans}
As a demonstration of the utility of the magnitude distance, we introduce Magnitude Generative Network (MagGN), a novel push-forward generative model that incorporates the normalized magnitude distance $\hat{d}_{Mag}^t$ into the generator loss. Inspired by curriculum learning (CL), MagGN trains the model progressively, starting from an overall data structure and in a few targeted steps, gradually increasing the loss function's sensitivity to finer details. Technically, this is achieved by updating the loss function with greater values of $t$.  For some $k \in \mathbb{N}$ and a list called t-steps $\{e_1, \dots, e_k\}$, at epochs $e_i$ we add $\hat{d}_{Mag}^{t_i}(\mathcal{D}_{r}, \mathcal{D}_{g})$ to the loss, respectively, with the magnitude scaling parameters $t_i$ where $t_1 \leq \dots \leq t_k$.
\begin{equation*}
 l_e= \sum_{i: e_i\leq e} \hat{d}_{Mag}^{t_i}(\mathcal{D}_{r}, \mathcal{D}_{g})  
\end{equation*}
where $\mathcal{D}_{r}$ and $\mathcal{D}_{g}$ denote the real and generated data, respectively. The normalized variant of the loss can be defined as $\bar{l}_e = l_e / \lvert \{ i : e_i \leq e \} \rvert$.

Although MagGN explicitly updates the loss function during training, this procedure can also be interpreted as progressively scaling the data via the parameters $t_i$. Increasing $t$ effectively increases the complexity of the data representation, aligning the training process with the principles of curriculum learning. Also, retaining the previous distances (smaller t values) ensures the model does not overfit to details.

%% file: experiments.tex
\label{sec:experiments}
We conduct small scale experiments for the purpose of investigating two idea: (i) determining whether the proof of concept push-forward generative model, MagGN, is a feasible direction for future research; and (ii) understanding the scaling behaviour of $t$, so we can provide some advice for how it should be tuned in practice.

\subsection{Performance of MagGN}
We conduct experiments on the proposed MagGN model introduced in Section~\ref{sec:gans} to determine the feasibility of leveraging our novel metric in a generative modelling setting. WGAN and WGAN-GP are used as points of comparison, since they are also push-forward generative models that do not require score-based simulation procedures. All the following experiments were performed on an NVIDIA L40S GPU, using CUDA 12.6.85 and PyTorch 2.5.1.
All models are trained on the MNIST dataset. For fairness, we use the same multilayer perceptron (MLP) architecture and shared hyperparameters across all methods. These hyperparameters are tuned to optimize the performance of WGAN-GP. As a consequence, vanilla GAN, which typically has a different optimal configuration, underperforms in terms of sample quality. Therefore, we include both WGAN and WGAN-GP in our comparisons to separately account for computational efficiency and output quality. 
Also, all models were trained for the equivalent of $500$ generator epochs. For WGAN and WGAN-GP, we use $n_{\text{critic}}= 5$, meaning that the critic (discriminator) is updated five times for each generator update.
Across a range of $t$-value choices, MagGN not only achieves comparable performance but also reduces training time by more than 10 times, as reported in Tables~\ref{tab:time_mnist}. 
Table reports training times for MagGN, vanilla WGAN, and WGAN-GP and relative speedups using vanilla WGAN as the baseline. Under this shared setup, MagGN achieves more than $\times 10$ reduction in training time compared to baseline.

\begin{table*}[t]
\centering
\caption{\textbf{Training time comparison on MNIST}}
\label{tab:time_mnist}
\begin{tabular}{
  L{3cm}
  S[table-format=1.2]
  S[table-format=3.0]
  S[table-format=5.0]
  S[table-format=2.2]
}
\toprule
\textbf{Experiment} &
{$t$} &
{$t$-steps} &
{\textbf{Training Time (s)}} &
{\textbf{Speedup vs.\ WGAN (×)}} \\
\midrule
WGAN
& \multicolumn{1}{c}{--}
& \multicolumn{1}{c}{--}
& 66064 & 1.00 \\

WGAN-GP
& \multicolumn{1}{c}{--}
& \multicolumn{1}{c}{--}
& 77520 & 0.85 \\
\addlinespace

MagGN (normalized)
& \multicolumn{1}{c}{$\{0.01,0.2,2.0,8.0\}$}
& \multicolumn{1}{c}{$\{1,101,251,351\}$}
& 5516 & 11.97 \\

MagGN (normalized)
& \multicolumn{1}{c}{$\{0.01,0.3,2.0,8.0\}$}
& \multicolumn{1}{c}{$\{1,101,251,351\}$}
& 6029 & 10.95 \\

MagGN
& \multicolumn{1}{c}{$\{0.01,0.3,2.0,10.0\}$}
& \multicolumn{1}{c}{$\{1,101,251,351\}$}
& 5440 & 12.14 \\
\bottomrule
\end{tabular}
\end{table*}

\begin{figure*}[t]
  \centering
    \begin{subfigure}[t]{0.24\textwidth}
    \centering
    \includegraphics[width=\linewidth]{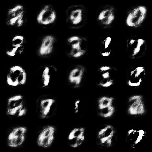}
    \caption{WGAN}
  \end{subfigure}\hfill
  \begin{subfigure}[t]{0.24\textwidth}
    \centering
    \includegraphics[width=\linewidth]{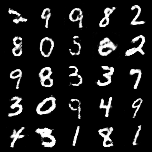}
    \caption{WGAN-GP}\hfill
`\end{subfigure}
  \begin{subfigure}[t]{0.24\textwidth}
    \centering
    \includegraphics[width=\linewidth]{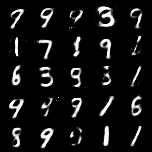}
    \caption{MagGN (normalized) with $t=\{0.01, 0.3, 2.0, 8.0\}$}
  \end{subfigure}\hfill
  \begin{subfigure}[t]{0.24\textwidth}
    \centering
    \includegraphics[width=\linewidth]{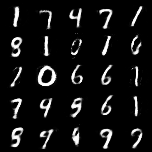}
    \caption{MagGN with $t=\{0.01, 0.3, 2.0, 10.0\}$}
  \end{subfigure}
  \caption{
    MNIST samples generated after $500$ generator epochs.
    (a) Vanilla WGAN.
    (b) WGAN-GP.
    (c) Normalized MagGN with $t=\{0.01, 0.3, 2.0, 8.0\}$ evaluated at $t$-steps $\{1,101,251,351\}$.
    (d) MagGN with $t=\{0.01, 0.3, 2.0, 10.0\}$ evaluated at the same $t$-steps.
    The figure shows that MagGN generates digit samples that are visually competitive with the WGAN-GP baseline and sharper than vanilla WGAN.}
  
  \label{fig:mnist_performance}
\end{figure*}

Limited texture and low visual complexity of MNIST make it a suitable dataset for MagGN, as it allows larger magnitude scales to be introduced without losing stability during training. Therefore, magnitude distance with these larger magnitude scales (such as $8$ or $10$) captures finer details, resulting in sharper generated images.
To demonstrate the effectiveness of MagGN as a training objective on more complex datasets, we propose the necessary adjustments to enhance training on natural image datasets in Appendix~\ref{sec:complex_datasets}.

\subsection{Tuning of Scaling Parameter $t$}
While our results in Section~\ref{sec:highdim} show that $d_{Mag}^t$ can remain discriminative in high-dimensional settings with a suitable scaling parameter $t$, selecting such a scale in practice is a nontrivial task.
As observed, in Figures~\ref{fig:magdist_vs_mmd_wass_mean_std} and~\ref{fig:magdist_vs_mmd_wass_cv}, fixed kernel scales grow with dimension and lead to mis-scaling in high dimensions. 
Adaptive cases, $t = 1/\sqrt{D}$ and $(t=1/D)$, show a descent and stable behavior across dimensions; however, $(t=1/D)$ can lead to mis-scaling in low dimensions. These observations suggest that  $\mathcal{O}(1/\sqrt{D})$ provides a reasonable default scaling, consistent with our intuition, as it can compensate for the concentration of pairwise distances.
However, a single scaling parameter is insufficient to capture all details, from local to global, in complex data. Motivated by this, we consider the aggregation of magnitude distances computed on multiple scales, an approach that has also proven effective for MMD~\cite{schrab2023mmd} and is further demonstrated in Section~\ref{sec:gans}.

Based on our empirical findings, we recommend selecting scaling parameters that grow geometrically. In the MagGN experiments, we observed that while extremely large values of $t$ lead to mode collapse and over-localization, overly small values of $t$ also prevent the model from learning by providing insufficient structural information to the model. Importantly, if the scaling parameters are $\{t_1, \cdots t_k\}$ for some $k \in \mathbb {N}$, then recovery from an overly small initial scale $t_1$ is more difficult than from an overly large final scale $t_k$, since $t_1$ influences the loss function throughout the entire training process. 
Consequently, identifying an appropriate initial scale is the first and a crucial step. Based on the observations above, it is effective to initialize $t$ on the order of $1/\sqrt{D}$. For example, in the MNIST experiments, we used $t_1 = 0.01$, which is comparable to $1/\sqrt{D} = 1/28 \approx 0.03$. 
Moreover, the next scaling parameters can be selected using a geometric progression, followed by gradual refinement with smaller multiplicative increments. This strategy avoids overly large $t$, which could otherwise overwhelm the model when learning complex data structures.
In MNIST experiments, this geometric growth corresponded to multiplicative factors $\{20, 10, 4\}$.

%% file: conclusion.tex
Difference between datasets is an important consideration in statistics and machine learning. In generative AI, it is of particular significance as a criterion in realistic data generation. Magnitude distance has several properties that are intuitive for dataset dissimilarity. In addition, it can be meaningfully computed in a resolution-dependent manner as required in a given circumstance. In this paper, we have demonstrated some of its essential properties, but further study will be needed for a broader understanding. Magnitude is closely related to various topological and geometric objects, as has been studied in mathematics, which can have a bearing on applications of magnitude distance. We expect that the magnitude distance defined here will find uses in various machine learning contexts as a fundamental tool.
In particular, distances between datasets often find use-cases in areas such as differential privacy \citep{dwork2014algorithmic}, designing learning algorithms that provide robustness to distribution shifts \citep{ben2010theory}, hypothesis testing \citep{gretton2012kernel}, and more.

%% file: appendix.tex
\section{Proofs: Sets with Redundant Samples}
\begin{prop}[2.4.3, \cite{leinster2013magnitude}]
\label{prop:sup_form}
Let $X$ be a positive definite finite metric space with similarity matrix $\zeta_X$. Then
\begin{align}
    Mag(X) \;=\; \sup_{v \neq 0} \frac{\left( \sum_{x \in X} v(x) \right)^2}{v^\ast \zeta_X v},
\end{align}
where the supremum is taken over all nonzero vectors $v \in \mathbb{R}^X$, and $v^\ast$ denotes the transpose of $v$. 
Moreover, a vector $v$ attains the supremum if and only if it is a nonzero scalar multiple of the unique weighting on $X$.
\end{prop}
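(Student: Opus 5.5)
The plan is to recast the ratio as a Cauchy--Schwarz quotient for the inner product induced by the similarity matrix. Since $X$ is positive definite, $\zeta_X$ is symmetric positive definite, so
\[
\langle u, v\rangle_{\zeta} \coloneqq u^\ast \zeta_X v
\]
is a genuine inner product on $\mathbb{R}^X$. In particular the denominator $v^\ast\zeta_X v$ is strictly positive for every $v\neq 0$, so the quotient is well defined. Let $w = w_X = \zeta_X^{-1}\one$ be the unique weighting, as in Section~\ref{sec: preliminaries}.

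The first step rewrites both the numerator and the magnitude in terms of this inner product. Because $\zeta_X$ is symmetric and $\zeta_X w = \one$, for any $v$ we have
\[
\textstyle\sum_{x\in X} v(x) = \one^\ast v = (\zeta_X w)^\ast v = w^\ast \zeta_X v = \langle w, v\rangle_\zeta .
\]
Taking $v = w$ gives $\langle w, w\rangle_\zeta = \one^\ast w = Mag(X)$. Moreover $w \neq 0$, since $\zeta_X w = \one \neq 0$. Hence $Mag(X) = \langle w,w\rangle_\zeta > 0$.

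The second step applies Cauchy--Schwarz for $\langle\cdot,\cdot\rangle_\zeta$. For every nonzero $v$,
\[
\frac{\left(\sum_{x} v(x)\right)^2}{v^\ast\zeta_X v} = \frac{\langle w, v\rangle_\zeta^2}{\langle v, v\rangle_\zeta} \le \langle w,w\rangle_\zeta = Mag(X).
\]
The bound is attained at $v = w$, because the left-hand side then equals $Mag(X)^2/Mag(X)$. So the supremum is a maximum and equals $Mag(X)$.

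For the characterisation of maximisers, use the equality case of Cauchy--Schwarz in a genuine inner product space. Equality holds iff $v$ and $w$ are linearly dependent, i.e.\ iff $v = c\,w$ for some scalar $c$, and $c \neq 0$ because $v\neq 0$. Conversely, the quotient is invariant under $v \mapsto cv$ for $c\neq 0$, so every nonzero multiple of $w$ attains the supremum.

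No step here is a serious obstacle; the only point needing care is the positive definiteness of $\zeta_X$. It is what makes $\langle\cdot,\cdot\rangle_\zeta$ an inner product, which the equality clause of Cauchy--Schwarz requires. It is also what keeps the denominator nonzero, and it is exactly the hypothesis in the statement. For collections with duplicates, $\zeta_X$ is only semidefinite (Theorem~\ref{theo:finite_euclidean}), and this argument would not directly give uniqueness of the maximiser.
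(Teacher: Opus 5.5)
Your argument is correct and complete. You write $\sum_x v(x)=\langle w,v\rangle_\zeta$ and $Mag(X)=\langle w,w\rangle_\zeta$, then apply Cauchy--Schwarz and its equality case in the inner product $u^\ast\zeta_X v$; this yields both the value of the supremum and the characterisation of maximisers, with only the tacit assumption $X\neq\emptyset$ used to get $\one\neq 0$. The paper does not prove this statement itself but imports it from Leinster's Proposition~2.4.3 \cite{leinster2013magnitude}, and to my knowledge your proof is the standard Cauchy--Schwarz argument given there.
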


\finiteeuclidean*

\begin{proof}[Proof of Theorem~\ref{theo:finite_euclidean}]
The first case follows directly from [Theorem~2.5.3]~\cite{leinster2013magnitude}, which shows that every finite subspace of Euclidean space is positive definite.

For the second case, let $X' = \{x_1, \cdots, x_n \}$ be the set of distinct elements in $X$, where each point $x_i$ appears $k_i$ times in $X$. 
Then, due to these duplications, the similarity matrix $\z_X$ is obtained from $\z_{X'}$ by repeating rows and columns corresponding to the multiplicities $k_i$. 
Therefore, $\z_X$ has linear dependencies between its rows (and columns) and $\operatorname{rank}(\z_X) = \operatorname{rank}(\z_{X'}) = n$. Thus, $\z_X$ is positive semidefinite and not definite. 

Let $w_{X'}$ be the unique weighting on $X'$. We now show a valid weighting $w_X$ for $X$ by distributing each $\mathbf{w}_{X'}(j)$ arbitrarily among the $k_j$ duplicates of $x_j$.
For each $x_j$, choose $\alpha_{j,1}, \cdots, \alpha_{j, k_j}$ such that
$ \sum_{l = 1}^{k_j} \, \alpha_{j, l}  = 1$. 
Then, define the weights of the duplicates $x_j^{(l)} \in X$ by 
\begin{align*}
w_X(x_j^{(l)}) \coloneqq \alpha_{j,l}\, w_{X'}(x_j), \qquad l=1,\dots,k_j
\end{align*}
Now, we prove that $w_X$ is a valid weighting. For every $y \in X$ with representative $x_i \in X'$ we have:
\begin{align*}
    \sum_{y' \in X} \zeta_X(y,y') \, w_X(y) &= 
    \sum_{x_j \in X'} \sum_{l = 1}^{k_j} \zeta_{X'}(x_i,x_j) \, \alpha_{j,l} w_{X'}(x_j)\\
    &= 
    \sum_{j \in X'} \zeta_{X'}(x_i,x_j) w_{X'}(x_j) \underbrace{\sum_{l=1}^{k_j} \alpha_{j,l}}_{=1} \\
    &= 
    \sum_{j \in X'} \zeta_{X'}(x_i,x_j) w_{X'}(x_j)  
    \underset{(a)}{=} 1.
\end{align*}
where in $(a)$ we use the fact that $w_{X'}$ is a valid weighting for $X'$.   

Moreover, we prove these are the only valid weightings for $X$ by assuming the opposite. That is, assume the weights of the duplicates $x_j^{(l)} \in X$ can be written as 
\begin{align*}
w_X(x_j^{(l)}) \coloneqq \alpha_{j,l}\, w_{X'}(x_j), \qquad l=1,\dots,k_j
\end{align*}
where there exists $j$ such that $ \sum_{l = 1}^{k_j} \, \alpha_{j, l}  \neq 1$.
In this case, following the same approach, for every $y \in X$ with representative $x_i \in X'$ we have:
\begin{align*}
    1 = \sum_{y' \in X} \zeta_X(y,y') \, w_X(y) &= 
    \sum_{x_j \in X'} \sum_{l = 1}^{k_j} \zeta_{X'}(x_i,x_j) \, \alpha_{j,l} w_{X'}(x_j)\\
    &= 
    \sum_{j \in X'} \zeta_{X'}(x_i,x_j)  \underbrace{w_{X'}(x_j) \sum_{l=1}^{k_j} \alpha_{j,l}}_{(b)}
\end{align*}
then, for every $j$, the coefficient $w_{X'}(x_j) \sum_{l=1}^{k_j} \alpha_{j,l}$ should be a valid weighting for $X'$. But this contradicts the uniqueness of $w_{X'}$. Therefore, all valid weightings for $X$ can be defined by distributing each $\mathbf{w}_{X'}(i)$ among the duplicates of $x_i$, so that the coefficients sum to $1$.

Finally, considering the magnitude defined as the sum of the entries of the weighting vector:
\begin{align*}
   Mag(X) = \sum_{y \in X} w_X(y) &= 
    \sum_{x_j \in X'} \sum_{l = 1}^{k_j} \alpha_{j,l} w_{X'}(x_j)\\
    &= \sum_{j \in X'}  w_{X'}(x_j)
    \underbrace{\sum_{l=1}^{k_j} \alpha_{j,l}}_{=1} \\
    &= \sum_{j \in X'} w_{X'}(x_j)  
    = Mag(X').
\end{align*}
\end{proof}

\begin{corollary}[Non-Negative Weighting]\label{cor:positive_weight}
    Let $X$ be a finite collection of points in Euclidean space $\mathbb{R}^D$, and let $X'$ denote the set of distinct elements in $X$. If the unique weighting vector of $X'$ is non-negative, $X$ also has a non-negative weighting.
\end{corollary}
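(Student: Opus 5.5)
The plan is to use the explicit description of weightings on $X$ from part 2 of Theorem~\ref{theo:finite_euclidean}, which says every weighting on $X$ is obtained from $\mathbf{w}_{X'}$ by splitting each entry among the duplicates. I will pick a splitting that keeps signs.

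For each distinct point $x_j \in X'$ with multiplicity $k_j$, I take the uniform split $\alpha_{j,l} = 1/k_j$ for $l=1,\dots,k_j$, so that $\sum_{l=1}^{k_j}\alpha_{j,l} = 1$. Then I set $w_X(x_j^{(l)}) = w_{X'}(x_j)/k_j$. Any nonnegative $\alpha_{j,l}$ summing to one would also work, for example putting all the weight on one copy and zero on the others.

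By the first half of the proof of Theorem~\ref{theo:finite_euclidean}, this $w_X$ satisfies $\sum_{y'\in X}\zeta_X(y,y')\,w_X(y') = 1$ for every $y\in X$, so it is a valid weighting. Nonnegativity follows directly. Each entry is the product of $w_{X'}(x_j) \ge 0$, which holds by hypothesis, and $\alpha_{j,l} = 1/k_j > 0$, so each entry is nonnegative.

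I expect no real obstacle. The only point needing care is the wording: the conclusion is that $X$ \emph{has} a nonnegative weighting, not that \emph{every} weighting of $X$ is nonnegative. The second claim is false when some $k_j \ge 2$ and $w_{X'}(x_j) > 0$, since one can then take $\alpha_{j,1} = 2$ and $\alpha_{j,2} = -1$. So the argument only needs to exhibit one nonnegative weighting, and the uniform split does that.
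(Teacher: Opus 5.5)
Your proposal is correct and matches the paper's intended argument. The paper states this corollary without a separate proof, as an immediate consequence of the constructive half of part~2 of Theorem~\ref{theo:finite_euclidean}; choosing nonnegative splitting coefficients $\alpha_{j,l}$ (uniform, or all mass on one copy) is exactly what is needed, and your caveat that only existence of a nonnegative weighting is claimed is also right.
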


\section{Proofs: Metric Axioms} \label{sec:appendix_metric_axioms}
\begin{lemma}\label{lem:non-negative_distance}
For any $t>0$, the magnitude distance satisfies the non-negativity axiom. 
\end{lemma}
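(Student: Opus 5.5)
We need to show $2\,\Mag{X\cup Y}{t} \ge \Mag{X}{t} + \Mag{Y}{t}$. It suffices to prove monotonicity of magnitude under inclusion, $\Mag{A}{t} \le \Mag{B}{t}$ whenever $A \subseteq B$ are finite subsets of $\mathbb{R}^D$. Applying it with $A = X$, $B = X\cup Y$ and then with $A = Y$, $B = X\cup Y$, and adding the two inequalities, gives $d^t_{Mag}(X,Y)\ge 0$.

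To prove monotonicity we use the variational characterisation in Proposition~\ref{prop:sup_form}. Since $tB$ is a finite subset of Euclidean space (scaling by $t>0$ preserves this, as $tB$ is isometric to the dilated point set), it is positive definite by the first part of Theorem~\ref{theo:finite_euclidean}. Hence
\begin{equation*}
\Mag{B}{t} = \sup_{v\neq 0} \frac{\left(\sum_{b\in B} v(b)\right)^2}{v^\ast \zeta_{tB} v}.
\end{equation*}
The same formula holds for $tA$.

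Given any nonzero $u\in\mathbb{R}^A$, extend it by zero to a vector $v\in\mathbb{R}^B$. Then $\sum_B v = \sum_A u$. Moreover $v^\ast\zeta_{tB}v = u^\ast\zeta_{tA}u$, because $\zeta_{tA}$ is the principal submatrix of $\zeta_{tB}$ indexed by $A$. Every ratio appearing in the supremum for $A$ therefore also appears in the supremum for $B$, and so $\Mag{A}{t}\le\Mag{B}{t}$.

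Duplicates are not an issue: the magnitude distance is defined on sets with redundancy discarded, and by Theorem~\ref{theo:finite_euclidean} magnitudes of collections agree with those of their underlying sets. So we may work with the distinct points of $X$, $Y$ and $X\cup Y$ throughout.

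The only delicate step is checking that Proposition~\ref{prop:sup_form} applies, namely positive definiteness of the scaled spaces. This is immediate from Theorem~\ref{theo:finite_euclidean} applied to $tX$, $tY$ and $t(X\cup Y)$. Beyond that the argument is routine.
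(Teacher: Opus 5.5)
Your proof is correct and is essentially the paper's argument: both reduce non-negativity to the monotonicity inequalities $\Mag{X}{t}\le\Mag{X\cup Y}{t}$ and $\Mag{Y}{t}\le\Mag{X\cup Y}{t}$ and add them. The paper simply cites Leinster's Corollary~2.4.4 for monotonicity, whereas you derive it from the variational formula of Proposition~\ref{prop:sup_form} by extending vectors by zero, which is how that corollary is proved in the source and is the same zero-extension step the paper uses in Lemma~\ref{lem:tight_case}.
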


\begin{proof}[Proof of Lemma~\ref{lem:non-negative_distance}]
Let $X, Y \subset \mathbb{R}^D$ be finite sets, and let $Z = X \cup Y$. By [Corollary 2.4.4]~\cite{leinster2013magnitude}, the inclusion $X \subseteq Z$ implies $\Mag{Z}{t} \geq \Mag{X}{t}$, and similarly $\Mag{Z}{t} \geq \Mag{Y}{t}$. Therefore,
\begin{align*}
    d^t_{Mag}(X,Y) = 2\Mag{Z}{t} - \Mag{X}{t} -\Mag{Y}{t} \geq 0.
\end{align*}
\end{proof}

\begin{corollary}\label{cor:magnitude_equivalence}
Let $X, Y \subset \mathbb{R}^D$ be finite sets with weighting vectors $\mathbf{w}^t_X$ and let $t$ be a scaling parameter. The magnitude equivalency is an equivalence relation, satisfying
\begin{itemize}
    \item \textbf{Reflexivity:} $X \underset{\Mag{}{t}}{=} X$,
    \item \textbf{Symmetry:} If $X \underset{\Mag{}{t}}{=} Y$, then $Y \underset{\Mag{}{t}}{=} X$.
    \item \textbf{Transitivity:} If $X \underset{\Mag{}{t}}{=} Y$, and $Y \underset{\Mag{}{t}}{=} Z$, then $X \underset{\Mag{}{t}}{=} Z$.
\end{itemize}
\end{corollary}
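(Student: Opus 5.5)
The plan is to treat magnitude equivalence as equality of an invariant attached to each set. Define, for a finite set $X \subset \mathbb{R}^D$ at scale $t$, the support
\[
S_t(X) := \{x \in X : \mathbf{w}^t_X(x) \neq 0\}.
\]
By Definition~\ref{def:magnitude_equivalence}, $X \underset{\Mag{}{t}}{=} Y$ holds exactly when $S_t(X) = S_t(Y)$. A relation defined by equality of the values of a map is automatically an equivalence relation, because it inherits all three properties from equality of sets. So the first step is to check that $S_t$ is a genuine function of $X$, meaning it is well defined.

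For distinct points this is immediate. By Theorem~\ref{theo:finite_euclidean}(1), $\z_{tX}$ is symmetric positive definite. Hence $\mathbf{w}^t_X = \z_{tX}^{-1}\one$ is unique, and $S_t(X)$ is determined by $X$ and $t$ alone.

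If one also allows collections with repeated points, Theorem~\ref{theo:finite_euclidean}(2) says the weighting is not unique. However, the paper agrees to compute magnitude after removing redundancy. I would therefore define $S_t(X) := S_t(X')$, where $X'$ is the set of distinct elements, using the unique weighting $\mathbf{w}^t_{X'}$. Equivalently, one can regard duplicates as collapsed to a single point. This is the only place where a subtlety could arise, so I will state this convention explicitly. With it fixed, $S_t$ is a well-defined map from finite subsets to finite subsets of $\mathbb{R}^D$.

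The three axioms then follow directly from the properties of set equality:
\begin{itemize}
\item \textbf{Reflexivity:} $S_t(X) = S_t(X)$.
\item \textbf{Symmetry:} $S_t(X) = S_t(Y)$ implies $S_t(Y) = S_t(X)$.
\item \textbf{Transitivity:} $S_t(X) = S_t(Y)$ and $S_t(Y) = S_t(Z)$ give $S_t(X) = S_t(Z)$.
\end{itemize}

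The main obstacle is not any of these three properties, which are trivial. It is the well-definedness of the support, since a non-unique weighting would make the relation depend on which weighting is chosen. The uniqueness statement from Theorem~\ref{theo:finite_euclidean} handles this once redundancy is removed, as described above.
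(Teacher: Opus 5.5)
Your proposal is correct and follows essentially the same route as the paper: reflexivity and symmetry are immediate from the definition, and transitivity comes from chaining the equalities of the non-zero-weight supports. Your explicit check that the support is well defined (uniqueness of the weighting for distinct points, with duplicates collapsed) is a harmless addition that the paper leaves implicit.
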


\begin{proof}[Proof of Corollary~\ref{cor:magnitude_equivalence}]
Let $\mathbf{w}^t_X$, $\mathbf{w}^t_Y$ and $\mathbf{w}^t_Z$ be weighting vectors at scale $t$ for $X$, $Y$, and $Z$ respectively. The first two properties are trivially true by definition. If $X \underset{\Mag{}{t}}{=} Y$, and $Y \underset{\Mag{}{t}}{=} Z$, then
\begin{align*}
    \{x\in X: \mathbf{w}^t_X(x)\not = 0\}= 
    \{y\in Y: \mathbf{w}^t_Y(y)\not =0\}, \\
    \text{and } 
    \{y\in Y: \mathbf{w}^t_Y(y)\not =0\} = \{z\in Z: \mathbf{w}^t_Z(z)\not = 0\}.
\end{align*}
Thus, 
\begin{align*}
    \{x\in X: \mathbf{w}^t_X(x)\not = 0\}= 
    \{z\in Z: \mathbf{w}^t_Z(z)\not = 0\}.
\end{align*}
\end{proof}

\begin{lemma} \label{lem:equivalency_weightings}
    Let $X, Y \subset \mathbb{R}^D$ be finite sets, and let $\mathbf{w}^t_X$, and $\mathbf{w}^t_Y$ denote the weighting vectors of $X$, and $Y$ at scale $t$, respectively. Then, 
    \begin{equation*}
        X \underset{\Mag{}{t}}{=} Y 
    \quad \iff 
    \begin{cases} 
    \mathbf{w}^t_X(z) = \mathbf{w}^t_Y(z)  & \text{if } z \in Q, \\
    \mathbf{w}^t_X(z) = 0 & \text{if } z \in X \setminus Q,\\ 
    \mathbf{w}^t_Y(z) = 0 & \text{if } z \in Y \setminus Q.
    \end{cases} 
    \end{equation*}
    where $Q = X\cap Y$.
\end{lemma}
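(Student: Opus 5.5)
We prove the two implications separately. The converse direction is immediate from the definitions. The forward direction reduces to the uniqueness of weightings on finite subsets of $\mathbb{R}^D$, which is part~1 of Theorem~\ref{theo:finite_euclidean}.

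\emph{($\Leftarrow$).} Assume the three conditions hold. By the second and third conditions, the supports $S_X=\{x\in X:\mathbf{w}^t_X(x)\neq 0\}$ and $S_Y=\{y\in Y:\mathbf{w}^t_Y(y)\neq 0\}$ are both contained in $Q$. On $Q$ the two weightings coincide by the first condition. Hence $S_X = S_Y$, which is exactly $X \underset{\Mag{}{t}}{=} Y$ by Definition~\ref{def:magnitude_equivalence}.

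\emph{($\Rightarrow$).} Assume $S_X = S_Y =: S$. Since $S_X\subseteq X$ and $S_Y\subseteq Y$, we get $S\subseteq X\cap Y = Q$. For $z\in X\setminus Q$ we have $z\notin S = S_X$, so $\mathbf{w}^t_X(z)=0$. Symmetrically, $\mathbf{w}^t_Y(z)=0$ for $z\in Y\setminus Q$. This gives the second and third conditions.

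It remains to show that $\mathbf{w}^t_X$ and $\mathbf{w}^t_Y$ agree on $Q$; this is the key step. Because $\mathbf{w}^t_X$ vanishes off $S$, the weighting equations for $tX$ reduce to
\[
\sum_{s\in S}\zeta_{tX}(x,s)\,\mathbf{w}^t_X(s)=1 \qquad \text{for every } x\in X .
\]
Restricting these equations to the rows $x\in S\subseteq X$ shows that $\mathbf{w}^t_X|_S$ is a weighting of the scaled metric space $tS$. The same argument applied to $Y$ shows that $\mathbf{w}^t_Y|_S$ is also a weighting of $tS$. Here we use that the similarity entries between points of $S$ depend only on their distances, so the submatrix of $\zeta_{tX}$ indexed by $S$ equals that of $\zeta_{tY}$; both are $\zeta_{tS}$.

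Now $S$ is a finite set of distinct points in $\mathbb{R}^D$. By part~1 of Theorem~\ref{theo:finite_euclidean}, $\zeta_{tS}$ is positive definite, so its weighting is unique. Therefore $\mathbf{w}^t_X = \mathbf{w}^t_Y$ on $S$. On $Q\setminus S$ both weightings vanish by the definition of $S$, so they agree on all of $Q$.

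The only subtle point is the bookkeeping in the restriction step. One must check that discarding the zero-weight points does not change any row sum, so that the restricted vectors really satisfy the weighting equations of $S$. One must also check that $X$ and $Y$ here are genuine sets, so that their own weightings are unique and ``the'' weighting vectors in the statement are well defined.
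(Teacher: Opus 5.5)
Your proposal is correct and matches the paper's argument. In both, the zero-weight points drop out of the row sums, so each weighting restricted to the common support is a weighting of that support, and equality follows from uniqueness of weightings for a set of distinct points in $\mathbb{R}^D$. Your version is slightly cleaner: you keep the support $S$ separate from $Q = X\cap Y$ and treat $Q\setminus S$ and the converse explicitly, whereas the paper reuses the letter $Q$ for the support and handles those points only in passing.
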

    
\begin{proof}[Proof of Lemma~\ref{lem:equivalency_weightings}]
We only prove the forward direction, as the converse direction is trivial. Let $Q = \{x\in X: \mathbf{w}^t_X(x)\not = 0\}= 
\{y\in Y: \mathbf{w}^t_Y(y)\not =0\}$. Build vector $\mathbf{u}_X$ and $\mathbf{u}_Y$ such that for every $q \in Q$ we have $\mathbf{u}_X(q) = \mathbf{w}^t_X(q)$ and $\mathbf{u}_Y(q) = \mathbf{w}^t_Y(q)$. 
    We know $\mathbf{w}^t_X(x) = 0$ for every $x \in X \setminus Q$ and by definition of weighting we have $1 = \z_{tX} \mathbf{w}^t_X = \z_{tQ}\mathbf{u}_X$ for $q \in Q$. Similarly, we have $1 = \z_{tQ}\mathbf{u}_Y$. Therefore, both $\mathbf{u}_X$ and $\mathbf{u}_Y$ are valid weighting for $Q$. By uniqueness of weightings, we have $\mathbf{u}_X = \mathbf{u}_Y$.
    Note that $Q \subseteq X \cap Y$ and weights outside $Q$ vanish.
    \end{proof}

\begin{lemma} \label{lem:tight_case}
    Let $X, Z \subset \mathbb{R}^D$ be finite sets which $X \subseteq Z$. Then, $ \Mag{Z}{t} = \Mag{X}{t}$ if and only if $X \underset{\Mag{}{t}}{=}Z$.
\end{lemma}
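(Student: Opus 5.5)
The plan is to use the variational characterization of magnitude in Proposition~\ref{prop:sup_form}, applied to the scaled space $tZ$. This space is positive definite by Theorem~\ref{theo:finite_euclidean}, since $Z$ is a finite set of distinct points in $\mathbb{R}^D$. Write $\zeta = \zeta_{tZ}$ and let $\mathbf{w}^t_X$ be the unique weighting of $tX$. Define $\bar{\mathbf{w}}\in\mathbb{R}^Z$ by extending $\mathbf{w}^t_X$ by zero on $Z\setminus X$. The key identity is
\begin{align*}
\frac{\left(\one^\ast \bar{\mathbf{w}}\right)^2}{\bar{\mathbf{w}}^\ast \zeta \bar{\mathbf{w}}} = \frac{\Mag{X}{t}^2}{(\mathbf{w}^t_X)^\ast \zeta_{tX}\mathbf{w}^t_X} = \frac{\Mag{X}{t}^2}{\Mag{X}{t}} = \Mag{X}{t}.
\end{align*}
It uses the facts that $\zeta$ restricted to $X\times X$ is $\zeta_{tX}$ and that $\zeta_{tX}\mathbf{w}^t_X=\one$. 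Here $\Mag{X}{t}>0$, because $\zeta_{tX}^{-1}$ is positive definite. This identity also recovers Corollary~2.4.4: $\Mag{X}{t}\le \Mag{Z}{t}$.

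For ($\Leftarrow$), assume $X \underset{\Mag{}{t}}{=} Z$. By Lemma~\ref{lem:equivalency_weightings}, with $Q = X\cap Z = X$, the weighting $\mathbf{w}^t_Z$ vanishes on $Z\setminus X$ and agrees with $\mathbf{w}^t_X$ on $X$. Summing the entries then gives $\Mag{Z}{t}=\Mag{X}{t}$ directly. Equivalently, $\bar{\mathbf{w}}=\mathbf{w}^t_Z$.

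For ($\Rightarrow$), assume $\Mag{Z}{t}=\Mag{X}{t}$. By the identity, $\bar{\mathbf{w}}$ attains the supremum in Proposition~\ref{prop:sup_form} for $tZ$. The equality clause of that proposition then says $\bar{\mathbf{w}}=c\,\mathbf{w}^t_Z$ for some nonzero scalar $c$. Summing entries gives $\Mag{X}{t}=c\,\Mag{Z}{t}$, and since both magnitudes are equal and positive, $c=1$. Hence $\mathbf{w}^t_Z$ is zero on $Z\setminus X$ and coincides with $\mathbf{w}^t_X$ on $X$. The supports of nonzero weights are therefore the same subset of $X$, which is exactly Definition~\ref{def:magnitude_equivalence}.

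The main obstacle is the equality case. The whole argument rests on the ``attains the supremum iff scalar multiple'' clause of Proposition~\ref{prop:sup_form}. If I wanted to avoid relying on it, I would reprove it as follows. Since $\zeta\succ0$, the Cauchy--Schwarz inequality in the $\zeta$-inner product gives
\[
(\one^\ast v)^2=\langle v,\mathbf{w}^t_Z\rangle_\zeta^2\le (v^\ast\zeta v)\,\Mag{Z}{t},
\]
with equality iff $v\parallel \mathbf{w}^t_Z$. The rest of the proof is bookkeeping with zero-extended vectors.
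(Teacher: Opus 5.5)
Your proposal is correct and follows essentially the same route as the paper. You extend the weighting of $tX$ by zero to $Z$, observe that its quotient $(\sum u)^2/u^\ast\zeta_{tZ}u$ equals $\Mag{X}{t}$, and use the equality clause of Leinster's Proposition~2.4.3 for ($\Rightarrow$), while ($\Leftarrow$) comes from Lemma~\ref{lem:equivalency_weightings}. Your extra steps (forcing $c=1$, and justifying the equality clause by Cauchy--Schwarz in the $\zeta$-inner product) are valid but not needed, since the paper only uses $c\neq 0$ to conclude that the supports coincide.
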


\begin{proof}[Proof of Lemma~\ref{lem:tight_case}]
By [Proposition 2.4.3]~\cite{leinster2013magnitude} we have: 
\begin{align}
    Mag(X) = \sup_{v \not = 0} \frac{\left( \sum_{x \in X} v(x) \right)^2}{v^\ast \z_{tX} v}, \nonumber \\
    \quad Mag(Z) = \sup_{u  \not = 0} \frac{\left( \sum_{z \in Z} u(z) \right)^2}{u^\ast \z_{tZ} u}. \label{eq:tight_ineq_proof_supZ}
\end{align}
First, we prove the forward direction and assume $Mag(X) = Mag(Z)$.
Define a vector $u$ on $Z$ by extending  the weighting $\mathbf{w}^t_X$ of $X$:
\begin{equation*}
    \mathbf{u} = 
    \begin{cases} 
    \mathbf{w}^t_X(x) & \text{if } z \in X, \\
    0 & \text{if } z \in Z \setminus X. 
    \end{cases} 
\end{equation*}
By definition of $\mathbf{u}$ and considering the magnitude as the sum of the weighting entries, we have:
\begin{align}
\mathbf{u}^\ast \z_{tZ} \mathbf{u} &= \sum_{z_1,z_2 \in Z} \mathbf{u}(z_1) e^{-td(z_1,z_2)} \mathbf{u}(z_2) \notag \nonumber\\
&= \sum_{x_1,x_2 \in X} \mathbf{w}^t(x_1) e^{-td(x_1,x_2)} \mathbf{w}^t(x_2) \notag \nonumber\\
&= {\mathbf{w}^t_{X}}^\ast \z_{tX} \mathbf{w}^t_X = \Mag{X}{t}, \label{eq:tight_ineq_proof_frac1}
\end{align}
and
\begin{align} 
\sum_{z \in Z} \mathbf{u}(z) &= \sum_{x \in X} \mathbf{w}^t_X(x)= \Mag{X}{t}. \label{eq:tight_ineq_proof_frac2}
\end{align}

Therefore, $\mathbf{u}$ is the weighting on $Z$ and by substituting the equation \eqref{eq:tight_ineq_proof_supZ} with ~\eqref{eq:tight_ineq_proof_frac1}, \eqref{eq:tight_ineq_proof_frac2}, we have:
\begin{align}\label{eq:tight_ineq_proof_frac}
   \frac{\left( \sum_{z \in Z} \mathbf{u}(z) \right)^2}{\mathbf{u}^\ast \z_{tZ} \mathbf{u}} = \frac{\Mag{X}{t}^2}{\Mag{X}{t}} = \Mag{X}{t}. 
\end{align}
Therefore, by considering ~\eqref{eq:tight_ineq_proof_supZ}, and ~\eqref{eq:tight_ineq_proof_frac} we have $\Mag{Z}{t} = \sup_{u \neq 0} \frac{\left( \sum_{z \in Z} u(z) \right)^2}{u^\ast \z_{tZ} u} \geq \Mag{X}{t}$.
Then, based on our assumption i.e., $\Mag{X}{t} = \Mag{Z}{t}$, the supremum is achieved at $\mathbf{u}$. 
By [Proposition 2.4.3]~\cite{leinster2013magnitude} the weighting vector $\mathbf{u}$ must be scalar multiple of the unique weighting $\mathbf{w}^t_Z$ on $Z$, i.e. for some $c \neq 0$ we have $\mathbf{u} = c \mathbf{w}^t_Z$. Then, by definition of $\mathbf{u}$ we have $c\mathbf{w}^t_Z(z) = \mathbf{u}(z) = 0$ for $z \in Z \setminus X$ and $c\mathbf{w}^t_Z(z) = \mathbf{u}(z)$ for $z \in X$, implying $X \underset{\Mag{}{t}}{=} Z$.

While the converse direction is a direct conclusion of Lemma~\ref{lem:equivalency_weightings}, in the following we explain another proof.
For proving the converse direction, assume $X \underset{\Mag{}{t}}{=} Z$. Then for every $x \in Z \setminus X$, we have $\mathbf{w}_Z^t(z) = 0$ where $\mathbf{w}_Z^t$ is the weighting vector of $Z$. Let $\mathbf{u}$ be a vector which $\mathbf{u}(x) = \mathbf{w}_Z^t(x)$ for every $x \in X$.  Now by definition of weighting vector and $\mathbf{u}$ for any $x \in X$ we have $1 = \z_{tZ} \mathbf{w}_Z^t = \z_{tX} \mathbf{u}$ implying that $\mathbf{u}$ is a valid weighting for $X$. This completes the proof as $\mathbf{w}_Z^t(z) = 0$ for every $x \in Z \setminus X$, and
\begin{align*}
    \Mag{X}{t} = \sum_{x \in X} \mathbf{u}(x) = \sum_{x \in X} \mathbf{w}_Z^t(x) = \sum_{z \in Z} \mathbf{w}_Z^t(z) = \Mag{Z}{t}.
\end{align*}
\end{proof}

\begin{lemma}\label{lem:identity_of indiscernibles_distance}
Magnitude distance satisfies the identity of indiscernibles, with respect to the magnitude equivalency: 
\begin{equation}
    d^t_{Mag}(X,Y) = 0 \iff X \underset{\Mag{}{t}}{=} Y.
\end{equation}
\end{lemma}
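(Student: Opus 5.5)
Set $Z = X \cup Y$ and split the distance as
\[
d^t_{Mag}(X,Y) = \bigl(\Mag{Z}{t} - \Mag{X}{t}\bigr) + \bigl(\Mag{Z}{t} - \Mag{Y}{t}\bigr).
\]
By the monotonicity used in the proof of Lemma~\ref{lem:non-negative_distance}, namely [Corollary 2.4.4]~\cite{leinster2013magnitude}, both brackets are nonnegative. Hence $d^t_{Mag}(X,Y)=0$ if and only if $\Mag{Z}{t} = \Mag{X}{t}$ and $\Mag{Z}{t}=\Mag{Y}{t}$ hold simultaneously. Since $X \subseteq Z$ and $Y\subseteq Z$, Lemma~\ref{lem:tight_case} applies to each equality. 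It converts the condition into $X \underset{\Mag{}{t}}{=} Z$ and $Y \underset{\Mag{}{t}}{=} Z$.

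For the forward direction, we have these two equivalences. Corollary~\ref{cor:magnitude_equivalence} gives symmetry and transitivity, so $X \underset{\Mag{}{t}}{=} Y$. Concretely, both supports of nonzero weights coincide with the support of $\mathbf{w}^t_Z$.

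For the converse, assume $X \underset{\Mag{}{t}}{=} Y$. Let $Q$ be the common support of nonzero weights. By Lemma~\ref{lem:equivalency_weightings}, $Q\subseteq X\cap Y$, the weightings $\mathbf{w}^t_X$ and $\mathbf{w}^t_Y$ agree on $Q$, and both vanish off $Q$. I then need $Z \underset{\Mag{}{t}}{=} X$, which by Lemma~\ref{lem:tight_case} amounts to $\Mag{Z}{t}=\Mag{X}{t}$.

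The natural candidate is the vector $\mathbf{u}$ on $Z$ equal to the common weighting on $Q$ and zero elsewhere. I must check that $\z_{tZ}\mathbf{u}=\one$ at every $z\in Z$.
\begin{itemize}
\item For $z \in X$, the row sum reduces to $(\z_{tX}\mathbf{w}^t_X)(z) = 1$, because $\mathbf{u}$ is supported on $Q\subseteq X$ and agrees with $\mathbf{w}^t_X$ there.
\item For $z\in Y$, the same argument works with $\mathbf{w}^t_Y$.
\end{itemize}
Since $Z = X\cup Y$, every row is covered, so $\mathbf{u}$ is a weighting on $Z$. By uniqueness (Theorem~\ref{theo:finite_euclidean}), $\mathbf{u}=\mathbf{w}^t_Z$. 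Therefore $\Mag{Z}{t} = \sum_Q \mathbf{u} = \Mag{X}{t} = \Mag{Y}{t}$, and the distance vanishes.

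The main obstacle is this converse step, specifically covering the rows of $z\in Z\setminus Q$. One might worry about rows indexed by points of $X$ lying outside $Q$. This is resolved because each such point belongs to $X$ or to $Y$, and the weighting equations there already hold in $X$ or $Y$ respectively. The forward direction is essentially immediate from the preceding lemmas.
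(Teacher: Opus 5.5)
Your proof is correct and follows the paper's own route. Monotonicity splits $d^t_{Mag}(X,Y)$ into two nonnegative gaps, Lemma~\ref{lem:tight_case} turns their vanishing into $X \underset{\Mag{}{t}}{=} Z$ and $Y \underset{\Mag{}{t}}{=} Z$, and transitivity gives the forward direction. For the converse, the paper only asserts that Lemma~\ref{lem:equivalency_weightings} yields $X \underset{\Mag{}{t}}{=} Y \underset{\Mag{}{t}}{=} Z$. Your row-by-row check that the common weighting extended by zero satisfies $\z_{tZ}\mathbf{u}=\one$ (each $z\in Z$ lies in $X$ or in $Y$) is exactly the justification that assertion needs.
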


\begin{proof}[Proof of Lemma~\ref{lem:identity_of indiscernibles_distance}]
By definition of the magnitude distance \eqref{eq:def_magdif}, we have $d^t_{Mag}(X,Y) = 0 $ if and only if $2 \Mag{X\cup Y}{t}  = \Mag{X}{t} + \Mag{Y}{t}$.
Let $Z = X \cup Y$. Then,
by [Corollary 2.4.4]~\cite{leinster2013magnitude}, we know that $\Mag{Z}{t} \geq \Mag{X}{t}$ and $\Mag{Z}{t} \geq \Mag{Y}{t}$. Therefore, equality $2 \Mag{X\cup Y}{t}  = \Mag{X}{t} + \Mag{Y}{t}$ holds if and only if both inequalities are tight i.e. $\Mag{Z}{t} = \Mag{X}{t}$ and $\Mag{Z}{t} = \Mag{Y}{t}$. 
By lemma~\ref{lem:tight_case}, statements $\Mag{Z}{t} = \Mag{X}{t}$ and $X \underset{\Mag{}{t}}{=}Z$, and also statements $\Mag{Z}{t} = \Mag{Y}{t}$ and $Y \underset{\Mag{}{t}}{=}Z$ are equivalent. Thus,
\begin{align*}
    d^t_{Mag}(X,Y) = 0 
    \iff X \underset{\Mag{}{t}}{=}Z \text{ and } Y\underset{\Mag{}{t}}{=} Z.
\end{align*}
Finally, Lemma~\ref{lem:equivalency_weightings} implies that whenever $X \underset{\Mag{}{t}}{=}Y$, then we have $X\underset{\Mag{}{t}}{=}Y \underset{\Mag{}{t}}{=}Z$. The converse direction also follows from the transitivity of magnitude equivalence.
\end{proof}

\begin{lemma}\label{lem:triangle_inequality}
Magnitude distance does not satisfy the triangle inequality in general.
Moreover, the triangle inequality for the magnitude distance is equivalent to the submodularity of magnitude.
\end{lemma}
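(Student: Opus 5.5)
The plan is to reduce the triangle inequality to an inequality among four magnitudes, and then match that inequality with submodularity in both directions. Fix $t>0$ and write $M(\cdot)=\Mag{\cdot}{t}$. For finite $X,Y,Z\subset\mathbb{R}^D$, expand $d^t_{Mag}(X,Z)\le d^t_{Mag}(X,Y)+d^t_{Mag}(Y,Z)$ using \eqref{eq:def_magdif}. The terms $-M(X)$ and $-M(Z)$ appear on both sides and cancel. After dividing by $2$, the triangle inequality becomes
\begin{equation*}
M(X\cup Z)+M(Y)\;\le\; M(X\cup Y)+M(Y\cup Z).
\end{equation*}
Submodularity of magnitude means $M(A\cup B)+M(A\cap B)\le M(A)+M(B)$ for all finite $A,B\subset\mathbb{R}^D$.

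\textbf{Triangle inequality implies submodularity.} Given $A,B$, apply the displayed inequality with $X=A$, $Y=A\cap B$ and $Z=B$. Then $X\cup Y=A$ and $Y\cup Z=B$, so the display reads $M(A\cup B)+M(A\cap B)\le M(A)+M(B)$, which is exactly submodularity.

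\textbf{Submodularity implies the triangle inequality.} Apply submodularity to $A=X\cup Y$ and $B=Y\cup Z$. Here $A\cup B=X\cup Y\cup Z$ and $A\cap B=Y\cup(X\cap Z)$, which gives
\begin{equation*}
M(X\cup Y\cup Z)+M(Y\cup(X\cap Z))\le M(X\cup Y)+M(Y\cup Z).
\end{equation*}
Monotonicity of magnitude under inclusion ([Corollary 2.4.4]~\cite{leinster2013magnitude}, already used in Lemma~\ref{lem:non-negative_distance}) gives $M(X\cup Z)\le M(X\cup Y\cup Z)$ and $M(Y)\le M(Y\cup(X\cap Z))$. Chaining these bounds yields the displayed triangle inequality. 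So the triangle inequality holds for all triples exactly when $M$ is submodular on finite subsets of $\mathbb{R}^D$.

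It then remains to show that submodularity, and hence the triangle inequality, fails in general. This is the step I expect to be the main obstacle, because it needs an explicit configuration. Small cases do not help. For singletons $X=\{x\}$, $Y=\{y\}$, $Z=\{z\}$ the two-point formula $M(\{a,b\})=2/(1+e^{-t\,d(a,b)})$ shows the inequality holds; a check near $t\to0$ and $t\to\infty$ confirms this. So the violation must come from larger sets in which some weights become negative, which is possible in $\mathbb{R}^D$ for $D\ge 2$.

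To find the counterexample, I would start from the observation of \citet{andreeva2025approximating} that magnitude is not submodular. Concretely, I would search numerically over small planar configurations, for example $A\cap B$ a cluster of a few points and $A\setminus B$, $B\setminus A$ single points placed symmetrically. I would look for a scale $t$ at which the marginal gain $M(A\cup B)-M(A)$ exceeds $M(B)-M(A\cap B)$. Having found one, I would record exact coordinates and $t$, and verify the strict inequality by computing $\one^\ast\z^{-1}\one$ for the four Gram matrices. Via the first reduction, with $X=A$, $Y=A\cap B$ and $Z=B$, this gives an explicit triple violating the triangle inequality. Embedding the planar example in a coordinate plane extends it to every $D>1$, as stated in Theorem~\ref{theo:metric_axioms}.
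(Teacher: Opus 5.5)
Your reduction of the triangle inequality to $M(X\cup Z)+M(Y)\le M(X\cup Y)+M(Y\cup Z)$ is correct and matches the paper's argument. So are both directions of the equivalence: the substitution $Y=A\cap B$ one way, and submodularity applied to $X\cup Y$, $Y\cup Z$ followed by monotonicity the other way.

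The gap is the failure part. You never exhibit a violating configuration. You defer to a numerical search over planar point sets with no guarantee it succeeds, and your extension to every $D>1$ rests entirely on that search. For the lemma as stated, which only asserts failure in general, citing the non-submodularity result of \citet{andreeva2025approximating} together with your equivalence would already suffice, and the paper says as much. But your proposal treats the explicit configuration as the outstanding step and leaves it open.

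The missing idea is to take $A\cap B=\emptyset$. Then violating submodularity just means finding one point whose addition raises magnitude by more than $1=M(\{0\})-M(\emptyset)$. The paper takes $Y=\emptyset$, $Z=\{0\}$ and $X=\{\pm e_1,\dots,\pm e_D\}$ at scale $t$. By symmetry, $\mathbf{1}$ is an eigenvector of $\zeta_{tX}$ with eigenvalue $c=1+(2D-2)e^{-\sqrt{2}t}+e^{-2t}$, so $M(X)=2D/c$. A Schur-complement computation then gives
\[ M(X\cup\{0\})-M(X)=\frac{\bigl(1-2De^{-t}/c\bigr)^2}{1-2De^{-2t}/c}, \]
which is about $7.18>1$ for $t=5$, $D=500$.

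This mechanism needs many points equidistant from the new point yet pairwise a factor $\sqrt{2}$ farther apart. Those points push the potential $\sum_{x}w(x)e^{-t\,d(x,0)}$ to about $3.6$ and give the origin a large negative weight. The plane holds at most four such points. So the paper's counterexample is genuinely high-dimensional, and nothing in the paper indicates your planar search would find one.
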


\begin{proof}[Proof of Theorem~\ref{lem:triangle_inequality}]
    Magnitude distance satisfies the triangle inequality if for every  three finite sets of samples $X,Y, Z \subset \mathbb{R}^D$, if and only if we have:
    \begin{align*}
        d^t_{Mag}(X,Y)+ d^t_{Mag}(Y,Z) \geq d^t_{Mag}(X,Z).
    \end{align*}
    Substituting the definition of magnitude distance \eqref{eq:def_magdif} in this inequality is equivalent to $\Mag{X\cup Y}{t} +\Mag{Y\cup Z}{t} -  \Mag{Y}{t} \geq  \Mag{X\cup Z}{t}$, which can be written as:
    \begin{align}\label{eq:triangle_ineq}
        \Mag{X\cup Y}{t} +\Mag{Y\cup Z}{t} \geq  \Mag{X\cup Z}{t} +  \Mag{Y}{t}.
    \end{align}
    If we take $Y= X \cap Z$ then \eqref{eq:triangle_ineq} becomes the submodularity inequality, i.e. $\Mag{X}{t} +\Mag{Z}{t} \geq  \Mag{X\cup Z}{t} +  \Mag{X \cap Z}{t}$. Conversely, define $X' = X \cup Y$ and $Z' = Z \cup Y$, then the submodularity equation for $X'$ and $Z'$  will be $\Mag{X'}{t} +\Mag{Z'}{t} \geq  \Mag{X'\cup Z'}{t} +  \Mag{X' \cap Z'}{t}$ which is 
    \begin{align*}
        \Mag{X \cup Y}{t} +\Mag{Z \cup Y}{t} \geq  \Mag{(X \cup Y) \cup (Z \cup Y)}{t} +  \Mag{(X \cup Y) \cap (Z \cup Y)}{t}.
    \end{align*}
    Equivalently, $\Mag{X \cup Y}{t} +\Mag{Z \cup Y}{t} \geq  \Mag{X \cup Y \cup Z}{t} +  \Mag{Y \cup (X \cap Z)}{t}$ which by [Lemma 2.4.12]~\cite{leinster2013magnitude} we have $ \Mag{X \cup Y \cup Z}{t} \geq \Mag{X \cup Z}{t}$ and $\Mag{Y \cup (X \cap Z)}{t} \geq \Mag{Y}{t}$.
    This implies 
    \begin{align*}
       \Mag{X \cup Y}{t} +\Mag{Z \cup Y}{t}  \geq  \Mag{X \cup Y \cup Z}{t} +  \Mag{Y \cup (X \cap Z)}{t}
    \end{align*} which is the \eqref{eq:triangle_ineq}.
    Therefore, the triangle inequality for the magnitude distance is equivalent to the submodularity of magnitude. Submodularity is known not to hold for magnitude in general~\cite{andreeva2025approximating}.
    Now, we show a counter-example that \eqref{eq:triangle_ineq} does not hold in general using [Theorem 2]~\cite{andreeva2025approximating}.
    Let $Y = \emptyset$ and $Z = \{0\}$. Then $\Mag{Y}{t} = 0$, $\Mag{Z}{t} = 1$, and $X\cup Y = X$. Then, $X$ is build on
    $S = \{e_1,\cdots,e_D\}$ which is the standard basis vectors of $\mathbb{R}^D$ as $X =  \{te_1, -te_1,\cdots,te_D, -te_D\}$. For these choices, \eqref{eq:triangle_ineq} is:
    \begin{align*}
        \Mag{X}{t} +1\geq  \Mag{X\cup Z}{t},
    \end{align*}
    which this inequality does not hold with the appropriate choice of $t$ and $D$. For instance, when  $t=5$ and $D=500$, $\Mag{X\cup Z}{5} - \Mag{X}{5} \approx 7.18$.
\end{proof}

\begin{proposition}\label{prop:triangle_inequality_R}
Magnitude distance satisfies the triangle inequality in $\mathbb{R}$.
\end{proposition}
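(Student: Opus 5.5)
The plan is to use the equivalence in Lemma~\ref{lem:triangle_inequality}: the triangle inequality for $d^t_{Mag}$ is equivalent to submodularity of $\Mag{\cdot}{t}$. So it suffices to prove that for all finite $A,B\subset\mathbb{R}$,
\begin{equation*}
\Mag{A}{t}+\Mag{B}{t}\ \ge\ \Mag{A\cup B}{t}+\Mag{A\cap B}{t},
\end{equation*}
with the convention $\Mag{\emptyset}{t}=0$. The key input is the explicit formula for magnitude of finite subsets of the line \cite{leinster2013magnitude}. If $x_1<\dots<x_n$, then
\begin{equation*}
\Mag{\{x_1,\dots,x_n\}}{t}=1+\sum_{i=1}^{n-1} f(x_{i+1}-x_i),\qquad f(g)=\tanh(tg/2).
\end{equation*}
Here $f$ is increasing and concave on $[0,\infty)$, with $f(0)=0$ and $f<1$. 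Extending $f$ by $f(\infty)=1$, concavity is the only real analytic property needed.

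Next I reduce submodularity to a diminishing-returns statement. Let $\Delta_S(p)=\Mag{S\cup\{p\}}{t}-\Mag{S}{t}$. Submodularity on the finite lattice of finite sets is equivalent to $\Delta_A(p)\ge\Delta_B(p)$ whenever $A\subseteq B$ and $p\notin B$. The standard way to get the inequality itself is to add the points of $B\setminus A$ one at a time and telescope. From the formula, the increment is:
\begin{itemize}
\item $\Delta_\emptyset(p)=1$;
\item $\Delta_S(p)=f(u)$ if $p$ lies outside the convex hull of $S$ at distance $u$ from its nearest point;
\item $\Delta_S(p)=h(u,v):=f(u)+f(v)-f(u+v)$ if $p$ has neighbours in $S$ at distances $u$ to the left and $v$ to the right.
\end{itemize}
Using $f(\infty)=1$, the outside case is just $h(u,\infty)$.

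The main step is comparing these increments for $A\subseteq B$. Since $B\supseteq A$, the nearest neighbours of $p$ in $B$ on each side are at least as close as those in $A$: $u_B\le u_A$ and $v_B\le v_A$, where a missing neighbour is distance $\infty$. It then suffices to show that $h$ is nondecreasing in each argument. We have $\partial_u h=f'(u)-f'(u+v)\ge 0$ by concavity, and symmetrically in $v$. The boundary cases follow from this monotonicity together with $h(u,v)\le f(u)$, which holds because $f(u+v)\ge f(v)$.
\begin{itemize}
\item If $A=\emptyset$, then $\Delta_A(p)=1\ge\Delta_B(p)$, because every increment is at most $\max f\le 1$.
\item If $p$ is outside the hull of $A$, then $\Delta_B(p)\le f(u_B)\le f(u_A)=\Delta_A(p)$.
\end{itemize}
I expect this case bookkeeping, especially the boundary (outside-hull) cases, to be the only delicate point. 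The concavity computation itself is routine.

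Finally, summing the diminishing-returns inequalities gives submodularity of $\Mag{\cdot}{t}$ on finite subsets of $\mathbb{R}$. Lemma~\ref{lem:triangle_inequality} then yields $d^t_{Mag}(X,Y)+d^t_{Mag}(Y,Z)\ge d^t_{Mag}(X,Z)$ for every $t>0$.
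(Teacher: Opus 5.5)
Your proof is correct and follows the same reduction as the paper: both use Lemma~\ref{lem:triangle_inequality} to turn the triangle inequality into submodularity of $\Mag{\cdot}{t}$ on finite subsets of $\mathbb{R}$. The only difference is that the paper cites [Theorem~3] of \citet{andreeva2025approximating} for that submodularity, whereas you prove it directly from Leinster's formula $1+\sum_i \tanh(t g_i/2)$ via diminishing returns and concavity of $\tanh$, which makes your argument self-contained; your case analysis, including the empty-set and outside-hull cases, checks out.
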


\begin{proof}[Proof of Proposition~\ref{prop:triangle_inequality_R}]
    By Lemma~\ref{lem:triangle_inequality}, the triangle inequality for the magnitude distance is equivalent to the submodularity of the magnitude. While submodularity is known not to hold for magnitude in general, it does hold in the one-dimensional case $\mathbb{R}$ [Theorem 3]~\cite{andreeva2025approximating}.
\end{proof}

\section{Proofs: Magnitude Distance Over Different $t$ Values}

\begin{prop}[2.2.6, \cite{leinster2013magnitude}]\label{prop:finite_metric}
Let $X$ be a finite metric space. Then:
\begin{itemize}
    \item For $t \gg 0$, the magnitude function of $X$ is increasing with respect to $t$.
    \item For $t \gg 0$, there is a unique, positive, weighting on $tX$.
    \item Converges to the cardinality of $X$ as $t \to \infty$.
\end{itemize}  
\end{prop}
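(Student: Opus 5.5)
The plan is to treat $\zeta_{tX}$ for large $t$ as a small perturbation of the identity and read off all three claims from that expansion. Write $X=\{x_1,\dots,x_n\}$ and let $\delta=\min_{i\neq j} d(x_i,x_j)>0$. We have $\zeta_{tX}=I+E(t)$, where $E(t)$ has zero diagonal and off-diagonal entries $e^{-td(x_i,x_j)}\le e^{-t\delta}$. Hence $\|E(t)\|_\infty\le (n-1)e^{-t\delta}$, which tends to $0$ as $t\to\infty$.

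First, once $(n-1)e^{-t\delta}<1$, the Neumann series $\zeta_{tX}^{-1}=\sum_{k\ge 0}(-E(t))^k$ converges. So $\zeta_{tX}$ is invertible, and the weighting $\mathbf{w}^t_X=\zeta_{tX}^{-1}\one$ exists and is unique; this holds for an arbitrary finite metric space, not only a Euclidean one. Truncating the series gives, entrywise,
\begin{equation*}
\mathbf{w}^t_X(x_i)=1-\sum_{j\neq i}e^{-td(x_i,x_j)}+R_i(t), \qquad |R_i(t)|\le \frac{(n-1)^2e^{-2t\delta}}{1-(n-1)e^{-t\delta}} .
\end{equation*}
Since both correction terms tend to $0$, every weight tends to $1$ and is therefore positive for $t\gg 0$. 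This gives the second bullet. Summing over $i$ gives $\Mag{X}{t}=n-\sum_{i\neq j}e^{-td(x_i,x_j)}+O(e^{-2t\delta})\to n=\abs{X}$, which is the third bullet.

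For monotonicity, I would differentiate the analytic function $t\mapsto \one^\ast\zeta_{tX}^{-1}\one$ using $\frac{d}{dt}\zeta^{-1}=-\zeta^{-1}\zeta'\zeta^{-1}$ and the symmetry of $\zeta_{tX}$. This gives
\begin{equation*}
\frac{d}{dt}\Mag{X}{t}=-(\mathbf{w}^t_X)^\ast\,\zeta_{tX}'\,\mathbf{w}^t_X=\sum_{i\neq j}\mathbf{w}^t_X(x_i)\,\mathbf{w}^t_X(x_j)\,d(x_i,x_j)\,e^{-td(x_i,x_j)},
\end{equation*}
because $(\zeta_{tX}')_{ij}=-d(x_i,x_j)e^{-td(x_i,x_j)}$ and the diagonal of $\zeta_{tX}'$ vanishes. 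In the regime where all weights are positive, every summand is positive when $n\ge 2$. So the magnitude function is strictly increasing for $t\gg0$; when $n=1$ it is constant equal to $1$.

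The main point requiring care is the explicit remainder bound for the Neumann series. It makes \emph{positive for large $t$} quantitative and is reused in the derivative step, since that step only needs positivity of the weights. The rest is direct calculation.
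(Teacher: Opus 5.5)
Your proof is correct. The paper does not prove this proposition itself; it quotes it from Leinster. The nearest argument in the paper is in the proof of Theorem~\ref{theo:magdist_over_t}. There the paper uses your decomposition $\zeta_{tX}=I+M_{tX}$ and the Neumann series to obtain $\Mag{X}{t}=\abs{X}-\sum_{i\neq j}e^{-td(x_i,x_j)}+O(e^{-2td_{\min}(X)})$. Your treatment of uniqueness, positivity of the weights and the limit $\abs{X}$ follows the same route, and you add an explicit remainder bound where the paper only writes $O(\cdot)$.

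The real difference is the monotonicity step. The paper differentiates the asymptotic expansion term by term, gets $\sum_{i\neq j}d(x_i,x_j)e^{-td(x_i,x_j)}+O(e^{-2td_{\min}(X)})$, and lets the leading term dominate. This tacitly assumes that the derivative of the remainder is still $O(e^{-2td_{\min}(X)})$. That is true, but it needs its own norm estimate on $\frac{d}{dt}\sum_{k\ge 2}(-M_{tX})^k$, which the paper does not give.

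Your exact identity $\frac{d}{dt}\Mag{X}{t}=\sum_{i\neq j}\mathbf{w}^t_X(x_i)\mathbf{w}^t_X(x_j)\,d(x_i,x_j)\,e^{-td(x_i,x_j)}$ avoids that estimate entirely. It also proves more: the magnitude function is increasing on every interval where $\zeta_{tX}$ is invertible and the weighting is positive, not only asymptotically, and strictly so when $\abs{X}\ge 2$.

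The paper's route has one advantage. Its leading-order term is additive over pairs of points, so it can be combined linearly across $X$, $Y$ and $X\cup Y$ with cancellations, which is what the paper needs for the signed combination $d^t_{Mag}$. Your formula is quadratic in weights that depend on the set. It therefore gives monotonicity of each magnitude separately, but not directly of a difference of magnitudes.
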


\magdistovert*

\begin{proof}[Proof of Theorem~\ref{theo:magdist_over_t}]
The first two statements follow directly from Proposition[2.2.6]~\cite{leinster2013magnitude}. 
Now we prove the third one as follows.
Let $M_{tX} = \z_{tX} - I $ be a matrix, where $ I $ is the identity matrix. Therefore, $M_{tX}$ has zero diagonal as $\exp(-t \, d(x_i, x_i)) = 1$ for all $x_i \in X$. Also, off-diagonal entries are equal to similarity matrix and for $t \gg 0$, the off-diagonal entries of $M_{tX}$ are exponentially small, i.e., $\exp(-t \, d(x_i, x_j)) = O(\exp(-t d_{\min}(X)))$ where $d_{\min}(X) = \underset{x_i \neq x_j} {\min} d(x_i, x_j)$.

By $t \to \infty$, matrix $\|M_{tX}\|$ converges to zero and for sufficiently large $t$, the the spectral radius of $M_{tX}$ is less than $1$. Then, $\z_{tX}^{-1}$ can be expanded using the converging Neumann series,
\begin{align*}
    \z_{tX}^{-1} = (I + M_{tX})^{-1} = \sum_{k=0}^\infty (-1)^k M_{tX}^k.
\end{align*}
Now recalling the definition of magnitude, using Neumann series, we have
\begin{align*}
\Mag{X}{t} = \mathbf{1}^\top \z_{tX}^{-1} \mathbf{1} &= \sum_{k=0}^\infty (-1)^k \mathbf{1}^\top M_{tX}^k \mathbf{1}\\
&=  \abs{X} -\sum_{x_i \neq x_j, \space  x_i, x_j \in X } e^{-t \, d(x_i, x_j)} + O(e^{-2 t d_{\min}(X)}).
\end{align*}
and the derivative is
\begin{align*}
    \frac{d}{dt} \Mag{X}{t} = \sum_{x_i \neq x_j, \space  x_i, x_j \in X } d(x_i, x_j) e^{-t \, d(x_i, x_j)}
    +  O(e^{-2 t d_{\min}(X)}).
\end{align*}
Similarly, defining the magnitude of $Y$ and $X \cup Y$ using Neumann series, the derivative of the magnitude distance can be written as
\begin{align*}
\frac{d}{dt} d^t_{\text{Mag}}(X, Y) &= 2 \frac{d}{dt} \Mag{Z}{t} - \frac{d}{dt} \Mag{X}{t} - \frac{d}{dt} \Mag{Y}{t}\\
&= 2\sum_{z_i \neq z_j, \space z_i, z_j \in X \cup Y} d(z_i, z_j) e^{-t \, d(z_i, z_j)} 
- \sum_{x_i \neq x_j, \space  x_i, x_j \in X } d(x_i, x_j) e^{-t \, d(x_i, x_j)} \\
&- \sum_{y_i \neq y_j, \space y_i, y_j \in Y} d(y_i, y_j) e^{-t \, d(y_i, y_j)} + O(e^{-2 t d_{\min}(X\cup Y)}).
\end{align*}
This can be simplified to
\begin{align*}
\frac{d}{dt} d^t_{\text{Mag}}(X, Y) &= 2\sum_{x_i \neq y_j, \space x_i \in X, \space y_j \in Y} d(x_i, y_j) e^{-t \, d(x_i, y_j)}+ \sum_{x_i \neq x_j, \space  x_i, x_j \in X } d(x_i, x_j) e^{-t \, d(x_i, x_j)} \\
&+ \sum_{y_i \neq y_j, \space y_i, y_j \in Y} d(y_i, y_j) e^{-t \, d(y_i, y_j)} + O(e^{-2 t d_{\min}(X\cup Y)}),
\end{align*}
where each term is nonnegative. This indicates, for sufficiently large $t$, magnitude distance is positive, up to exponentially small error.  
\end{proof}

\section{Plots: Performance in High dimensions}

\begin{figure}[h!]
\centering
\resizebox{0.5\linewidth}{!}{\input{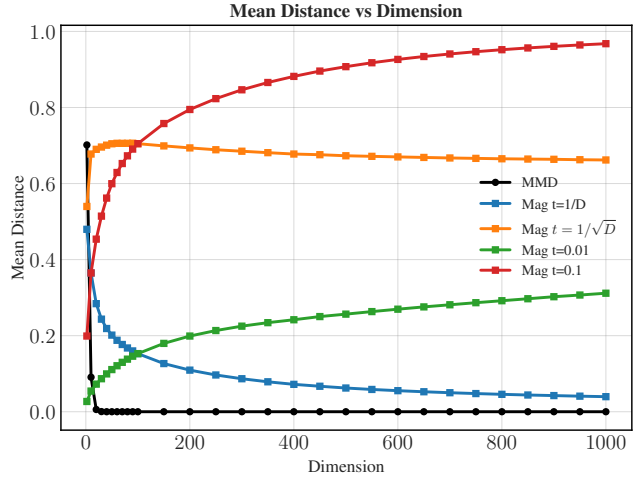}}
\caption{
    From two Gaussian distributions with identical covariance and a mean shift of $2 \sqrt{D}$, we compute the empirical MMD and magnitude distance between $500$ samples from each distribution over $100$ independent trials. The plot shows a comparison between the mean of empirical MMD distance with Gaussian kernel bandwidth $\sigma=1$ and normalized magnitude distance for fixed kernel scales $t \in \{0.01, 0.1\}$ and adaptive scales $t = 1/D$ and $t = 1/\sqrt{D}$. MMD with fixed bandwidth rapidly collapses toward zero as dimension increases, magnitude distance with constant $t$ grows with dimension, and magnitude scaling $(t=1/D)$ leads to gradual collapse. In contrast, scaling $t = 1/\sqrt{D}$ shows a stable behavior across dimensions. 
}
\label{fig:magdist_vs_mmd_fix_distance_each_coordinate}
\end{figure}

\section{Proofs: Outlier Robustness}\label{sec:appendix_outlier}
\globalboundedness*
\begin{proof}[Proof of Theorem\ref{theo:global_boundedness}]
The proof follows from [Lemma~2.4.12]~\cite{leinster2013magnitude}, which proves $\Mag{X \cup Y}{t} < |X \cup Y|$, and from [Proposition~2.2.6]~\cite{leinster2013magnitude}, which guarantees $\Mag{X}{t}, \Mag{Y}{t} \geq 0$, under the assumption of nonnegative weighting vectors for $X$, $Y$, and $X \cup Y$. Note that this assumption is satisfied for sufficiently large $t$, i.e., $t \gg 0$, by [Proposition~2.2.6]~\cite{leinster2013magnitude}.
\end{proof}

\begin{figure}[h!]
\centering
\resizebox{0.45\linewidth}{!}{\input{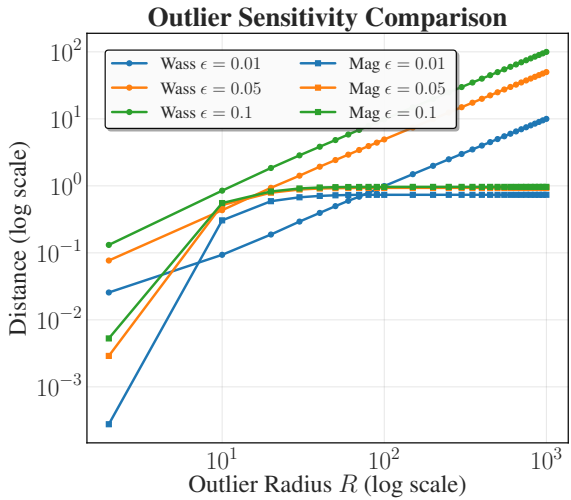}}
\caption{Outlier robustness under Huber contamination.
We compare empirical Wasserstein distance and magnitude distance with $t=0.001$ between $500$ samples drawn from a contaminated distribution $Q = (1-\epsilon)P + \epsilon R$ and $500$ from the clean distribution $P$.
The plot shows three contamination levels, $\epsilon \in \{ 0.01, 0.05, 0.1 \}$.
As the outlier radius increases, the Wasserstein distance grows, while normalized magnitude distance converges to a to a finite limit bounded by $1$.
}
\label{fig:outlier_robustness_plot_normalized}
\end{figure}

\section{Experiments: MagGN on Complex Datasets}\label{sec:complex_datasets}
Distance-based generative models suffer from drawbacks such as mode collapse due to their rich semantic variability. In this section, we examine how training adjustments allow magnitude distance-based objectives to provide useful learning signals beyond simple benchmarks.

\subsection{CIFAR-10}
As a moderately complex natural image benchmark, we first evaluate MagGN on a subset of the CIFAR-10 dataset consisting of cat images. 
MagGn, trained directly in raw-pixel space, is able to capture low-frequency texture and color distributions. However, we are unable to tune $t$ sufficiently to capture object-level structures or generate sharp images without mode collapse. 
While generating sharp images with MagGN is challenging due to the limitations of purely metric-based losses, we leverage MagGN's training efficiency and employ a two-phase transfer learning strategy to initialize WGAN-GP with pretrained MagGN weights.

In Phase 1 (epochs 1-100), we initialize the generator with weights from a pretrained MagGN model and freeze these weights, training only the WGAN-GP critic to establish a stable Wasserstein distance estimate without disrupting the generator's learned features. In Phase 2 (epochs 101-600), we unfreeze the generator and train both networks in the WGAN-GP setting to fine-tune the pretrained weights. 
As reported in Table~\ref{tab:time_cifar} and Figure~\ref{fig:cifar_performance}, the pretrained model not only achieves visually comparable image quality and a higher Inception Score (IS), while also reduces the total training time compared to training WGAN-GP from random initialization.
In both Figure~\ref{fig:cifar_performance} and Table~\ref{tab:time_cifar}, pretraining stage for the generator has MagGN (normalized) with $t=\{0.5,\,1.5,\,3.0,\,5.0\}$ evaluated at $t$-steps $\{1,\,201,\,501,\,701\}$ trained for $1000$ epochs.

\begin{table}[h]
\centering
\caption{\textbf{Training time and Inception Score comparison on CIFAR-10 (Cats)}}
\label{tab:time_cifar}
\begin{tabular}{l c c}
\toprule
\textbf{Method} & \textbf{Training Time (s)} & \textbf{IS} \\
\midrule
WGAN-GP (1600 epochs) 
& \textbf{9098} 
& $2.886 \pm 0.149$ \\

MagGN pretraining (1000 epochs) 
& 2596 
& -- \\

WGAN-GP fine-tuning (600 epochs) 
& 2414 
& -- \\

\midrule
\textbf{Total (MagGN + WGAN-GP)} 
& \textbf{5010} 
& $3.064 \pm 0.090$ \\
\bottomrule
\end{tabular}
\end{table}

\begin{figure*}[t]
  \centering
  \begin{subfigure}[t]{0.24\textwidth}
    \includegraphics[width=\linewidth]{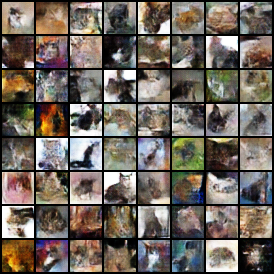}
    \caption{WGAN-GP trained for $1600$ epochs}
  \end{subfigure}
  \begin{subfigure}[t]{0.24\textwidth}
    \includegraphics[width=\linewidth]{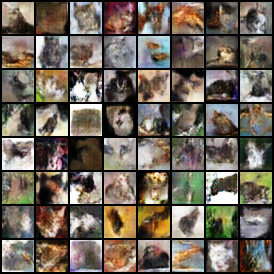}
    \caption{WGAN-GP pretrained with MagGN weightings, trained for $600$ epochs}
  \end{subfigure}
  \caption{
CIFAR-10 samples generated using WGAN-GP.
(a) WGAN-GP trained from random initialization for $1600$ generator epochs.
(b) WGAN-GP initialized with a pretrained MagGN generator and fine-tuned for $600$ generator epochs.
Pretraining the generator using magnitude distance leads to improved visual quality while substantially reducing total training time.
}
  \label{fig:cifar_performance}
\end{figure*}

\subsection{Celeb-A}
To enhance performance, we incorporate MagGN into feature spaces to capture effective attributes employing a pre-trained VGG-16 network~\cite{} truncated at the third ReLU layer (ReLU3\_3) to extract facial features. 
However, training directly in these high-dimensional feature spaces increases the dimensionality and the computational cost of magnitude distance calculations. Also, it has a high risk of overfitting to spatial structures in the features, such as low-level artifacts or dataset-specific noise. To address these issues, we apply global average pooling with an $8 \times 8$ kernel to the extracted features, which creates a suitable balance that preserves spatial structure while removing some noise, intuitively similar to how dropout improves generalization.
Finally, for further refinement and fix poor spatial alignment or unnatural pixel configurations, we introduce a hybrid loss that blends feature- and pixel-level magnitudes, and at epoch $e$, the loss is
\begin{equation*}
 l_e= \sum_{i: e_i\leq e} \hat{d}_{Mag}^{t_i}(\phi_{8 \times 8}(\mathcal{D}_{r}), \phi_{8 \times 8}(\mathcal{D}_{g}))+ \gamma  \hat{d}_{Mag}^{t_i}(\mathcal{D}_{r}, \mathcal{D}_{g})
\end{equation*}
where $\phi_{8 \times 8}$ denotes the composition of the pre-trained VGG-$16$ feature extractor (truncated at ReLU3\_3) followed by global average pooling with an $8 \times 8$ kernel.
Table~\ref{tab:fid_celeba} reports that using the hybrid loss significantly improves Frechet Inception Distance (FID), reducing it from $32.34$ (feature-only training) to $24.32$ when $\gamma = 0.2$. Both models reported in the table are MagGN (normalized) trained for $400$ epochs, with $t = \{0.0003,\,0.009,\,0.06,\,0.3\}$ evaluated at $t$-steps $\{1,\,201,\,501,\,701\}$. One model was trained using a hybrid loss with $\gamma = 0.2$.

\begin{table}[t]
\centering
\caption{\textbf{FID scores on CelebA using MagGN}}
\label{tab:fid_celeba}
\begin{tabular}{l c c c}
\toprule
\textbf{Method} & \textbf{$t$ values} & \textbf{Hybrid coeff. $\gamma$} & \textbf{FID} \\
\midrule
MagGN (feature-only) 
& $\{0.0003,\,0.009,\,0.06,\,0.3\}$ 
& $0$ 
& $32.34$ \\

MagGN (hybrid) 
& $\{0.0003,\,0.009,\,0.06,\,0.3\}$ 
& $0.2$ 
& $24.32$ \\
\bottomrule
\end{tabular}
\end{table}